\colorlet{DarkGreen}{green!50!Black}
\newcommand{\sign}{\mathrm{sign}}
\newcommand{\R}{{\mathbb R}}
\newtheorem{theorem}{Theorem}
\newtheorem{proposition}[theorem]{Proposition}
\begin{document}



\title{\bf Sharpness-Aware Minimization \\
          and the Edge of Stability \\}

\author{Philip M. Long  and Peter L. Bartlett\thanks{Also affiliated with University of California, Berkeley.} \\
       Google \\
       1600 Amphitheatre Parkway \\
       Mountain View, CA 94040 \\
       $\{$plong,peterbartlett$\}$@google.com \\
       }
       

\date{}

\maketitle

\begin{abstract}
Recent experiments have shown that, often, when training a neural network with gradient
descent (GD) with a step size $\eta$, the operator norm of the Hessian of the loss
grows until it approximately reaches $2/\eta$, after which it fluctuates
around this value.  

The quantity $2/\eta$ has been called 
the ``edge of stability'' based on consideration of 
a
local quadratic approximation of the loss.  We perform a similar calculation to arrive at an ``edge of stability'' for Sharpness-Aware Minimization (SAM), a variant of GD which has been shown to
improve its generalization.  Unlike the case for GD, the resulting SAM-edge
depends on the norm of the gradient.  Using three deep learning training tasks,
we see empirically that SAM operates on the edge of stability identified by
this analysis.
\end{abstract}


\section{Introduction}

{\em Sharpness-aware Minimization} (SAM) 
\citep{foret2020sharpness} is a new gradient-based neural network
training algorithm that advanced the
state-of-the-art test accuracy on a number of prominent 
benchmark datasets.  As its name suggests, 
it explicitly seeks
to find a solution that not only fits the training data, but that
avoids ``sharp'' minima, for which nearby parameter vectors 
perform poorly.  
SAM is an incremental algorithm that updates its parameters using a gradient
computed at a neighbor of the current solution.  The neighbor is
the point in parameter space found by taking a step of length $\rho$ ``uphill'' in the gradient direction.  The practical success of SAM has motivated
theoretical research \citep{bartlett2022dynamics,wen2023how,andriushchenko2023sharpness},
including results highlighting senses in which SAM's update may be
viewed, under certain conditions, as including a component that
performs gradient descent on the operator norm of the Hessian
\citep{bartlett2022dynamics,wen2023how}.

Meanwhile, 
\citet{cohen2021gradient},
building on the work of \citet{jastrzebski2019break} and others,
exposed a striking phenomenon regarding neural network training
with the original gradient descent (GD) method: for many initialization schemes
and learning rates $\eta$, the operator norm of the Hessian
eventually settles in the neighborhood of $2/\eta$.  This has
been termed the ``edge of stability'', in part because a
convex quadratic trained by gradient descent with a learning rate
$\eta$ will only converge if the operator norm of its Hessian
(which is the same everywhere) is less than $2/\eta$.  
This phenomenon also inspired 
substantial theoretical research
\citep[see][]{arora2022understanding,DNL22,ma2022beyond,zhu2022understanding,ahn2022understanding,chen2022gradient}.
One result identified conditions under which,
when training approaches the edge of stability, the dynamics
includes a self-stabilization mechanism that tends to drive
the operator norm of the Hessian back down \citep{DNL22}.

In this paper, we investigate whether SAM operates
at the edge of stability.  First, we perform a derivation, analogous
to the one that identifies $2/\eta$ as the edge of stability for
GD, that yields a formula for the operator norm of the Hessian
that may be viewed as the edge of stability for SAM.  As expected,
SAM's edge of stability depends on the radius $\rho$ of its neighborhood.
It also depends
on the norm of the gradient of
the training error at the current solution, unlike the case of GD.
As the norm of the gradient gets smaller, the edge
of stability for SAM also gets smaller.

Next, we evaluate experimentally whether SAM operates at the edge of
stability identified by our analysis.  Our first experiments are with
fully connected networks on MNIST.  Here, it is feasible to experiment
with a version of SAM that uses a batch gradient, albeit computed at
the neighbor uphill of the current iterate at a distance $\rho$.  
For many combinations of the step size $\eta$ and the radius $\rho$,
the operator norm of the Hessian at SAM's iterates closely matches
the value arising from our analysis.  Next, we experiment with
a convolutional neural network training on 1000 examples from CIFAR10.
Here again, we see SAM operating on the edge of stability.
Finally, we experiment with a standard
Transformer architecture training a language model on
\verb|tiny_shakespeare| using the more practical version of
SAM that uses stochastic gradients.  Here, we also see substantial
agreement with our theoretical analysis.

In our experiments with SAM, its edge of stability is often
{\em much} smaller than $2/\eta$, even early in training.  
Rather than first driving the training error to a very small value,
and then drifting along a manifold of near-optimal solutions to
wider minima, SAM's process drives solutions toward smooth regions
of parameter space early in training, while the loss is still large.

The derivation of SAM's edge of stability is in Section~\ref{s:derivation}.
The experiments are described in detail in Section~\ref{s:methods}.  
The results are in Section~\ref{s:results}.  
Section~\ref{s:related}
includes further description of related work.
We conclude in Section~\ref{s:conclusion}.


\section{Derivation}
\label{s:derivation}

The {\em Sharpness-Aware Minimization} algorithm 
is defined by the update
   \begin{equation}\label{e:sam}
      w_{t+1} = w_t-\eta\nabla \ell\left(w_t + \rho\frac{\nabla
        \ell(w_t)}{\|\nabla \ell(w_t)\|}\right).
    \end{equation}
This is like gradient descent, except using a
gradient evaluated at
$w_t + \rho\frac{\nabla
        \ell(w_t)}{\|\nabla \ell(w_t)\|}$
instead of $w_t$.

In this section, we calculate an ``edge of stability'' for
SAM analogous to the $2/\eta$ value for GD.

Before analyzing SAM, however, let us review the standard analysis that
identifies the edge of stability for GD, assuming for simplicity that the
quadratic approximation around an iterate is exact.
\begin{proposition}
\label{p:gd_edge}
For $w_t \in \R^d$, $\eta > 0$, if
\begin{itemize}
    \item $g = \nabla \ell(w_t) \neq 0$, $H = \nabla^2 \ell(w_t)$,
         $w_{t+1} = w_t - \eta g$, and
    \item for all $w \in \R^d$, $\ell(w) = \ell(w_t) + g^T (w - w_t) + \frac{(w - w_t)^{\top} H (w - w_t)}{2}$,
\end{itemize}
then
\begin{itemize}
\item if $|| H ||_{op} < \frac{2}{\eta}$, then $\ell(w_{t+1}) < \ell(w_t)$, and
\item this condition on $|| H ||_{op}$ is the weakest possible of its type: if
   \begin{itemize}
       \item $g$ is aligned with a principal eigenvector of $H$ whose eigenvalue is non-negative, then 
       \item $\sign(\ell(w_{t+1}) - \ell(w_t))
                   = \sign\left(|| H ||_{op} - \frac{2}{\eta}\right)$.
    \end{itemize}
\end{itemize}
\end{proposition}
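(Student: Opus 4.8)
The plan is to reduce both bullets to a single exact scalar identity for the change in loss and then read them off. First I would substitute the update $w_{t+1} - w_t = -\eta g$ into the quadratic model. Because the model is assumed \emph{exact}, this yields, with no approximation error,
\[
\ell(w_{t+1}) - \ell(w_t) = -\eta \|g\|^2 + \frac{\eta^2}{2}\, g^\top H g = \frac{\eta}{2}\left(\eta\, g^\top H g - 2\|g\|^2\right).
\]
Since $\eta > 0$, the sign of the loss change is governed entirely by the bracketed term, and since $g \neq 0$ we may divide by $\|g\|^2 > 0$ and work with the Rayleigh quotient $g^\top H g / \|g\|^2$ of the symmetric matrix $H$.

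For the first bullet I would bound this Rayleigh quotient from above. As $H$ is a (symmetric) Hessian, $g^\top H g \le \lambda_{\max}(H)\,\|g\|^2 \le \|H\|_{op}\,\|g\|^2$, the second inequality holding because the largest eigenvalue is at most the largest eigenvalue in magnitude. Substituting into the identity gives $\ell(w_{t+1}) - \ell(w_t) \le \frac{\eta}{2}\|g\|^2\big(\eta\|H\|_{op} - 2\big)$, which is strictly negative whenever $\eta\|H\|_{op} < 2$, i.e. whenever $\|H\|_{op} < 2/\eta$. This establishes sufficiency.

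For the second bullet (tightness), the point is to exhibit a case in which both inequalities above become equalities, so that the bound is met. Under the hypothesis $g$ is aligned with a principal eigenvector, so $Hg = \lambda g$ with $|\lambda| = \|H\|_{op}$; the extra assumption that this eigenvalue is non-negative forces $\lambda = \|H\|_{op}$. Then $g^\top H g = \|H\|_{op}\|g\|^2$ exactly, and the identity collapses to $\ell(w_{t+1}) - \ell(w_t) = \frac{\eta}{2}\|g\|^2\big(\eta\|H\|_{op} - 2\big)$. Dividing by the positive factor $\frac{\eta}{2}\|g\|^2$ and then by $\eta > 0$ gives $\sign(\ell(w_{t+1}) - \ell(w_t)) = \sign(\|H\|_{op} - 2/\eta)$, as claimed.

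The computation is routine; the one place that needs care — and where the precise wording of the hypotheses earns its keep — is the non-negativity condition in the second bullet. Without it, a principal eigenvector could correspond to a negative eigenvalue $\lambda = -\|H\|_{op}$, in which case $g^\top H g = -\|H\|_{op}\|g\|^2$ and the Rayleigh upper bound would be far from tight, so the sign characterization would fail. Thus the main (mild) obstacle is simply to track the distinction between $\lambda_{\max}(H)$ and $\|H\|_{op}$ and to confirm that the stated hypotheses make the Rayleigh bound attained with the correct sign.
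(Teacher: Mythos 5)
Your proposal is correct and follows essentially the same route as the paper: the exact substitution $\ell(w_{t+1})-\ell(w_t) = -\eta\|g\|^2 + \tfrac{\eta^2}{2}g^{\top}Hg$, the bound $g^{\top}Hg \le \|H\|_{op}\|g\|^2$ for sufficiency, and the collapse to equality via $Hg = \|H\|_{op}\,g$ under the alignment-with-nonnegative-eigenvalue hypothesis for tightness. Your explicit remark on why the non-negativity assumption is needed (ruling out $\lambda = -\|H\|_{op}$) is a point the paper leaves implicit, but the argument is otherwise the same.
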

\begin{proof}
Substituting $w_{t+1} - w_t$ into the formula for $\ell$, we have
\begin{align*}
\ell(w_{t+1}) 
  & =  \ell(w_t) - \eta g^{\top} g 
            + \frac{\eta^2 g^{\top} H g}{2} \\
  & \leq \ell(w_t) - \eta g^{\top} g + \frac{\eta^2 g^{\top} || H ||_{op} g}{2} \\
  & = \ell(w_t) - \eta \left(1 -  \frac{\eta || H ||_{op} }{2} \right)|| g ||^2.
\end{align*}
If $|| H ||_{op} < \frac{2}{\eta}$, since $g \neq 0$, this implies $\ell(w_{t+1}) < \ell(w_t)$.

When $g$ is aligned with a principal eigenvector of $H$
whose eigenvalue is non-negative,
we have
$H g = || H ||_{op} g$, which implies, as above, that
\[
\ell(w_{t+1}) = \ell(w_t) - \eta \left(1 -  \frac{\eta || H ||_{op} }{2} \right)|| g ||^2,
\]
which, again since $g \neq 0$, implies 
$\sign(\ell(w_{t+1}) - \ell(w_t)) 
                   = \sign\left(|| H ||_{op} - \frac{2}{\eta}\right)$.
\end{proof}
We can think of Proposition~\ref{p:gd_edge}
as formalizing the statement that $2/\eta$
is the edge of stability for GD: if $|| H ||_{op} < 2/\eta$,
GD is guaranteed to make progress, and no
larger bound suffices.

Even in the convex quadratic case, the dynamics of SAM are much more complex
than GD \citep[see][]{bartlett2022dynamics}.  However, if we bound $|| H ||_{op}$ in terms of $|| g ||$ as well
as $\eta$ and $\rho$, an analogous result
holds.
\begin{proposition}
\label{p:sam_edge}
For $w_t \in \R^d$, $\eta > 0$, $\rho > 0$, if
\begin{itemize}
    \item $g = \nabla \ell(w_t) \neq 0$, $H = \nabla^2 \ell(w_t) \succeq 0$,
         $w_{t+1} = w_t-\eta\nabla \ell\left(w_t + \rho\frac{\nabla
        \ell(w_t)}{\|\nabla \ell(w_t)\|}\right)$, and
    \item for all $w \in \R^d$, $\ell(w) = \ell(w_t) + g^T (w - w_t) + \frac{(w - w_t)^{\top} H (w - w_t)}{2}$,
\end{itemize}
then
\begin{itemize}
\item if $|| H ||_{op} < \frac{  || g ||}{2 \rho} 
  \left(\sqrt{1 + \frac{8 \rho }{ \eta || g ||}}-1\right)$, then $\ell(w_{t+1}) < \ell(w_t)$, and
\item this condition on $|| H ||_{op}$ is the weakest possible of its type:
if
   \begin{itemize}
       \item $g$ is aligned with a principal eigenvector of $H$, then 
       \item $\sign(\ell(w_{t+1}) - \ell(w_t))
                   = \sign\left(|| H ||_{op} - \frac{  || g ||}{2 \rho} 
  \left(\sqrt{1 + \frac{8 \rho }{ \eta || g ||}}-1\right)\right)$.
    \end{itemize}
\end{itemize}
\end{proposition}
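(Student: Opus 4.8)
The plan is to exploit the exact quadratic structure to reduce the whole problem to a one-dimensional, per-eigenmode calculation. Since $\ell$ is exactly quadratic, its gradient is affine, $\nabla\ell(w) = g + H(w-w_t)$, so evaluating at the perturbed point $w_t + \rho\, g/\|g\|$ gives the ascent-corrected gradient $\tilde g \defeq g + \frac{\rho}{\|g\|}Hg$, and the SAM update is simply $w_{t+1}-w_t = -\eta\,\tilde g$. Substituting this displacement into the quadratic form (exactly as in the proof of Proposition~\ref{p:gd_edge}, but with $\tilde g$ in place of $g$) yields
\[
\ell(w_{t+1}) - \ell(w_t) = -\eta\, g^\top \tilde g + \tfrac{\eta^2}{2}\, \tilde g^\top H \tilde g .
\]

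The key step is then to diagonalize. I would write $H = \sum_i \lambda_i v_i v_i^\top$ with $\lambda_i \ge 0$ (using $H \succeq 0$) and expand $g = \sum_i c_i v_i$, so that $\|g\|^2 = \sum_i c_i^2$. The clean simplification is that $\tilde g$ stays within each eigenspace: its coefficient along $v_i$ is $c_i\beta_i$, where $\beta_i \defeq 1 + \frac{\rho\lambda_i}{\|g\|} > 0$. Hence $g^\top\tilde g = \sum_i c_i^2 \beta_i$ and $\tilde g^\top H\tilde g = \sum_i \lambda_i c_i^2 \beta_i^2$, and the loss change decouples as
\[
\ell(w_{t+1}) - \ell(w_t) = \tfrac{\eta}{2}\sum_i c_i^2\, \beta_i\, \phi(\lambda_i),
\qquad
\phi(\lambda) \defeq \eta\lambda\Bigl(1 + \tfrac{\rho\lambda}{\|g\|}\Bigr) - 2 .
\]

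From here both conclusions follow by analyzing the single-variable function $\phi$. Since $\phi$ is strictly increasing on $[0,\infty)$, it has a unique nonnegative root $\lambda^\star$, and solving the quadratic $\frac{\eta\rho}{\|g\|}\lambda^2 + \eta\lambda - 2 = 0$ shows $\lambda^\star = \frac{\|g\|}{2\rho}\bigl(\sqrt{1 + \frac{8\rho}{\eta\|g\|}} - 1\bigr)$, which is exactly the threshold in the statement. For the sufficient condition, if $\|H\|_{op} = \max_i \lambda_i < \lambda^\star$, then monotonicity gives $\phi(\lambda_i) \le \phi(\|H\|_{op}) < \phi(\lambda^\star) = 0$ for every $i$; since each $\beta_i > 0$ and $g\neq 0$ forces some $c_i \neq 0$, every nonzero summand is strictly negative, so $\ell(w_{t+1}) < \ell(w_t)$. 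For tightness, taking $g$ aligned with a top eigenvector collapses the sum to the single mode with $\lambda_i = \|H\|_{op}$, whence $\sign(\ell(w_{t+1}) - \ell(w_t)) = \sign \phi(\|H\|_{op}) = \sign(\|H\|_{op} - \lambda^\star)$ by monotonicity of $\phi$.

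I expect the main obstacle to be conceptual rather than computational: recognizing that $\tilde g$ preserves each eigenspace and produces the clean $\beta_i^2$ factorization, which is what makes the loss change separate into independent per-mode contributions. The remaining subtlety is the essential use of $H \succeq 0$: it guarantees $\lambda_i \ge 0$, which places every eigenvalue in the region where $\phi$ is increasing, so that the bound $\lambda_i \le \|H\|_{op}$ transfers to the bound $\phi(\lambda_i) < 0$. Without the positive-semidefinite hypothesis this reduction would break, which is presumably why it is imposed here but not in Proposition~\ref{p:gd_edge}.
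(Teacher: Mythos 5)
Your proof is correct and follows essentially the same route as the paper: the same exact-quadratic substitution, the same eigendecomposition yielding a per-mode sum with factors $\beta_i = 1 + \rho\lambda_i/\|g\|$, and the same quadratic-in-$\lambda$ sign analysis whose positive root is the stated threshold. The only cosmetic difference is that you invoke $H \succeq 0$ directly and argue via monotonicity of $\phi$, whereas the paper proves a slightly stronger proposition permitting eigenvalues as negative as $-\|g\|/\rho$ and obtains this statement as an immediate consequence.
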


Proposition~\ref{p:sam_edge} is an immediate consequence
of the following stronger, but somewhat more technical, proposition.
\begin{proposition}
\label{p:sam_edge_strong}
For $w_t \in \R^d$, $\eta > 0$, $\rho > 0$, if
\begin{itemize}
    \item $g = \nabla \ell(w_t) \neq 0$ and $H = \nabla^2 \ell(w_t)$
    has eigenvalues $\lambda_1,...,\lambda_d$
    and unit-length eigenvectors $v_1,...,v_d$,
    \item
         $w_{t+1} = w_t-\eta\nabla \ell\left(w_t + \rho\frac{\nabla
        \ell(w_t)}{\|\nabla \ell(w_t)\|}\right)$, 
    \item for all $w \in \R^d$, $\ell(w) = \ell(w_t) + g^T (w - w_t) + \frac{(w - w_t)^{\top} H (w - w_t)}{2}$
\end{itemize}
then
\begin{itemize}
\item if, for all $i$, 
\[
-\frac{|| g ||}{\rho}
 \leq
 \lambda_i
 \leq
\frac{  || g ||}{2 \rho} 
  \left(\sqrt{1 + \frac{8 \rho }{ \eta || g ||}}-1\right),
\]
and there is an $i$ such that
\[
g \cdot v_i \neq 0 \mbox{ and }
-\frac{|| g ||}{\rho}
 <
 \lambda_i
 <
\frac{  || g ||}{2 \rho} 
  \left(\sqrt{1 + \frac{8 \rho }{ \eta || g ||}}-1\right),
\]
then $\ell(w_{t+1}) < \ell(w_t)$,
and
\item if
   \begin{itemize}
       \item $g$ is aligned with a principal eigenvector of $H$ whose eigenvalue is non-negative, then 
       \item $\sign(\ell(w_{t+1}) - \ell(w_t))
                   = \sign\left(|| H ||_{op} - \frac{  || g ||}{2 \rho} 
  \left(\sqrt{1 + \frac{8 \rho }{ \eta || g ||}}-1\right)\right)$.
    \end{itemize}
\end{itemize}
\end{proposition}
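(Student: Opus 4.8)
The plan is to compute the SAM update in closed form using the fact that, under the exact quadratic model, the gradient $\nabla\ell$ is affine. First I would evaluate the gradient at the ascent point: since $\nabla\ell(w) = g + H(w - w_t)$, substituting $w = w_t + \rho g/\|g\|$ gives $\nabla\ell\!\left(w_t + \rho g/\|g\|\right) = g + \frac{\rho}{\|g\|}Hg$. Hence the displacement is
\[
\delta \eqdef w_{t+1} - w_t = -\eta\left(g + \frac{\rho}{\|g\|}Hg\right).
\]

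Next I would substitute $\delta$ into the quadratic, writing $\ell(w_{t+1}) - \ell(w_t) = g^\top\delta + \tfrac12\delta^\top H\delta$. The key simplification is to expand everything in the eigenbasis: write $g = \sum_i c_i v_i$ with $c_i = g\cdot v_i$, so that $Hg = \sum_i \lambda_i c_i v_i$ and, setting $a_i = 1 + \rho\lambda_i/\|g\|$, the displacement becomes $\delta = -\eta\sum_i c_i a_i v_i$. Because the $v_i$ are orthonormal, both inner products diagonalize, and after collecting terms I expect to reach
\[
\ell(w_{t+1}) - \ell(w_t) = \eta\sum_i c_i^2\, a_i\left(\frac{\eta\, a_i \lambda_i}{2} - 1\right).
\]

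The heart of the argument is then a per-eigencomponent sign analysis, and it is where the only real obstacle lies, though it is a mild one of recognition and bookkeeping rather than difficulty. For each $i$ I would determine when the factor $a_i\!\left(\frac{\eta a_i\lambda_i}{2} - 1\right)$ is nonpositive. The lower hypothesis $\lambda_i \ge -\|g\|/\rho$ is exactly the condition $a_i \ge 0$, which pins down the sign of the leading factor; given $a_i > 0$, the bracket is nonpositive iff $\eta a_i\lambda_i \le 2$, i.e.\ iff $\frac{\eta\rho}{\|g\|}\lambda_i^2 + \eta\lambda_i - 2 \le 0$. The step to recognize is that this quadratic in $\lambda_i$ has positive root $\lambda_+ = \frac{\|g\|}{2\rho}\left(\sqrt{1 + \frac{8\rho}{\eta\|g\|}} - 1\right)$ and negative root strictly below $-\|g\|/\rho$, so that on the admissible interval $[-\|g\|/\rho, \infty)$ the upper hypothesis is precisely the inequality $\lambda_i \le \lambda_+$. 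Together these make every term nonpositive, while the strict inequalities for some $i$ with $c_i\neq 0$ force that term strictly negative (there $a_i > 0$ and the bracket is strictly negative), giving $\ell(w_{t+1}) < \ell(w_t)$.

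Finally, for the tightness claim, I would specialize to $g$ aligned with a principal eigenvector whose eigenvalue $\lambda = \|H\|_{op}\ge 0$. Then only one coefficient is nonzero, say $c$, so the sum collapses to a single term with $a = 1 + \rho\lambda/\|g\| > 0$; since $\eta$, $c^2$, and $a$ are all positive, the sign of $\ell(w_{t+1}) - \ell(w_t)$ equals the sign of $\frac{\eta\rho}{\|g\|}\lambda^2 + \eta\lambda - 2$, which, because $\lambda\ge 0$ lies above the negative root, equals $\sign(\lambda - \lambda_+) = \sign\!\left(\|H\|_{op} - \frac{\|g\|}{2\rho}\left(\sqrt{1 + \frac{8\rho}{\eta\|g\|}} - 1\right)\right)$. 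This matches the stated threshold and shows the condition on $\|H\|_{op}$ cannot be weakened.
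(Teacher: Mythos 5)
Your proposal is correct and takes essentially the same route as the paper's proof: both compute the affine gradient at the ascent point, expand the exact-quadratic decrease in the eigenbasis of $H$ to get the per-eigencomponent form $\eta\sum_i (v_i\cdot g)^2\, a_i\left(\tfrac{\eta a_i\lambda_i}{2}-1\right)$ with $a_i = 1+\rho\lambda_i/\|g\|$, and then analyze the sign of the quadratic in $\lambda_i$, noting that its negative root lies strictly below $-\|g\|/\rho$ so the hypothesis interval suffices. The tightness argument, collapsing the sum to the single principal-eigenvector term, is likewise identical to the paper's.
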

\begin{proof}
Substituting $w_{t+1} - w_t$ into the formula for $\ell$, in part since
$H$ is symmetric, we have
\begin{align*}
\ell(w_{t+1})
 & = \ell(w_t) - \eta g^{\top} \left( g + \rho H \frac{g}{|| g ||} \right)
            + \frac{\eta^2 \left( g + \rho H \frac{g}{|| g ||} \right)^{\top} H \left( g + \rho H \frac{g}{|| g ||} \right)}{2} \\
 & = \ell(w_t)
     - \eta g^{\top} 
           \left(
             I +  \frac{\rho H}{|| g ||} 
            - \eta 
              \left(
              \frac{\left( I +  \frac{\rho H}{|| g ||} \right)^2 H}{2}
              \right)
            \right)
              g.
\end{align*}
Using the fact that, since $H$ is symmetric, any matrix polynomial of $H$
has the same eigenvectors as $H$,
we have
\begin{align}
\nonumber
\ell(w_{t+1})
 & = \ell(w_t)
    - \eta \sum_{i=1}^n 
       (v_i \cdot g)^2
       \left(
             1 +  \frac{\rho \lambda_i}{|| g ||} 
            - \eta 
              \left(
              \frac{\left( 1 +  \frac{\rho \lambda_i}{|| g ||} \right)^2 \lambda_i}{2}
              \right)
       \right) \\    
\label{e:by_directions}
 & = \ell(w_t)
    - \eta \sum_{i=1}^n 
       (v_i \cdot g)^2
       \left(
             1 +  \frac{\rho \lambda_i}{|| g ||}  \right)
        \left( 1
            - 
              \frac{\eta \left( 1 +  \frac{\rho \lambda_i}{|| g ||}
                 \right) \lambda_i}{2}
              \right).
\end{align}
Recalling that each $\lambda_i \geq -\frac{|| g ||}{\rho}$,
let us focus on the last factor of one term in
the sum of \eqref{e:by_directions}
for which $\lambda_i > -\frac{|| g ||}{\rho}$ and
$(v_i \cdot g)^2 \neq 0$.  We have
\begin{align*}
&  1
            - 
              \frac{\eta \left( 1 +  \frac{\rho \lambda_i}{|| g ||}
                 \right) \lambda_i}{2} \geq 0 \\
& \Leftrightarrow 
  \eta \rho \lambda_i^2
     + \eta \lambda_i || g || - 2 || g || \leq 0.
\end{align*}
The convex quadratic on the LHS has two solutions,
one that is negative, and one that is positive:
\begin{align*}
& \frac{\pm\sqrt{\eta^2 || g ||^2 + 8 \eta \rho || g ||}-\eta || g ||}{2 \eta \rho} \\
& = \frac{  || g ||}{2 \rho} 
  \left(\pm \sqrt{1 + \frac{8 \rho }{ \eta || g ||}}-1\right).
     \\
\end{align*}
Thus, given that $\lambda_i > -\frac{|| g ||}{\rho}$,
the $i$th term of the sum in \eqref{e:by_directions} is positive
iff 
\begin{equation}
    \label{eqn:lambdabound}
-\frac{  || g ||}{2 \rho} 
  \left(\sqrt{1 + \frac{8 \rho }{ \eta || g ||}}+1\right)
  < 
\lambda_i < \frac{  || g ||}{2 \rho} 
  \left(\sqrt{1 + \frac{8 \rho }{ \eta || g ||}}-1\right),
\end{equation}
for which
\[
-\frac{  || g ||}{\rho} 
  < 
\lambda_i < \frac{  || g ||}{2 \rho} 
  \left(\sqrt{1 + \frac{8 \rho }{ \eta || g ||}}-1\right),
\]
suffices.  Thus each  term in the sum of
\eqref{e:by_directions} is non-negative, and
at least one is positive, so $\ell(w_{t+1}) < \ell(w_t)$.

If $g$ is aligned with a principal eigenvector of $H$ whose
eigenvalue is non-negative, assuming
wlog that this principal eigenvector is $v_1$, 
we have $(v_1 \cdot g)^2 > 0$, and
$(v_i \cdot g)^2 = 0$ for all
$i \neq 1$.  In this case, all of the terms in the
sum in \eqref{e:by_directions} are zero except the first, thus
\begin{align*}
\sign(\ell(w_{t+1}) - \ell(w_t))
 & = -\sign\left( 1
            - 
              \frac{\eta \left( 1 +  \frac{\rho \lambda_1}{|| g ||}
                 \right) \lambda_1}{2}
              \right) \\
 & = \sign\left(\lambda_1 - \frac{  || g ||}{2 \rho} 
  \left(\sqrt{1 + \frac{8 \rho }{ \eta || g ||}}-1\right)\right) \\ 
 & = \sign\left(|| H ||_{op} - \frac{  || g ||}{2 \rho} 
  \left(\sqrt{1 + \frac{8 \rho }{ \eta || g ||}}-1\right)\right), \\
\end{align*}
where we have used the equivalent bounds on $\lambda_1$ given by~\eqref{eqn:lambdabound}.
\end{proof}

We refer to the
threshold
$\frac{  || g ||}{2 \rho} 
  \left(\sqrt{1 + \frac{8 \rho }{ \eta || g ||}}-1\right)$
  identified in Proposition~\ref{p:sam_edge}
  as {\em SAM's edge of stability},
  or the SAM-edge for short.

The ratio
$\frac{|| H ||_{op}}{2/\eta}$ between
the edge of stability for SAM, and the edge for GD, is
\[
\frac{|| H ||_{op}}{2/\eta}
 = \frac{\eta \|g\|}{4 \rho }\left(\sqrt{1 + \frac{8\rho}{\eta\|g\|}} - 1\right).
\]
This ratio depends on $\eta$, $\rho$ and $|| g ||$
through $\eta\|g\|/(2\rho)$; let us refer to this intermediate
quantity as $\alpha$.
Figure~\ref{fig:SAMedgeplot} shows the function
\[
\alpha\mapsto \frac{\alpha}{2}\left(\sqrt{1+\frac{4}{\alpha}}-1\right),
\]
that, at SAM's edge of stability, gives $\|H\|_{op}/(2/\eta)$ as a function of $\alpha=\eta\|g\|/(2\rho)$.
Notice that as $\alpha\to\infty$, this function approaches $1$, and
it approaches zero like $\sqrt{\alpha}$.

\begin{figure}[ht]
    \centering
    \hfill
    \includegraphics[width=0.4\textwidth]{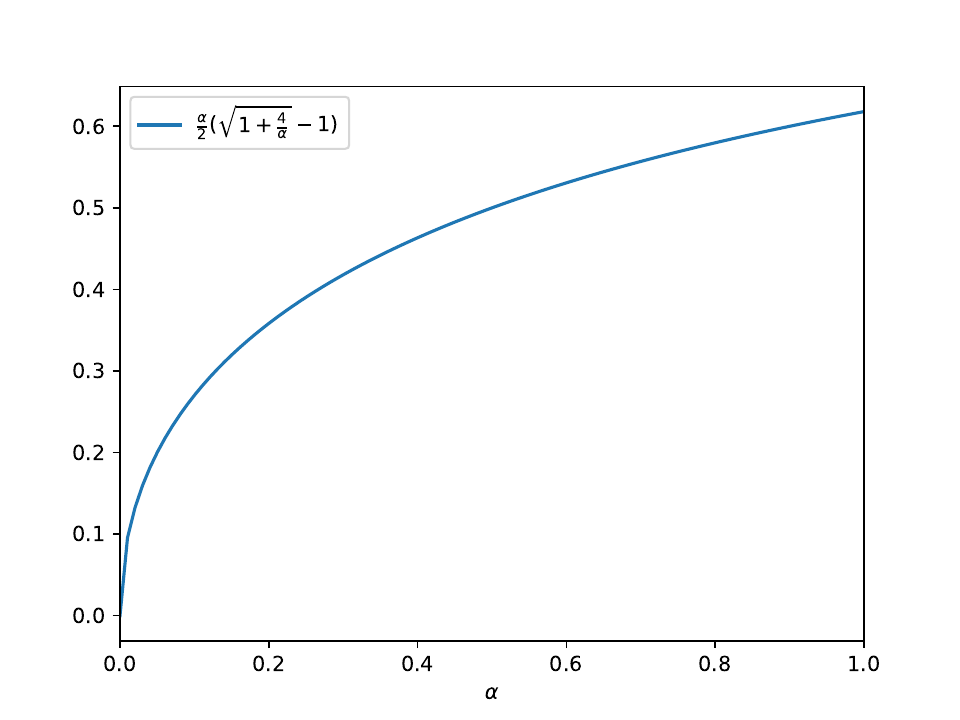}\hfill
    \includegraphics[width=0.4\textwidth]{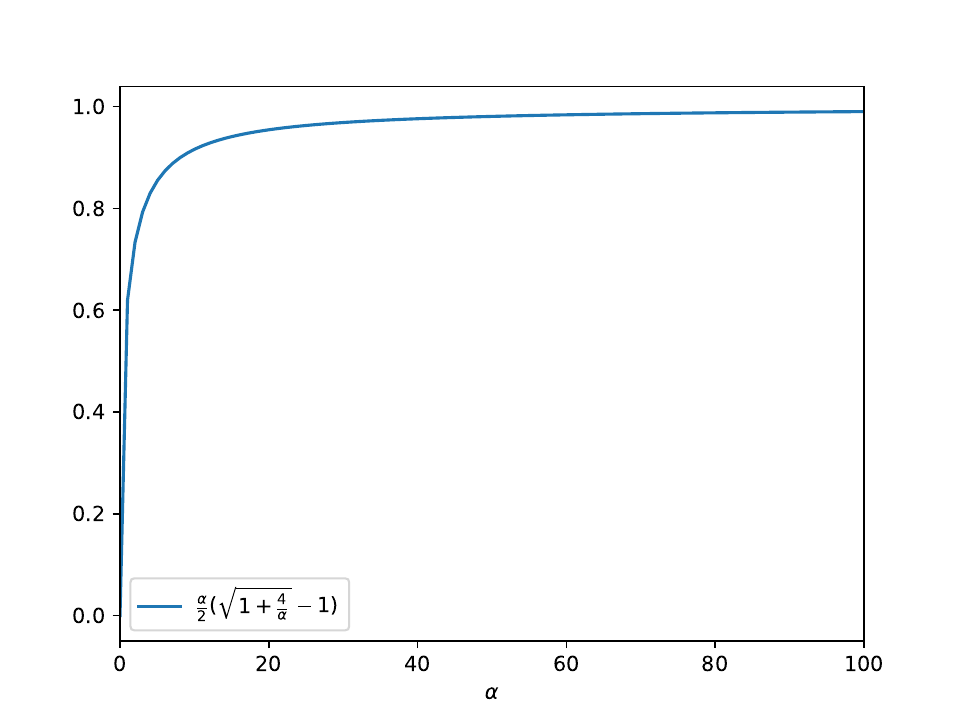}
    \hfill
    \caption{\label{fig:SAMedgeplot}The ratio of 
    SAM's edge of stability to $2/\eta$, the
    edge of stability for GD,
    as a function of $\alpha=\eta\|g\|/(2\rho)$.}
\end{figure}

Proposition~\ref{p:sam_edge_strong} focuses on the case where the largest
eigenvalue is positive.  This is motivated in part by the work of
\citet{ghorbani2019investigation}, who found that, often,
after a small amount of training of a neural network, any negative eigenvalues in the Hessian
are very small. 

\section{Methods}
\label{s:methods}

We performed experiments in three settings.  In each setting, we trained for a variety of
combinations of hyperparameters, and tracked various quantities, including the operator norm of the
Hessian, and the SAM edge.  
\href{https://github.com/google-deepmind/sam_edge}{Code}
is available
\citep{LB24code}.

\subsection{Settings}

First, we trained a depth-four fully connected network,
with 1000 nodes in each hidden layer, on MNIST using the quadratic loss
with batch gradient descent.  We trained for eight hours of wallclock time
on a V100 GPU.  The weights were initialized using Glorot normal initialization.
Prior to
training, the data was centered.

Next, we trained a CNN on CIFAR10 using the
quadratic loss.  To make batch gradients feasible, we only trained on
the first 1000 examples.
The CNN architecture was standard:
there were two blocks comprised of
a convolutional layer with a
ReLU nonlinearity followed by layer normalization,
then $2 \times 2$ max pooling with a $2 \times 2$
stride.  In the first block the convolutional layer
had 16 channels, and in the second block, it had 32 channels.  Training was performed for 12 hours on a V100
GPU.  
Here again, the weights were initialized using Glorot normal initialization, and 
data was centered before training.

For the final setting, we modified the sample implementation of 
Transformers distributed with the Haiku package  \citep[see][]{haiku2023transformer},
training
an autoregressive character language model using the \verb+tiny_shakespeare+ dataset,
using minibatches of size 128.  
The operator norm of the Hessian, and its principal
directions, were also estimated using minibatches.
The architecture
was as in the Haiku distribution, with
6 layers, 8 heads, a key size of 32, ``model size'' of 128,
and sequence length of 64.  
Because it introduces noise, Dropout was removed.
The last 10000 lines
of \verb+tiny_shakespeare+ were set aside as a test set,
and the remaining data was used for training.

\subsection{Hyperparameters}

We trained once for each combination of the following hyperparameters:
\begin{itemize}
\item For MNIST,
   \begin{itemize}
     \item learning rates $\eta$: 0.03, 0.1, 0.3,
     \item SAM offsets $\rho$ (see \eqref{e:sam}): 0.0, 0.1, 0.3, 1.0.
   \end{itemize}
\item For CIFAR10,
   \begin{itemize}
     \item learning rates: 0.0003, 0.001, 0.003, 0.01,
     \item $\rho$ values: 0.0, 0.1, 0.3, 1.0
   \end{itemize}
\item For \verb+tiny_shakespeare+,
   \begin{itemize}
       \item learning rates: 0.01, 0.02, 0.05, 0.1, 0.2, 0.5
       \item $\rho$ values: 0.0, 0.1, 0.3, 1.0.
   \end{itemize}
\end{itemize}

Results were discarded whenever training diverged.

\subsection{Implementation}

We coded our experiments using Jax \citep{jax2018github}, along with
Flax \citep{flax2020github} (for the image classification experiments), and
Haiku \citep{haiku2020github} (for the language model experiments).

\subsection{Unreported preliminary experiments}

During an exploration phase, we conducted a number of preliminary experiments, during
which we identified new statistics to collect, what hyperparameter
combinations to try, etc.  (For example, we wanted to minimize the
fraction of runs 
with learning rates too small to bring about the edge of stability, and
those with learning rates so large that training diverged.)
The results reported in this paper were one series of final runs
for the last combinations of hyperparameters.  

\section{Results}
\label{s:results}

All of the results from
every run that did not diverge may be found in a \href{https://drive.google.com/drive/folders/1mTgRsMO1Ebrnup3vwor5-Ih6zeC3-ljZ?usp=sharing}{supplementary folder}.  (In all of the plots, the training time in seconds is
plotted along the horizontal axis.)
In this section, we go over some of the most noteworthy results.

\subsection{MNIST}

Figure~\ref{f:mnist.rho=0.eigs} contains plots of
the magnitudes of the top three eigenvalues of the Hessian, along with $2/\eta$ and the
SAM-edge, when an MLP was trained on MNIST using gradient descent.
There is a plot for each learning rate $\eta$.
\begin{figure}
    \centering
    \begin{subfigure}{0.3\linewidth}
        \includegraphics[width=\linewidth]{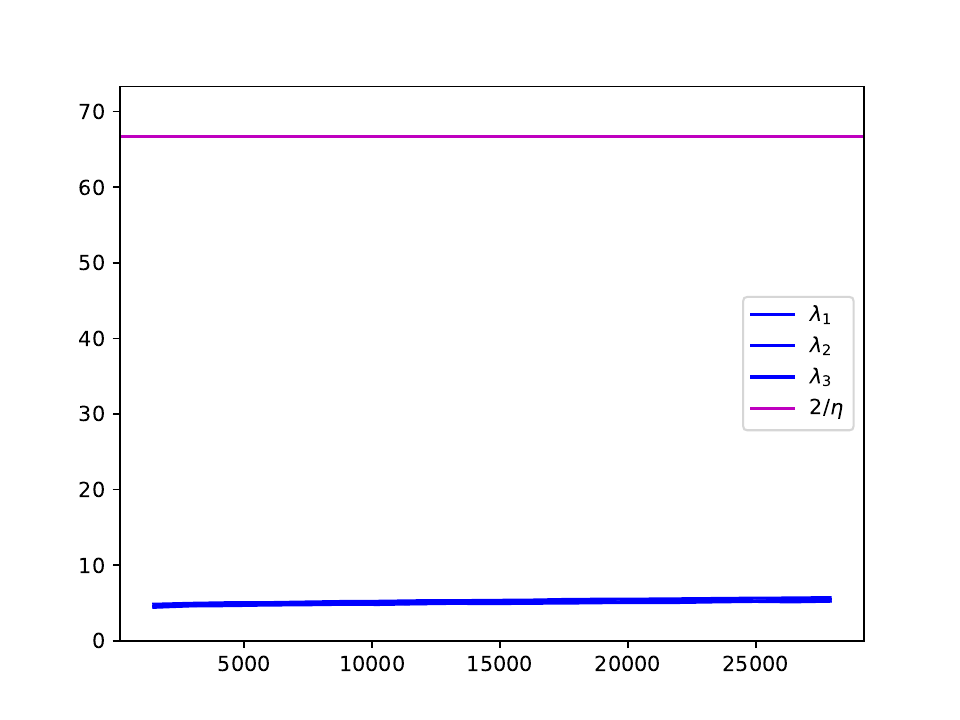}
        \caption{$\eta=0.03$}
    \end{subfigure}
    \begin{subfigure}{0.3\linewidth}
        \includegraphics[width=\linewidth]{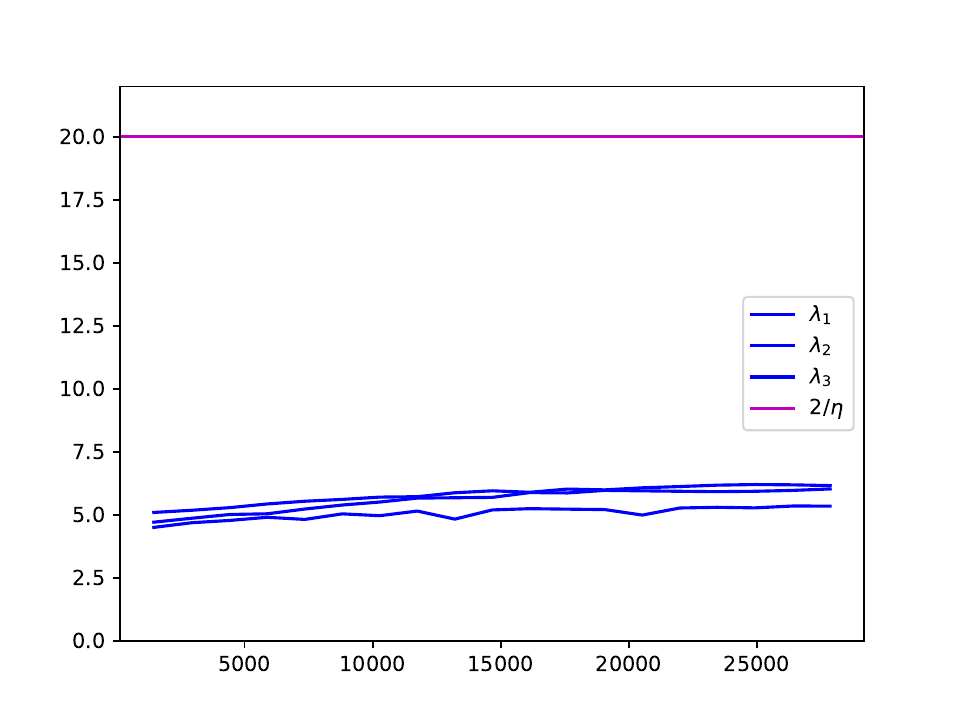}
        \caption{$\eta=0.1$}
    \end{subfigure}
    \begin{subfigure}{0.3\linewidth}
        \includegraphics[width=\linewidth]{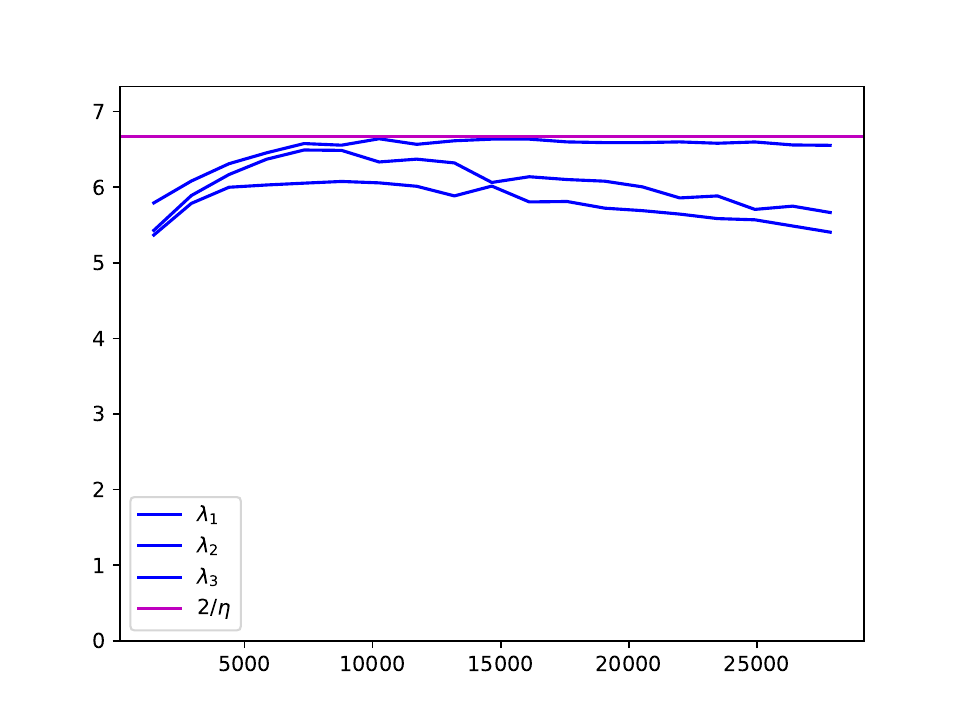}
        \caption{$\eta=0.3$}
    \end{subfigure}
    
    \caption{Magnitudes of the largest eigenvalues of the Hessian when an MLP is trained with GD on MNIST.}
    \label{f:mnist.rho=0.eigs}
\end{figure}
As
reported by \citet{cohen2021gradient}, if the learning rate is large enough, 
the operator norm of the Hessian stabilizes near $2/\eta$.
We can think of GD as a special case of SAM with $\rho = 0$;
the SAM-edge is of course $2/\eta$ in that case.

Figure~\ref{f:mnist.rho=0.1.eigs} contains the analogous plots
when $\rho = 0.1$.  
\begin{figure}
    \centering
    \begin{subfigure}{0.3\linewidth}
        \includegraphics[width=\linewidth]{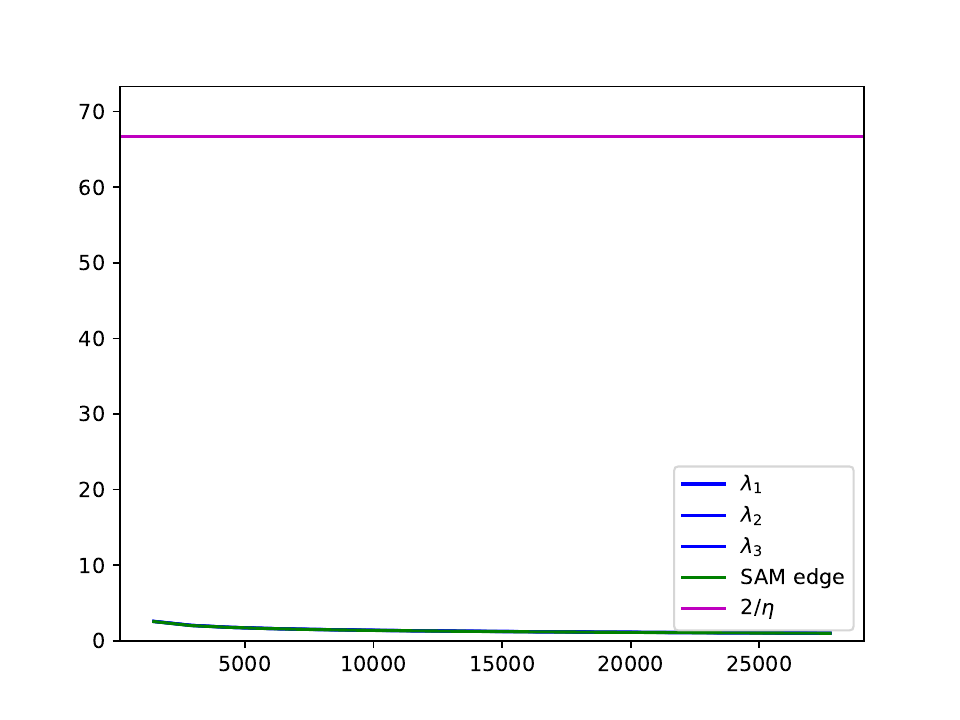}
        \caption{$\eta=0.03$}
    \end{subfigure}
    \begin{subfigure}{0.3\linewidth}
        \includegraphics[width=\linewidth]{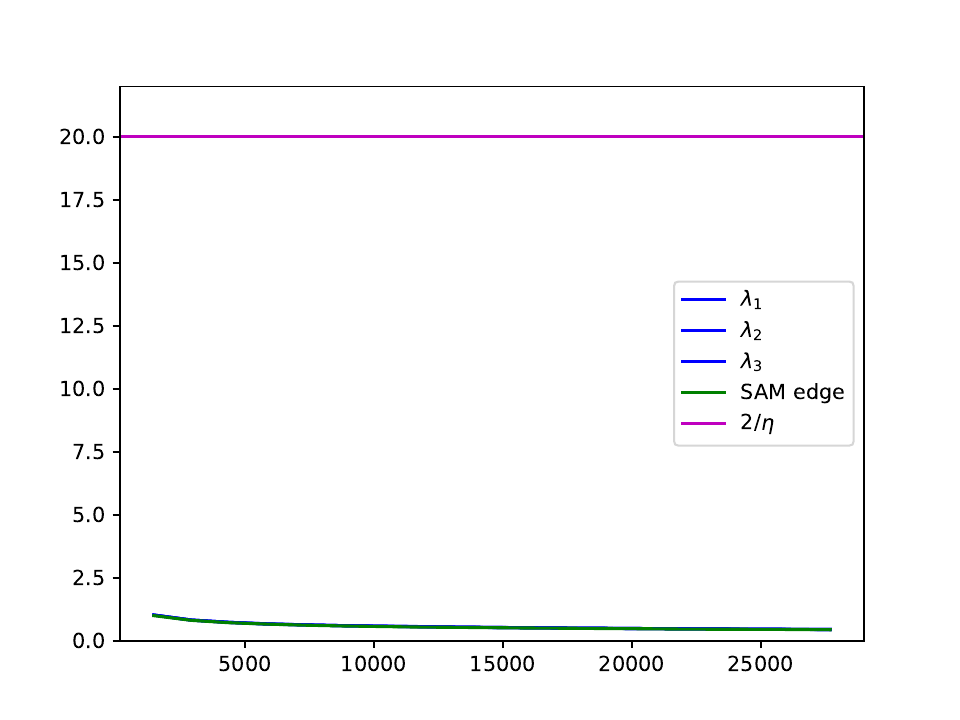}
        \caption{$\eta=0.1$}
    \end{subfigure}
    \begin{subfigure}{0.3\linewidth}
        \includegraphics[width=\linewidth]{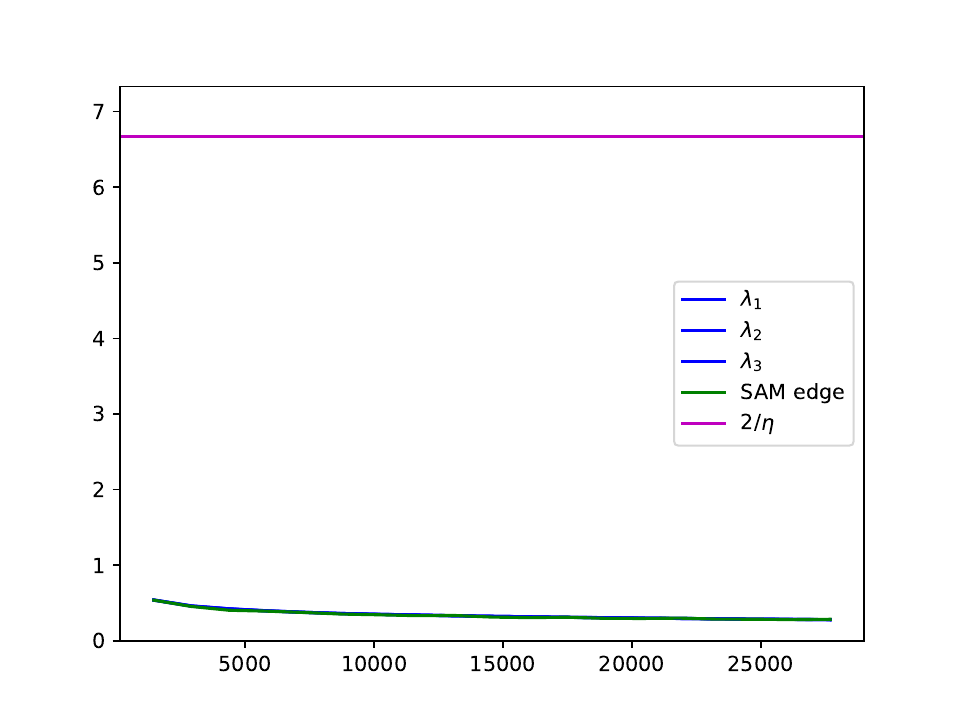}
        \caption{$\eta=0.3$}
    \end{subfigure}
    
    \caption{Magnitudes of the largest eigenvalues of the Hessian when an MLP is trained with SAM on MNIST, with $\rho=0.1$.}
    \label{f:mnist.rho=0.1.eigs}
\end{figure}
Despite the fact that gradients are taken from locations
at a distance just 0.1 from each of the iterates, the cumulative
effect results in solutions with Hessians an order of
magnitude smaller than those seen with GD.  

Figure~\ref{f:mnist.rho=0.1.eigs_no_gd_edge} contains the analogous plots, but without $2/\eta$, and with the axis
rescaled to zoom in on the SAM edge and the magnitudes of the principal
eigenvalues of the Hessian.  
\begin{figure}
    \centering
    \begin{subfigure}{0.3\linewidth}
        \includegraphics[width=\linewidth]{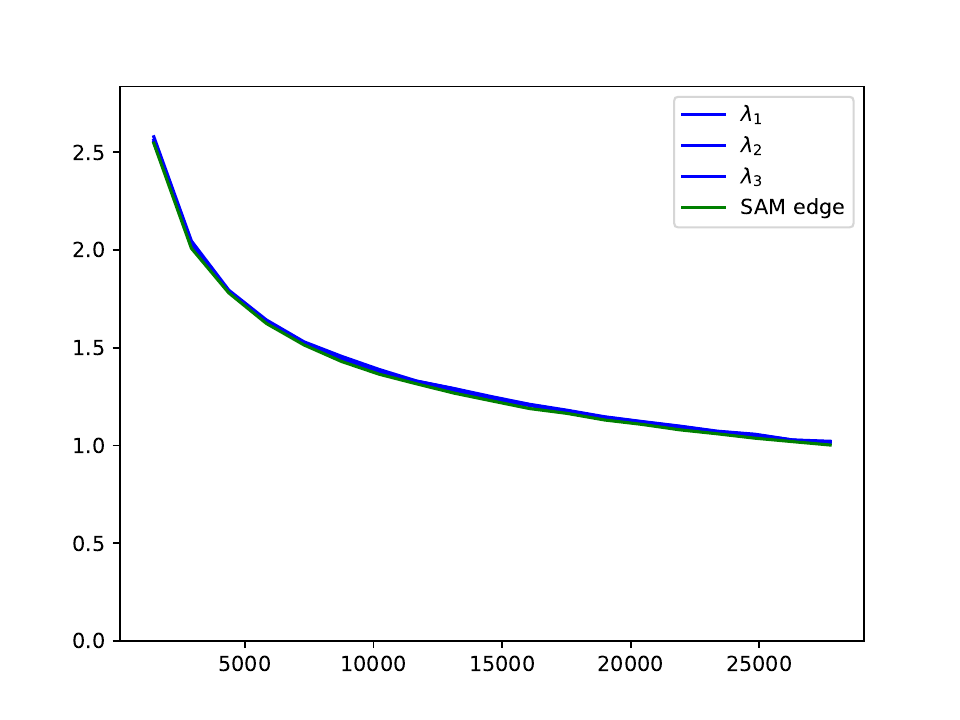}
        \caption{$\eta=0.03$}
    \end{subfigure}
    \begin{subfigure}{0.3\linewidth}
        \includegraphics[width=\linewidth]{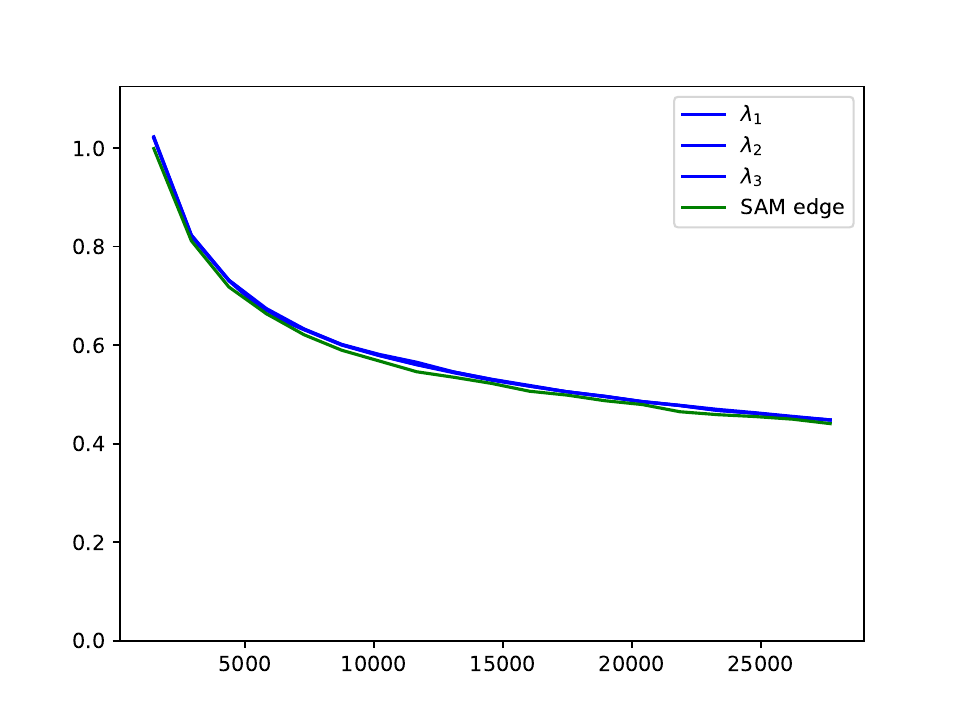}
        \caption{$\eta=0.1$}
    \end{subfigure}
    \begin{subfigure}{0.3\linewidth}
        \includegraphics[width=\linewidth]{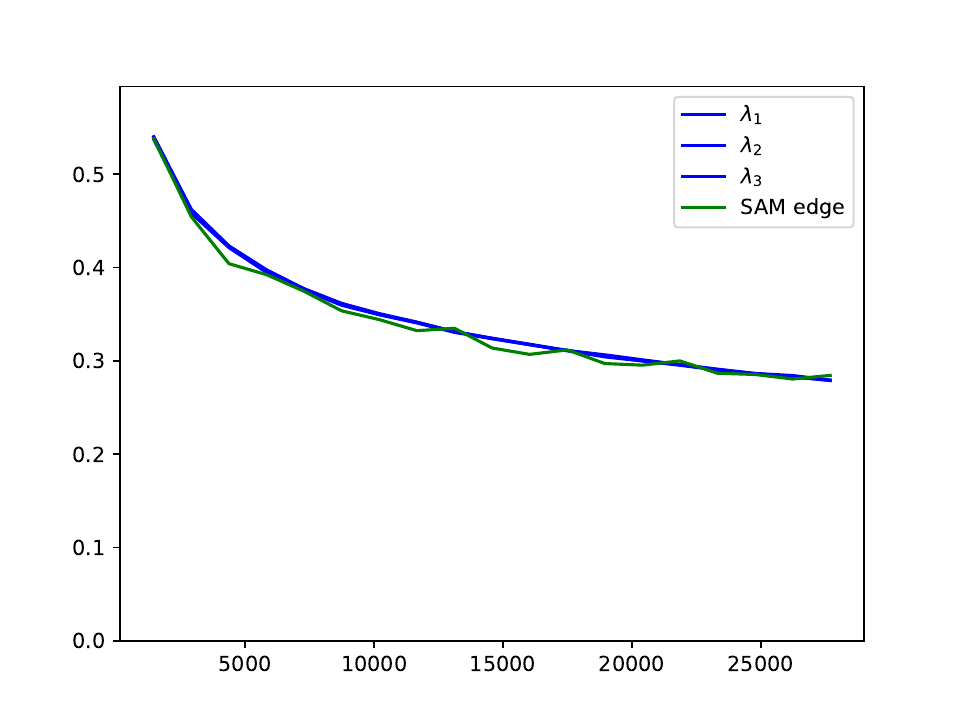}
        \caption{$\eta=0.3$}
    \end{subfigure}
    \caption{Magnitudes of the largest eigenvalues of the Hessian when an MLP is trained with SAM on MNIST, with $\rho=0.1$.}
    \label{f:mnist.rho=0.1.eigs_no_gd_edge}
\end{figure}
The operator norm closely tracks the
SAM edge derived in Section~\ref{s:derivation}. 
SAM operates at the edge of stability for a wider variety
of learning rates than GD.
We also see the SAM edge decreasing over time, as
the gradients get smaller.  The top three principal
components are very close to one another.   This
is consistent with the view that SAM effectively
performs gradient descent on the operator norm of
the Hessian -- if it did, a step would reduce the
principal eigenvalue, while leaving the others at their
old values, bringing the top eigenvalue closer to
the others.

In Figure~\ref{f:mnist.loss}, we plot the training
losses, 
when $\rho = 0.0$ and $\rho = 0.1$.
\begin{figure}
    \centering
    \begin{subfigure}{0.3\linewidth}
        \includegraphics[width=\linewidth]{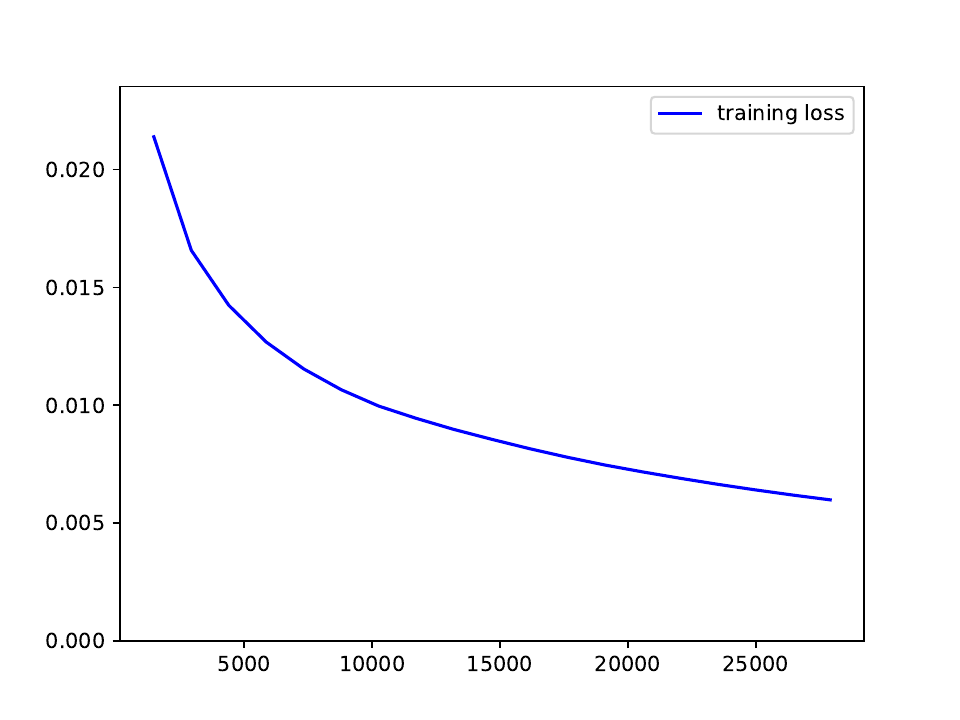}
        \caption{$\eta=0.03, \rho = 0$}
    \end{subfigure}
    \begin{subfigure}{0.3\linewidth}
        \includegraphics[width=\linewidth]{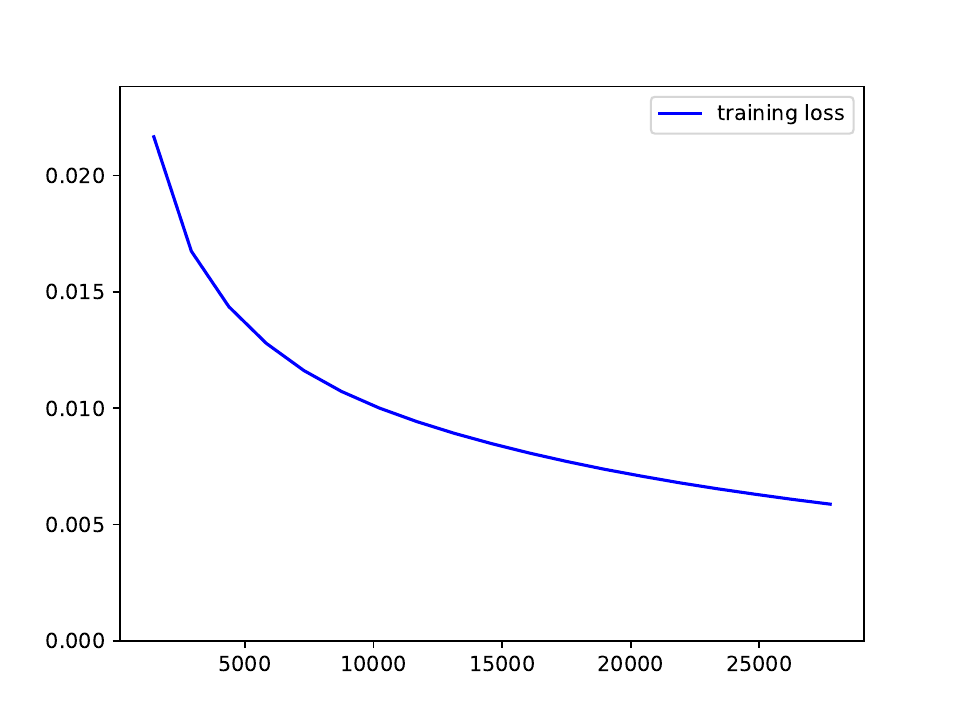}
        \caption{$\eta=0.03, \rho=0.1$}
    \end{subfigure}

    \begin{subfigure}{0.3\linewidth}
        \includegraphics[width=\linewidth]{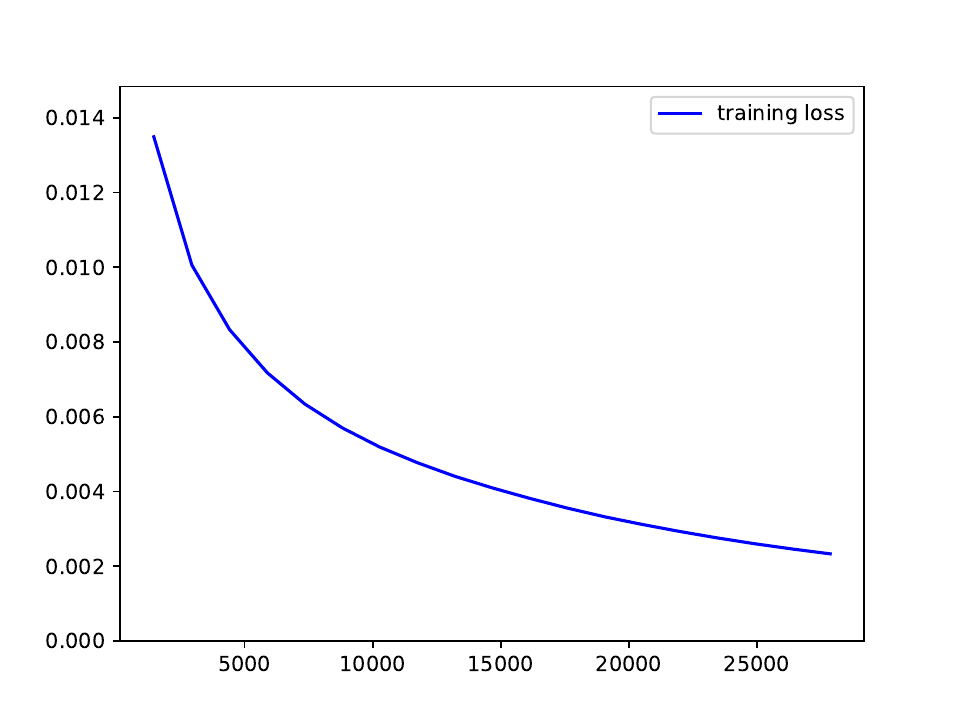}
        \caption{$\eta=0.1, \rho=0$}
    \end{subfigure}
    \begin{subfigure}{0.3\linewidth}
        \includegraphics[width=\linewidth]{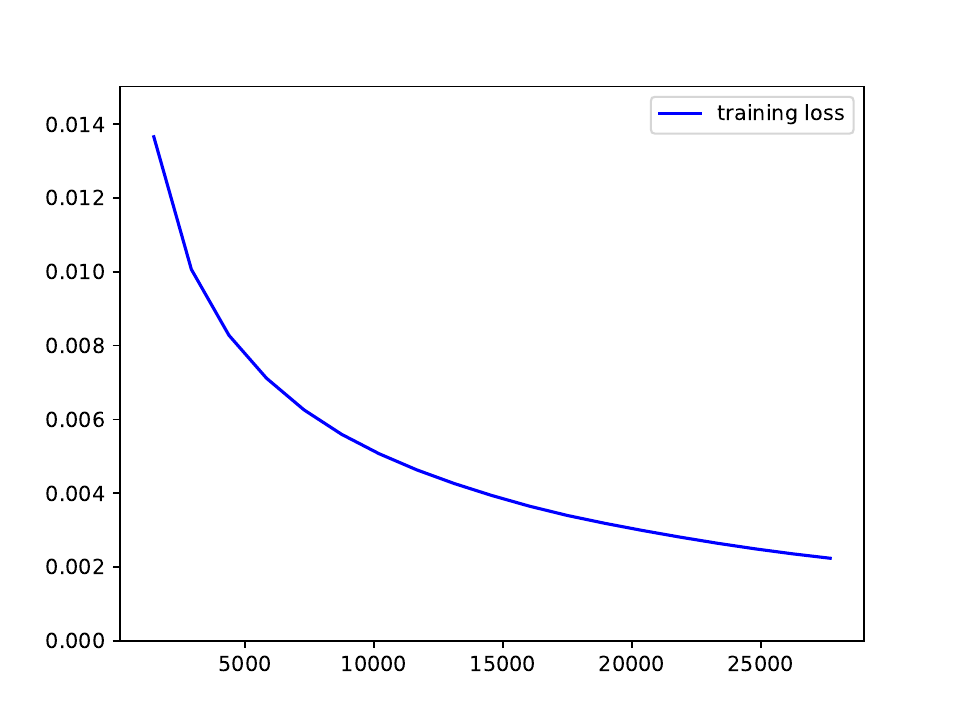}
        \caption{$\eta=0.1, \rho=0.1$}
    \end{subfigure}

    \begin{subfigure}{0.3\linewidth}
        \includegraphics[width=\linewidth]{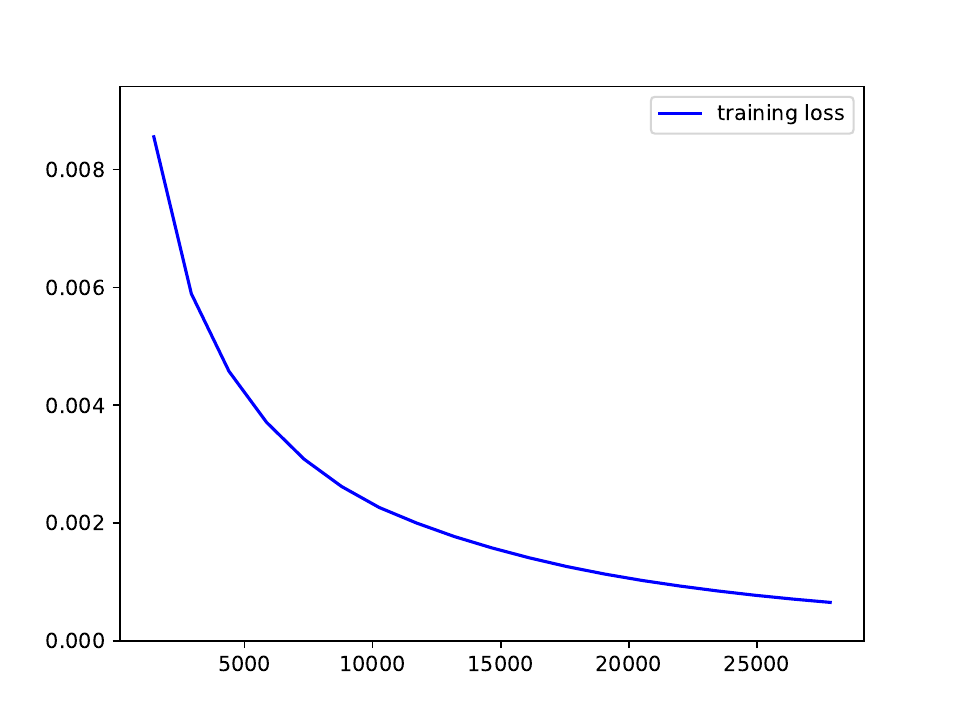}
        \caption{$\eta=0.3, \rho=0$}
    \end{subfigure}
    \begin{subfigure}{0.3\linewidth}
        \includegraphics[width=\linewidth]{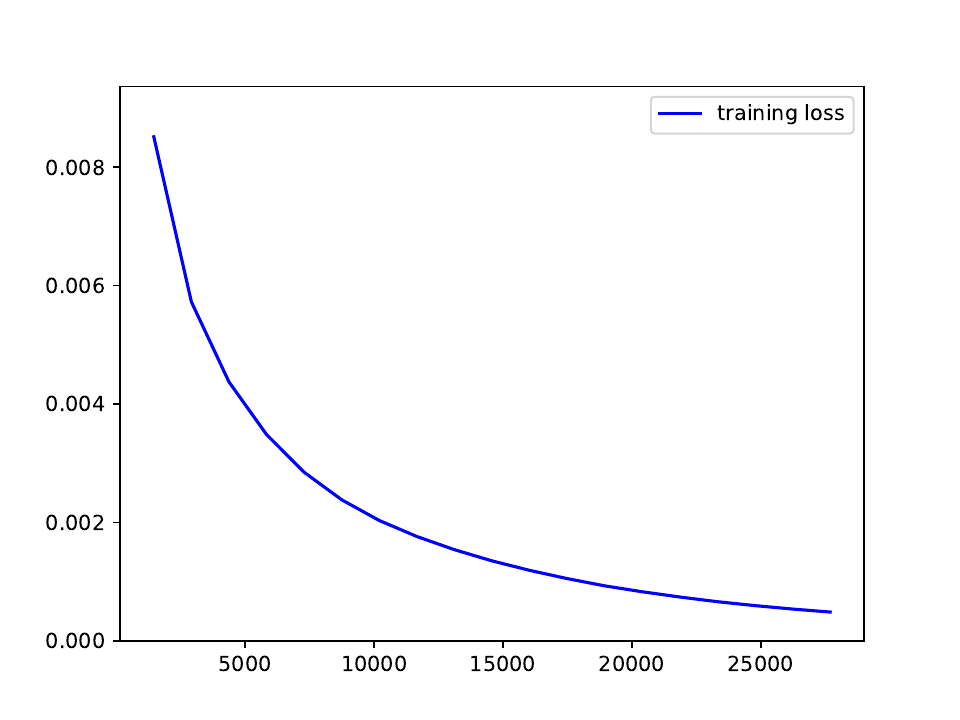}
        \caption{$\eta=0.3, \rho=0.1$}
    \end{subfigure}
    
    \caption{Training loss with GD and SAM on MNIST.}
    \label{f:mnist.loss}
\end{figure}
SAM achieves flatter minima with similar loss.
We also see that SAM drives training toward smoother
regions in parameter space while the training error is
still fairly high.

In Figure~\ref{f:mnist.alignment}, we examine alignments
between the gradients and the principal eigenvector of
the Hessian, again where $\rho = 0.1$. We evaluate
both the gradient at the iterate, and the gradient evaluated
by SAM, at a distance $\rho$ uphill.
\begin{figure}
    \centering
    \begin{subfigure}{0.3\linewidth}
        \includegraphics[width=\linewidth]{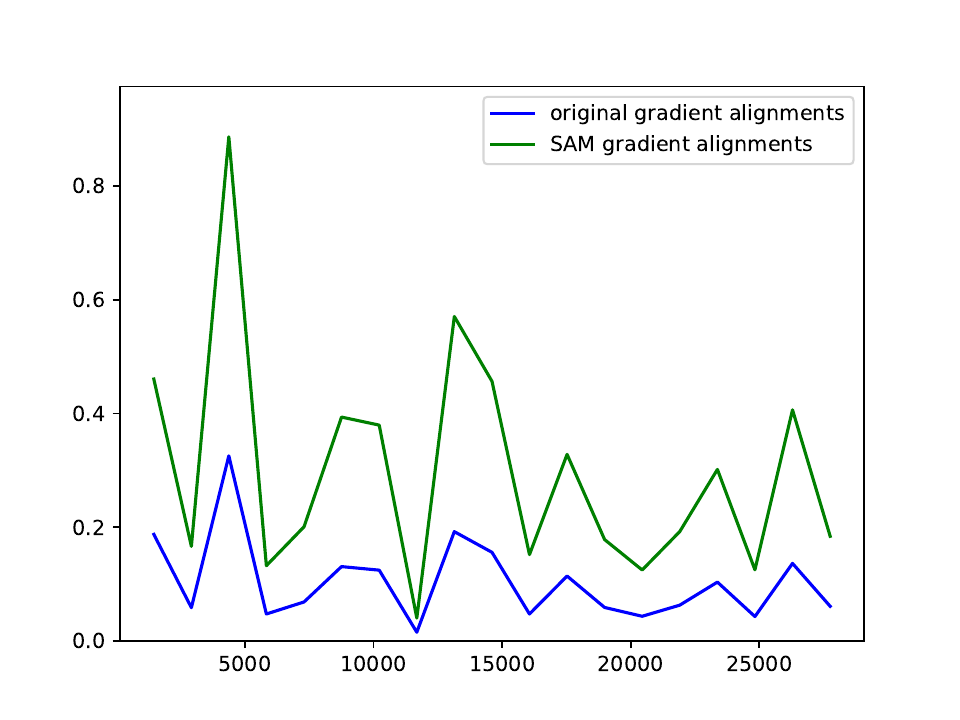}
        \caption{$\eta=0.03$}
    \end{subfigure}
    \begin{subfigure}{0.3\linewidth}
        \includegraphics[width=\linewidth]{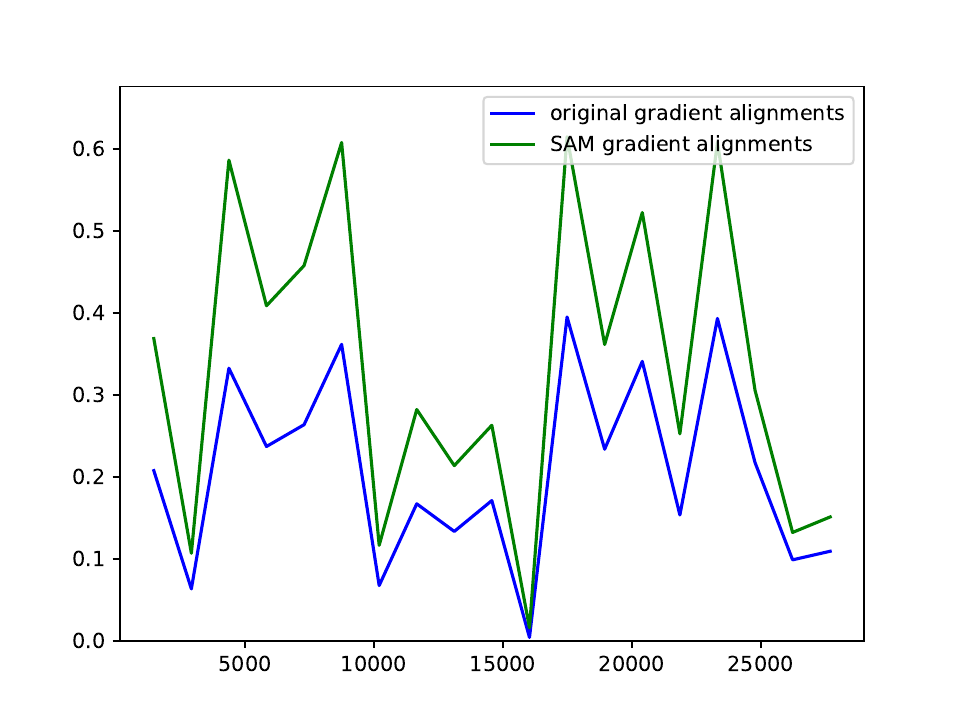}
        \caption{$\eta=0.1$}
    \end{subfigure}
    \begin{subfigure}{0.3\linewidth}
        \includegraphics[width=\linewidth]{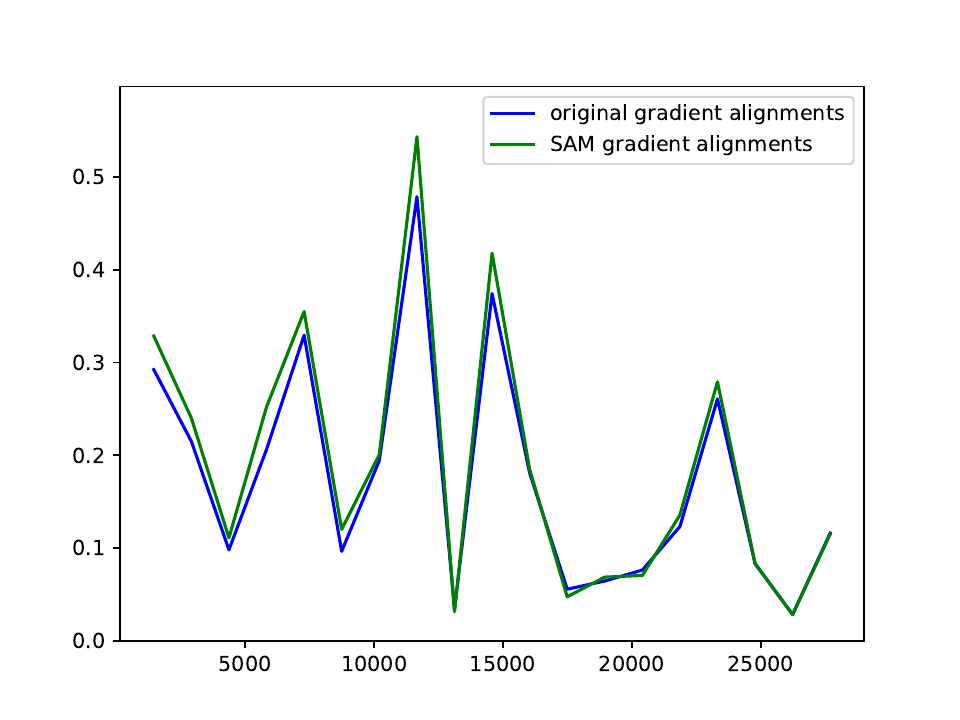}
        \caption{$\eta=0.3$}
    \end{subfigure}
    
    \caption{Alignments between gradients and the principal eigenvector
    of the Hessian with SAM on MNIST when $\rho = 0.1$.}
    \label{f:mnist.alignment}
\end{figure}
Since there are millions of parameters, random
directions would have a tiny amount of alignment.
We see a significant alignment between both gradients
and the principal eigenvector of the Hessian, though
the gradient used by SAM is aligned more closely.  Recall that there
are a number of eigenvectors whose eigenvalues are
nearly equal to the largest value.  Reducing their eigenvalues
can also make progress toward ultimately reducing the operator
norm of the Hessian.

\subsection{CIFAR10}
\label{s:cifar10}

In this section, we report on experiments
with convolutional neural
networks trained 
on 1000 examples from CIFAR10.  

As before, we start with the
case of GD in Figure~\ref{f:cifar10.rho=0.eigs}.
\begin{figure}
    \centering
    \begin{subfigure}{0.3\linewidth}
        \includegraphics[width=\linewidth]{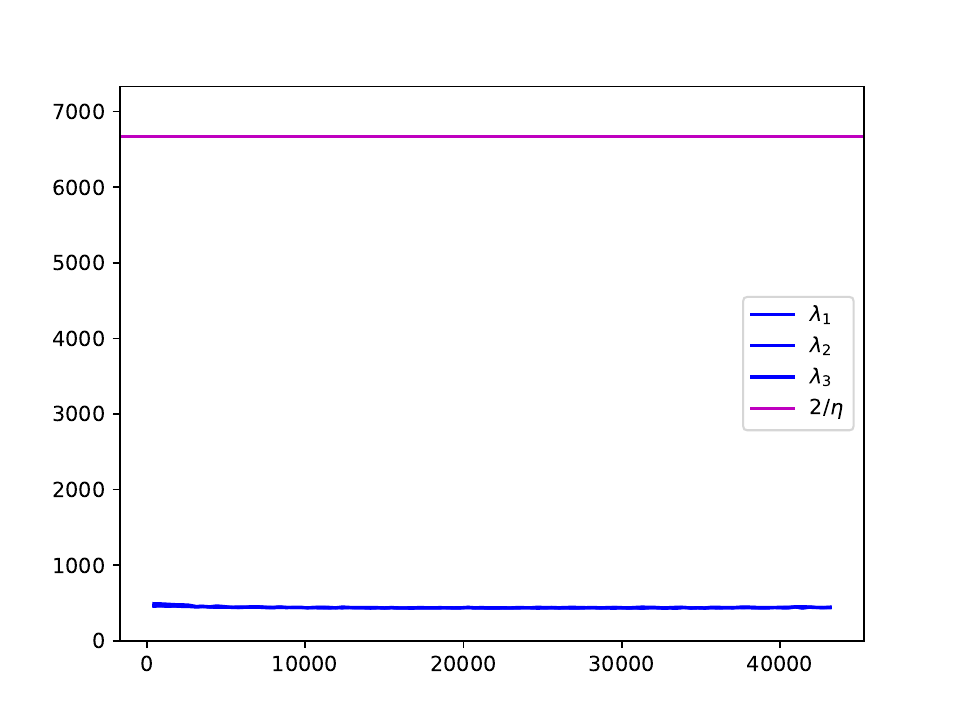}
        \caption{$\eta=0.0003$}
    \end{subfigure} 
    \begin{subfigure}{0.3\linewidth}
        \includegraphics[width=\linewidth]{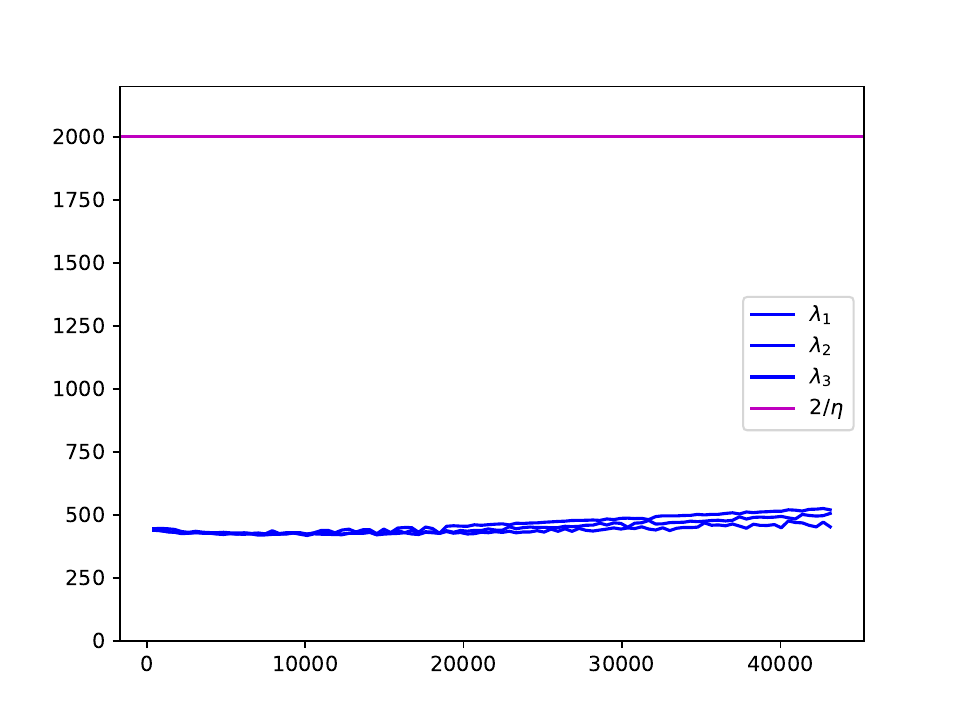}
        \caption{$\eta=0.001$}
    \end{subfigure} \\
    \begin{subfigure}{0.3\linewidth}
        \includegraphics[width=\linewidth]{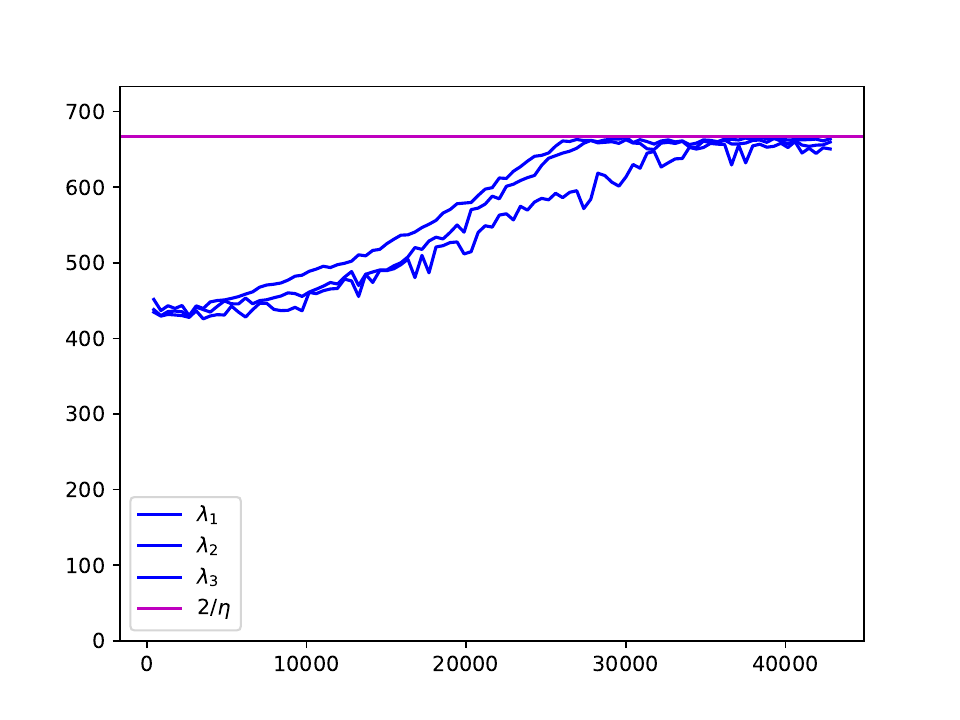}
        \caption{$\eta=0.003$}
    \end{subfigure}
    \begin{subfigure}{0.3\linewidth}
        \includegraphics[width=\linewidth]{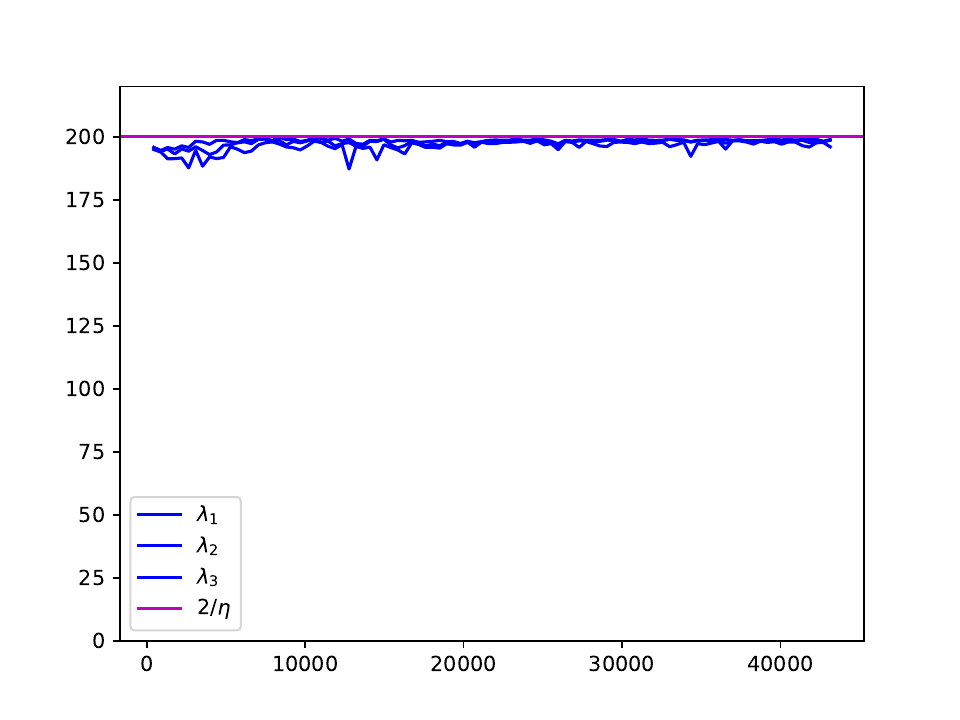}
        \caption{$\eta=0.01$}
    \end{subfigure}
    
    \caption{Magnitudes of the largest eigenvalues of the Hessian when a CNN is trained with GD on 1000 examples
    from CIFAR10.}
    \label{f:cifar10.rho=0.eigs}
\end{figure}
At the larger learning rates, training is reaching the edge of stability.

Next, we plot the same quantities when the
network is trained with SAM, with $\rho = 0.1$,
in Figure~\ref{f:cifar10.rho=0.1.eigs}.
\begin{figure}
    \centering
    \begin{subfigure}{0.3\linewidth}
        \includegraphics[width=\linewidth]{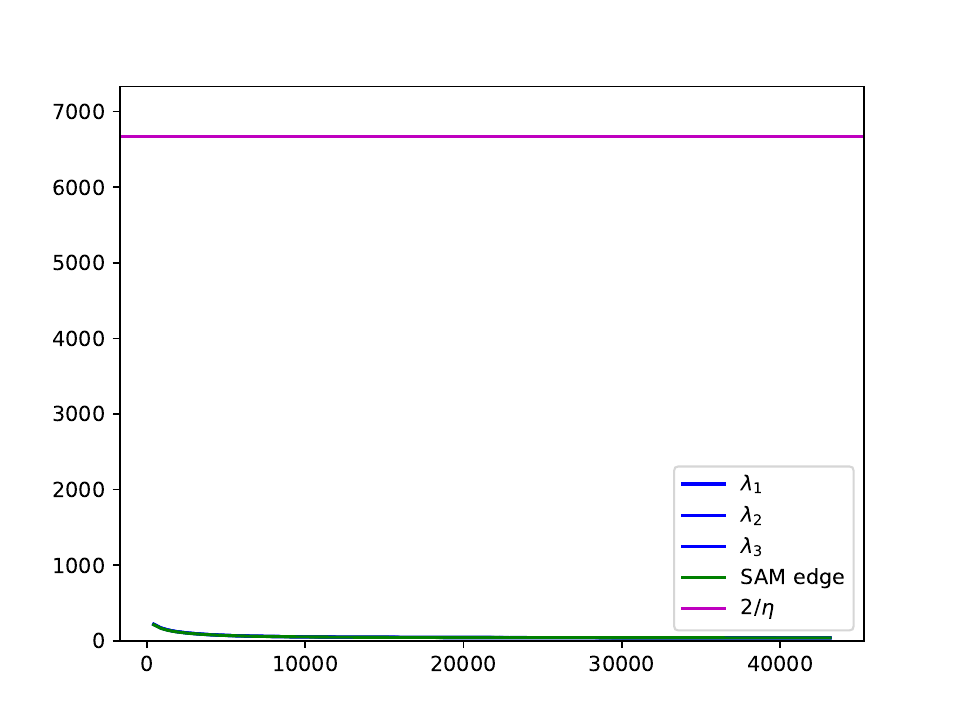}
        \caption{$\eta=0.0003$}
    \end{subfigure}
    \begin{subfigure}{0.3\linewidth}
        \includegraphics[width=\linewidth]{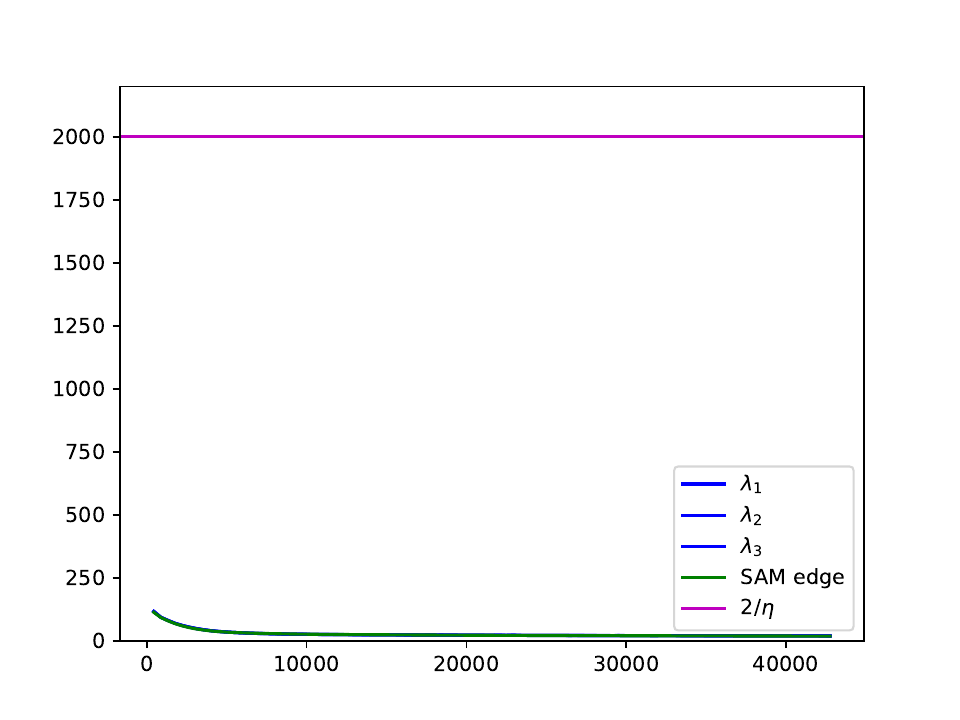}
        \caption{$\eta=0.001$}
    \end{subfigure}
    \begin{subfigure}{0.3\linewidth}
        \includegraphics[width=\linewidth]{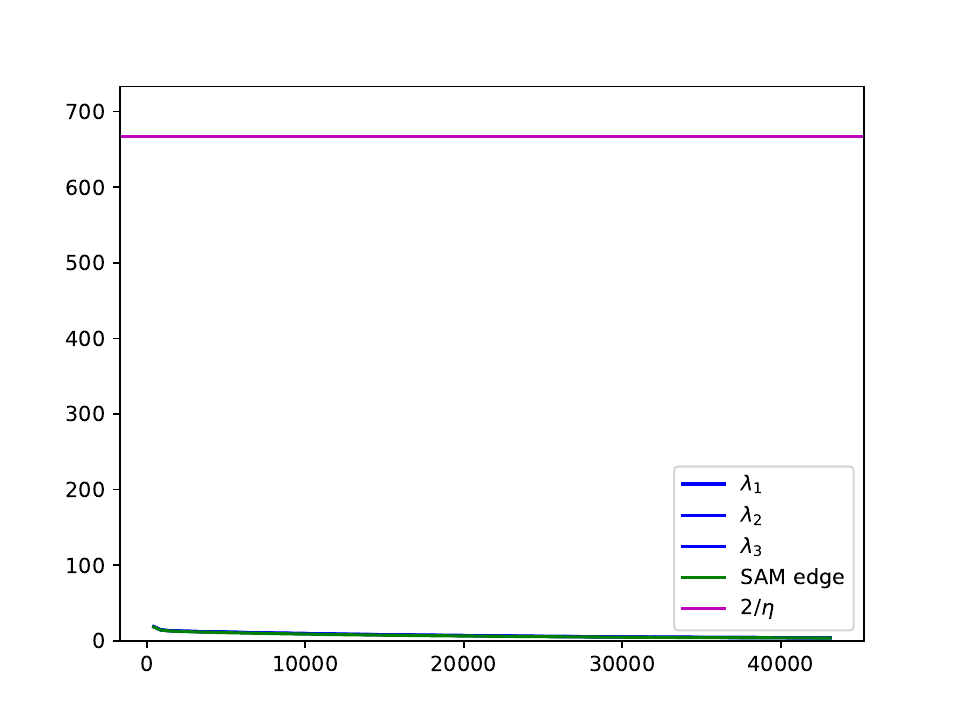}
        \caption{$\eta=0.003$}
    \end{subfigure}
    
    \caption{Magnitudes of the largest eigenvalues of the Hessian when a CNN is trained with SAM,
    with $\rho = 0.1$, on CIFAR10.}
    \label{f:cifar10.rho=0.1.eigs}
\end{figure}
Here, the eigenvalues are multiple orders of magnitude smaller than
$2/\eta$.

Next, 
in Figure~\ref{f:cifar10.rho=0.1.eigs_no_two_over_eta} we no longer plot $2/\eta$,
and zoom in on the region where the SAM
edge and the eigenvalues are.  
\begin{figure}
    \centering
   \begin{subfigure}{0.3\linewidth}
        \includegraphics[width=\linewidth]{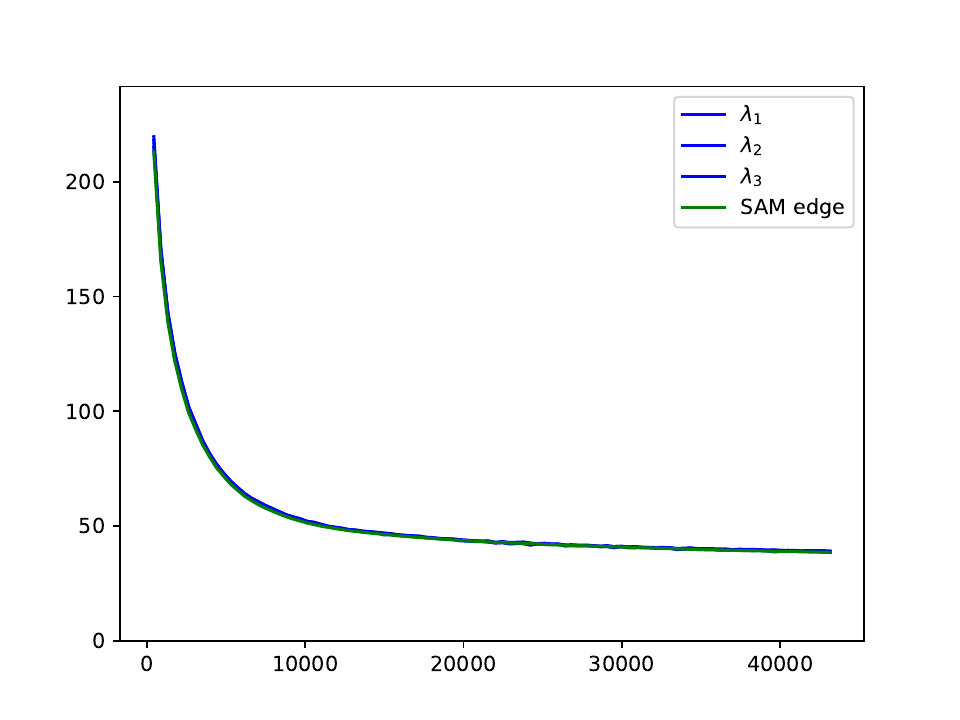}
        \caption{$\eta=0.0003$}
    \end{subfigure}
    \begin{subfigure}{0.3\linewidth}
        \includegraphics[width=\linewidth]{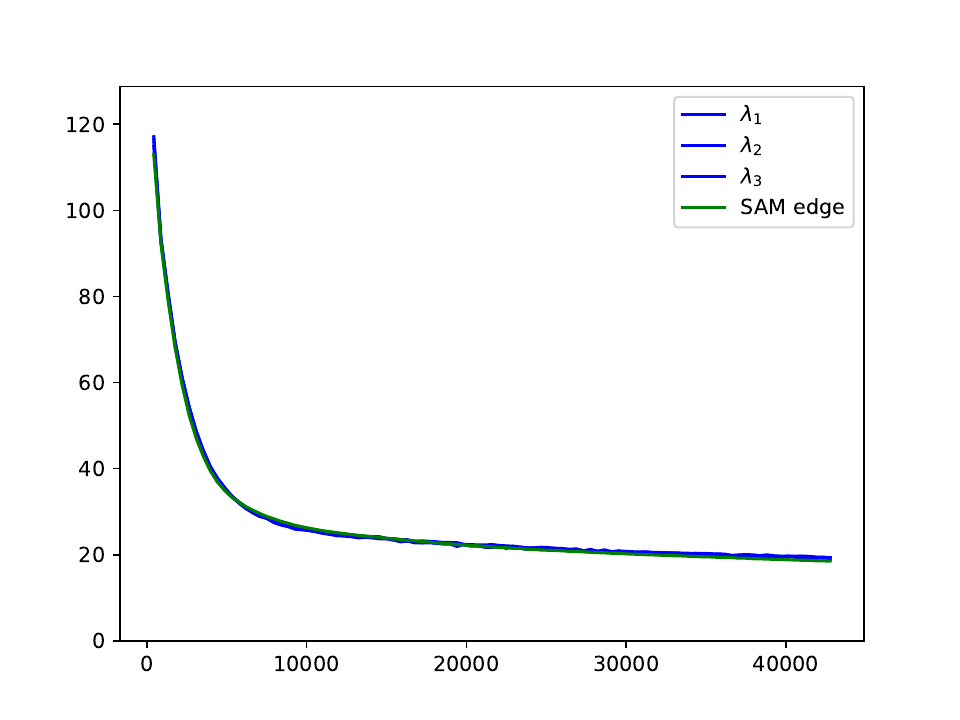}
        \caption{$\eta=0.001$}
    \end{subfigure}
    \begin{subfigure}{0.3\linewidth}
        \includegraphics[width=\linewidth]{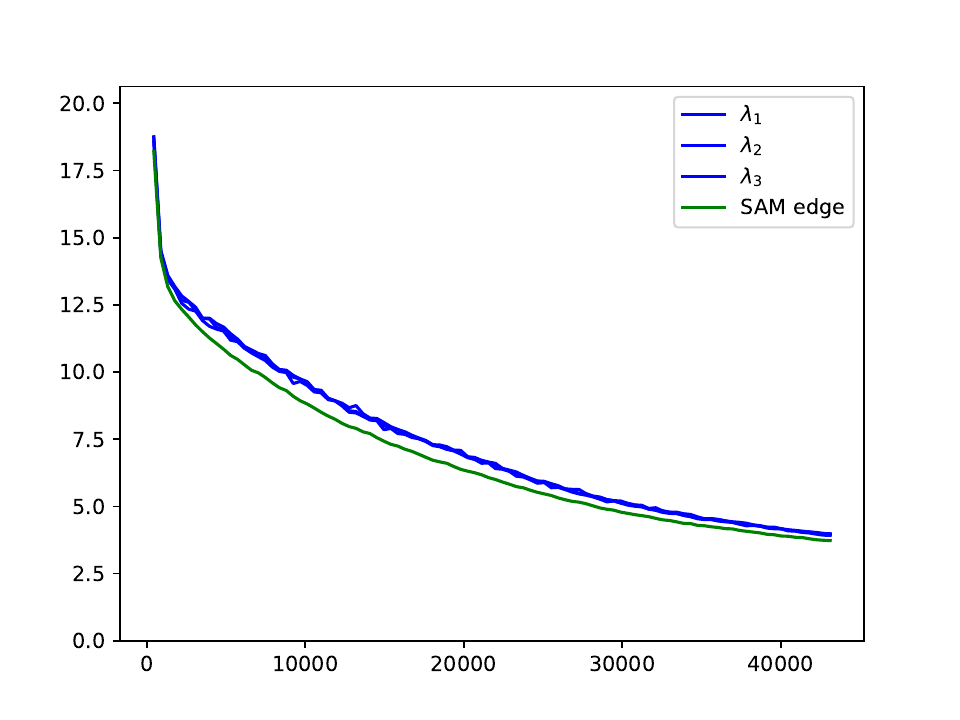}
        \caption{$\eta=0.003$}
    \end{subfigure}
    \caption{Magnitudes of the largest eigenvalues of the Hessian when a CNN is trained with SAM, with $\rho = 0.1$, on CIFAR10.}
    \label{f:cifar10.rho=0.1.eigs_no_two_over_eta}
\end{figure}
Here, as with MNIST, we once again see SAM operating at the
edge of stability identified in Section~\ref{s:derivation},
even at learning rates where GD did not.

Figure~\ref{f:cifar10.loss} contains plots of
the training loss on CIFAR10, for $\rho = 0.0$ and
$\rho = 0.1$.
\begin{figure}
    \centering   
      \begin{subfigure}{0.3\linewidth}
        \includegraphics[width=\linewidth]{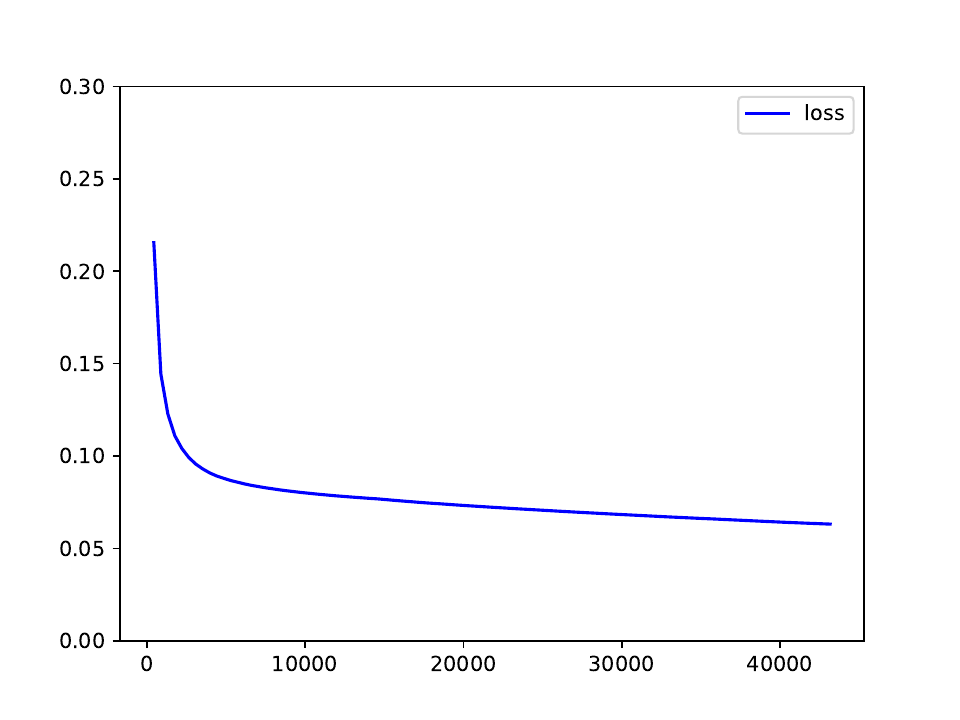}
        \caption{GD, $\eta=0.0003$}
    \end{subfigure}
    \begin{subfigure}{0.3\linewidth}
        \includegraphics[width=\linewidth]{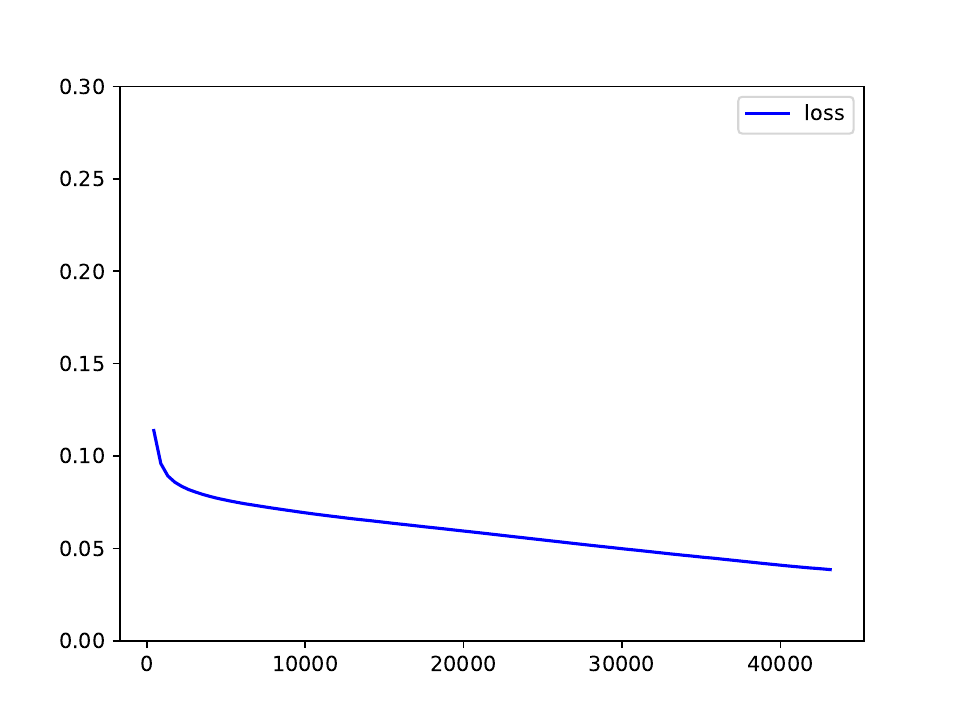}
        \caption{GD, $\eta=0.001$}
    \end{subfigure}
    \begin{subfigure}{0.3\linewidth}
        \includegraphics[width=\linewidth]{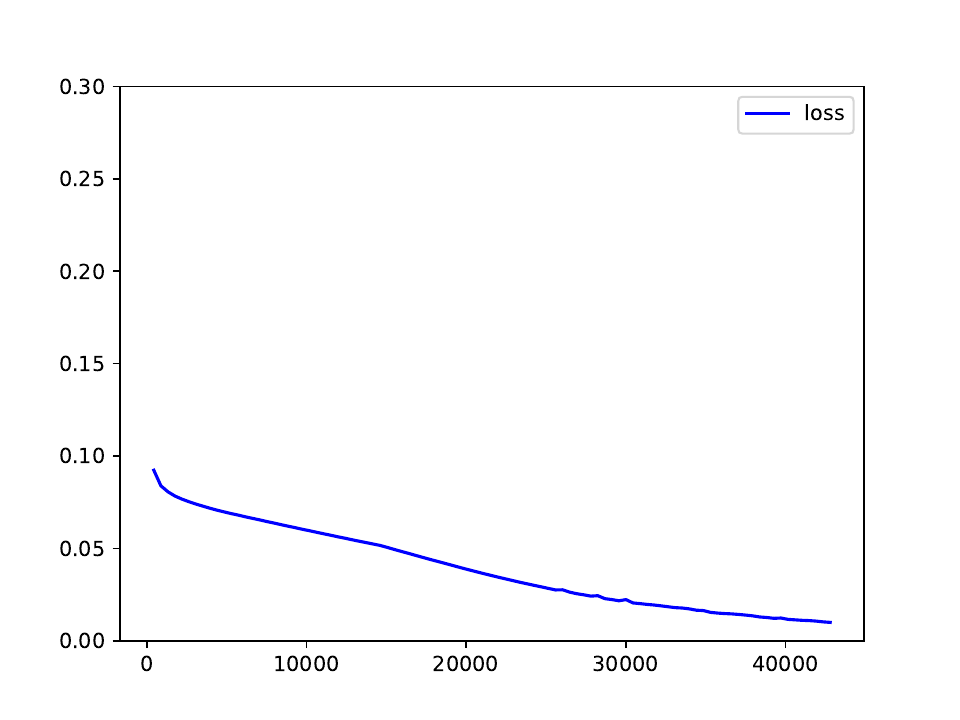}
        \caption{GD, $\eta=0.003$}
    \end{subfigure}
    \begin{subfigure}{0.3\linewidth}
        \includegraphics[width=\linewidth]{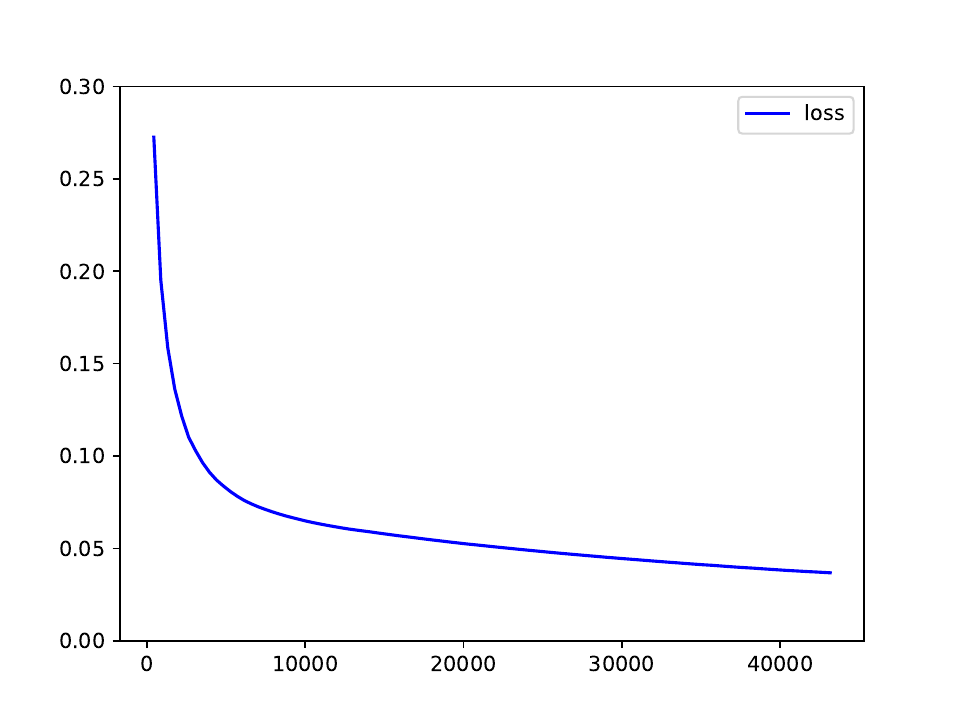}
        \caption{SAM, $\eta=0.0003$}
    \end{subfigure}
    \begin{subfigure}{0.3\linewidth}
        \includegraphics[width=\linewidth]{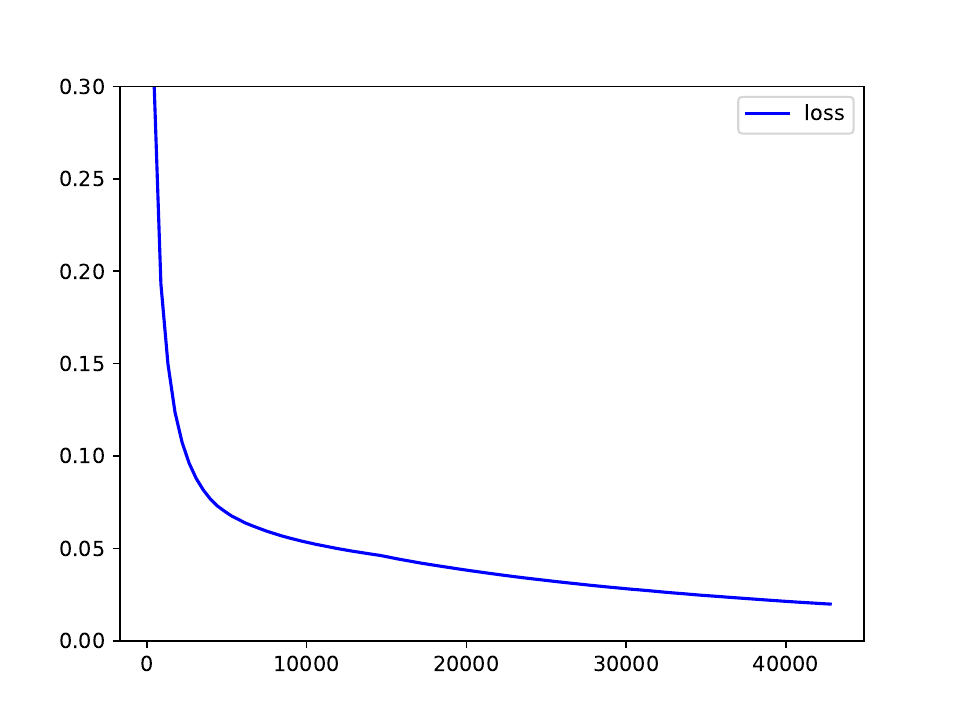}
        \caption{SAM, $\eta=0.001$}
    \end{subfigure}
    \begin{subfigure}{0.3\linewidth}
        \includegraphics[width=\linewidth]{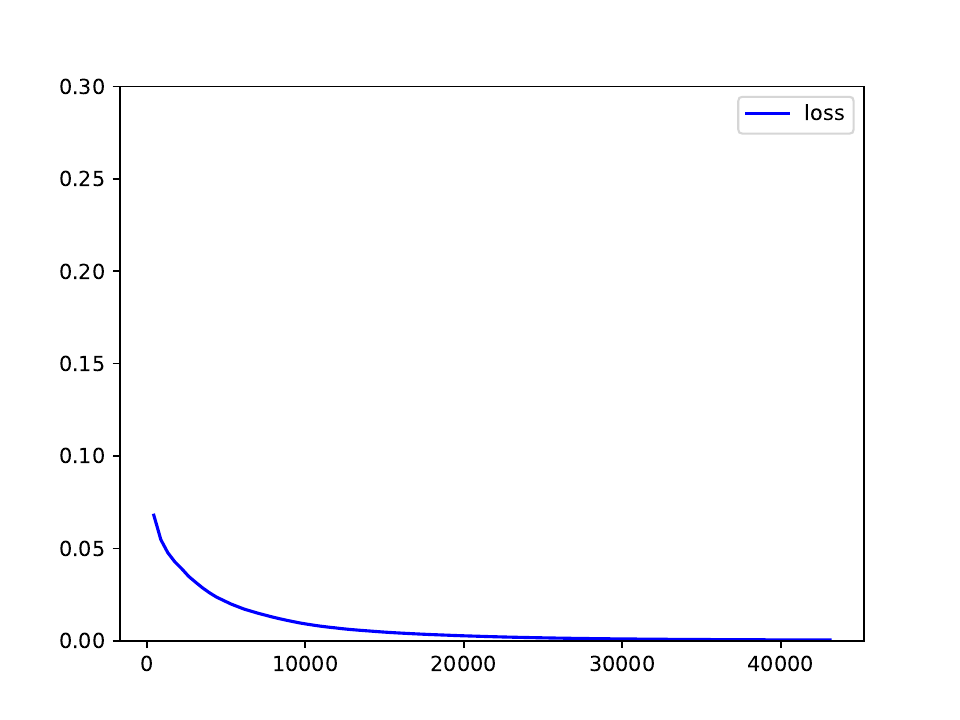}
        \caption{SAM, $\eta=0.003$}
    \end{subfigure}
    \caption{Training loss with SGD and SAM (with $\rho = 0.1$) on CIFAR10.}
    \label{f:cifar10.loss}
\end{figure}
In this task, SAM achieves wider minima without sacrificing training error.
In fact, for the larger step sizes, its training error is better.

In Figure~\ref{f:cifar10.alignment}, we examine alignments
between the gradients and the principal eigenvector of
the Hessian in the case where $\rho = 0.1$ and
a CNN is trained on CIFAR10.
\begin{figure}
    \centering
   \begin{subfigure}{0.3\linewidth}
        \includegraphics[width=\linewidth]{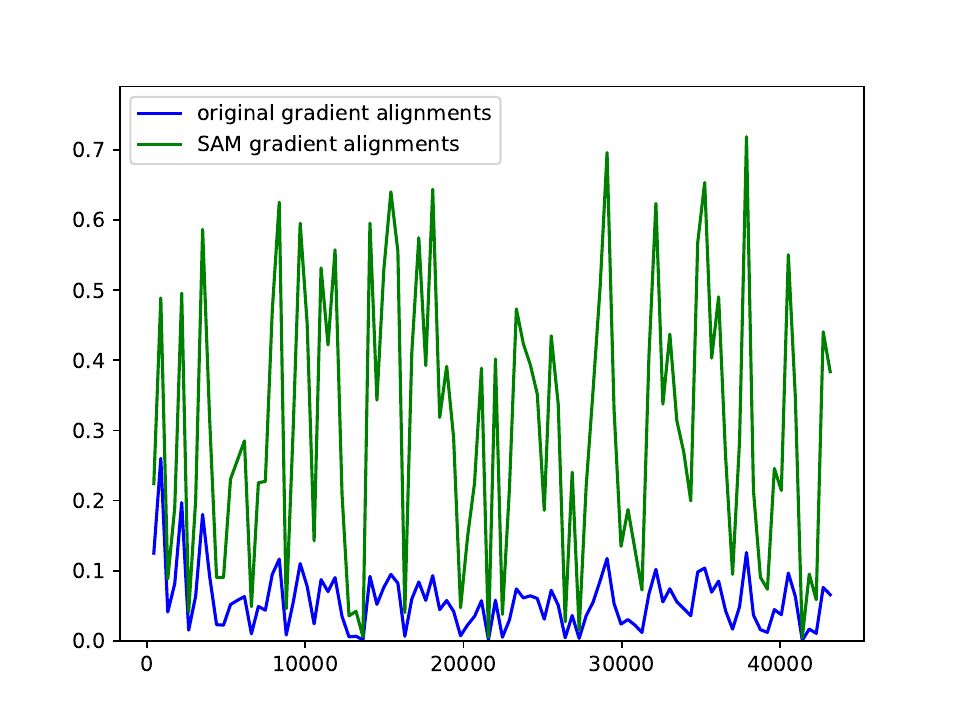}
        \caption{$\eta=0.0003$}
    \end{subfigure}
    \begin{subfigure}{0.3\linewidth}
        \includegraphics[width=\linewidth]{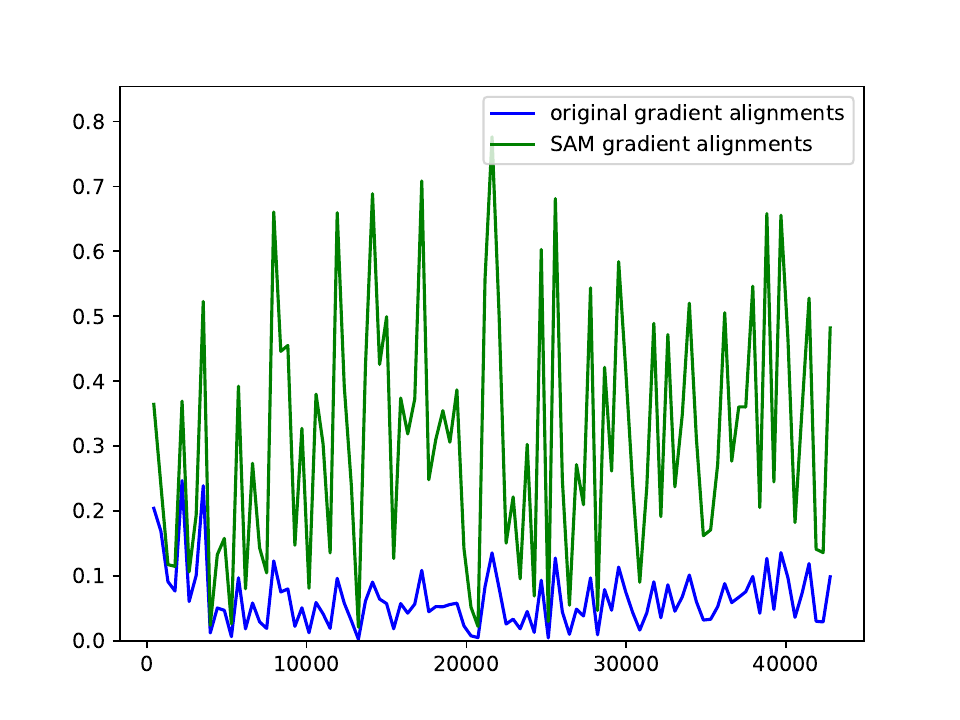}
        \caption{$\eta=0.001$}
    \end{subfigure}
    \begin{subfigure}{0.3\linewidth}
        \includegraphics[width=\linewidth]{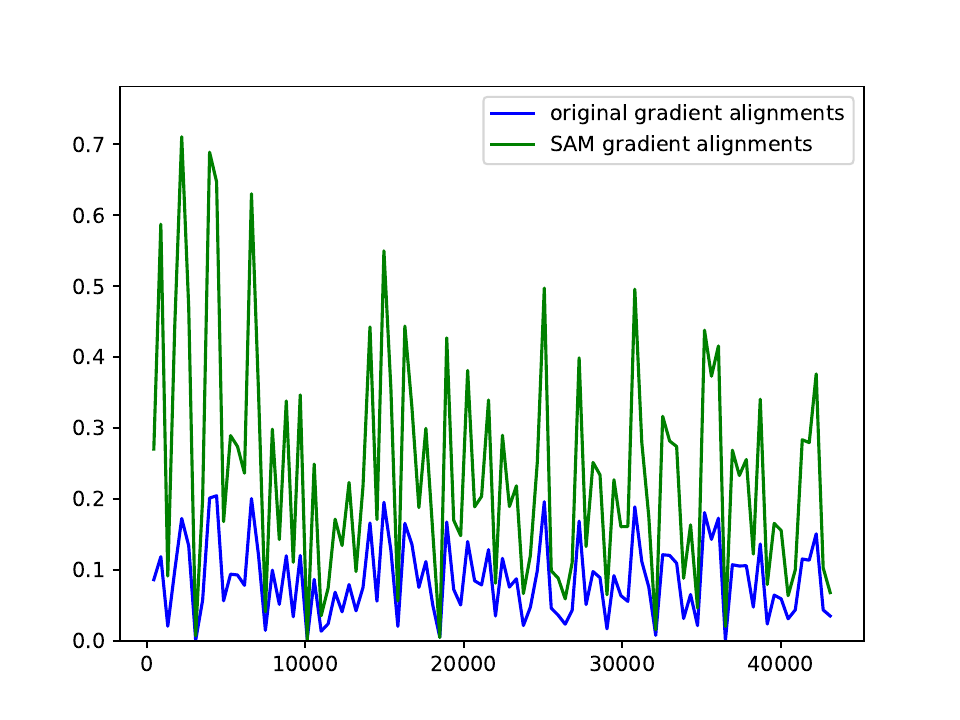}
        \caption{$\eta=0.003$}
    \end{subfigure}    
    \caption{Alignments between gradients and the principal eigenvector
    of the Hessian with SAM on CIFAR10.}
    \label{f:cifar10.alignment}
\end{figure}
Again, we see significant alignment, especially at the higher 
learning rates.  As in MNIST, we also see stronger alignment
with the principal direction
for the gradients evaluated at the uphill location used by
SAM.

\subsection{Language modeling}
\label{s:lm}

Next, we report on experiments training a language model.
As before, we start with SGD, here in Figure~\ref{f:lm.rho=0.eigs}.
\begin{figure}
    \centering
    \begin{subfigure}{0.3\linewidth}
        \includegraphics[width=\linewidth]{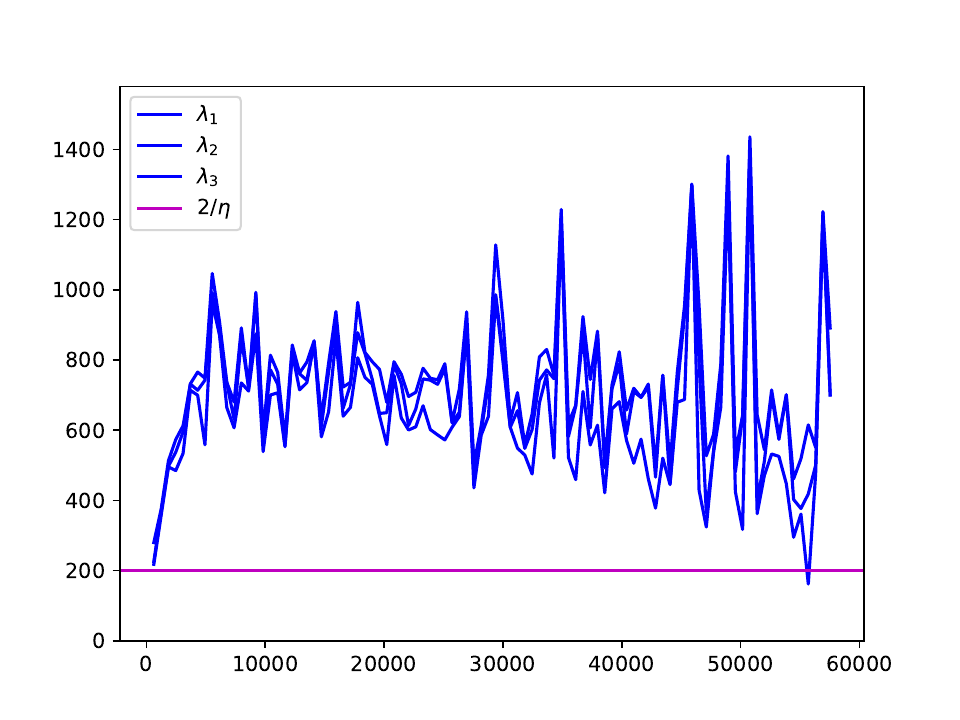}
        \caption{$\eta=0.01$}
    \end{subfigure}
    \begin{subfigure}{0.3\linewidth}
        \includegraphics[width=\linewidth]{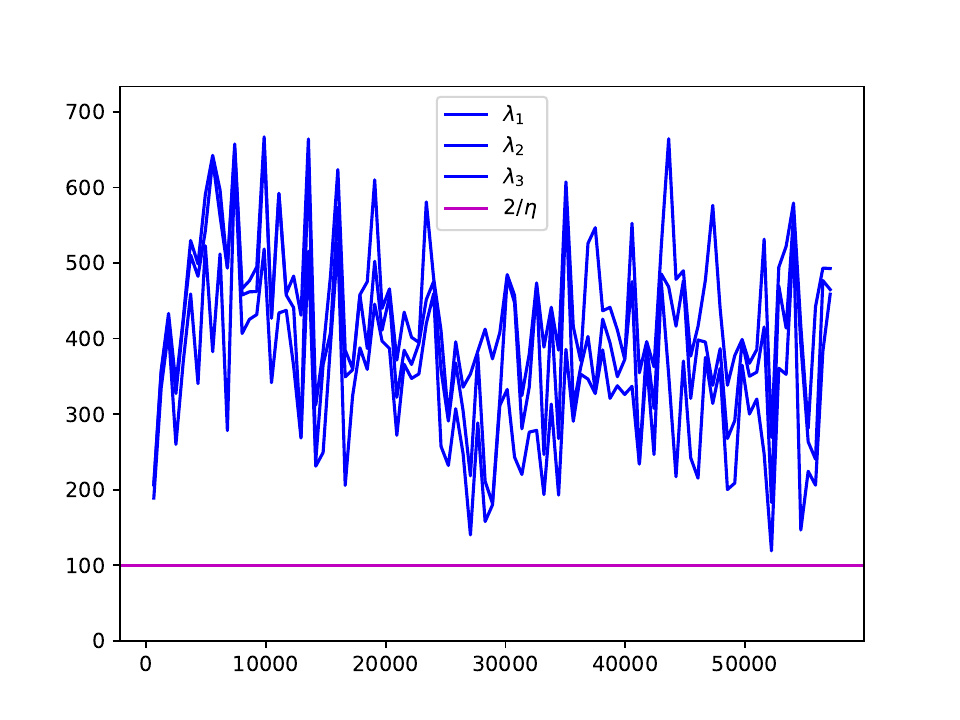}
        \caption{$\eta=0.02$}
    \end{subfigure}
    \begin{subfigure}{0.3\linewidth}
        \includegraphics[width=\linewidth]{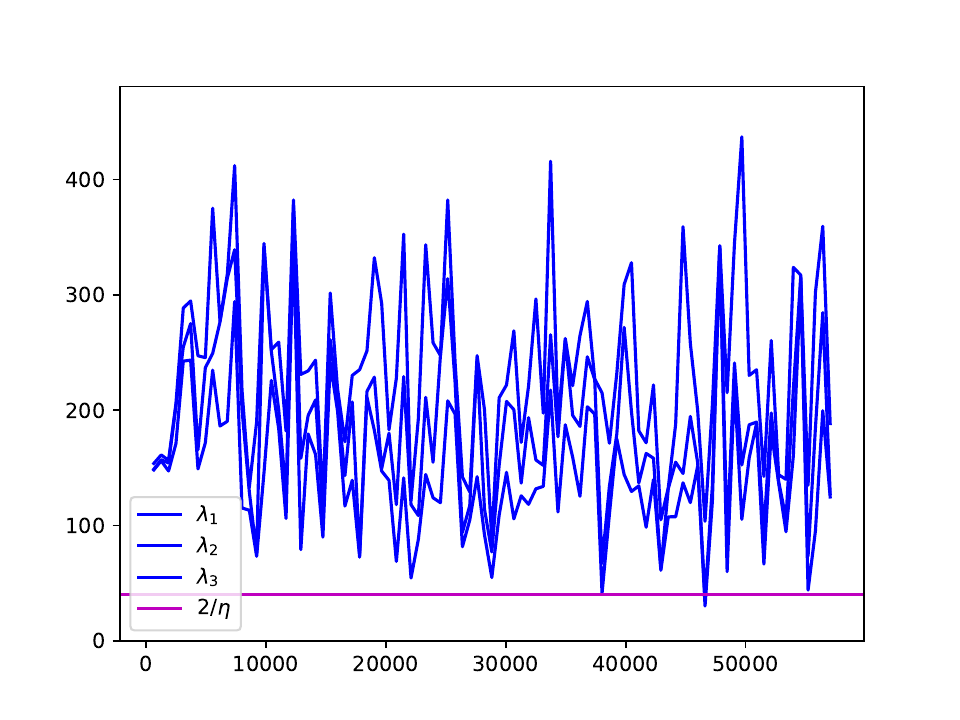}
        \caption{$\eta=0.05$}
    \end{subfigure}

    \begin{subfigure}{0.3\linewidth}
        \includegraphics[width=\linewidth]{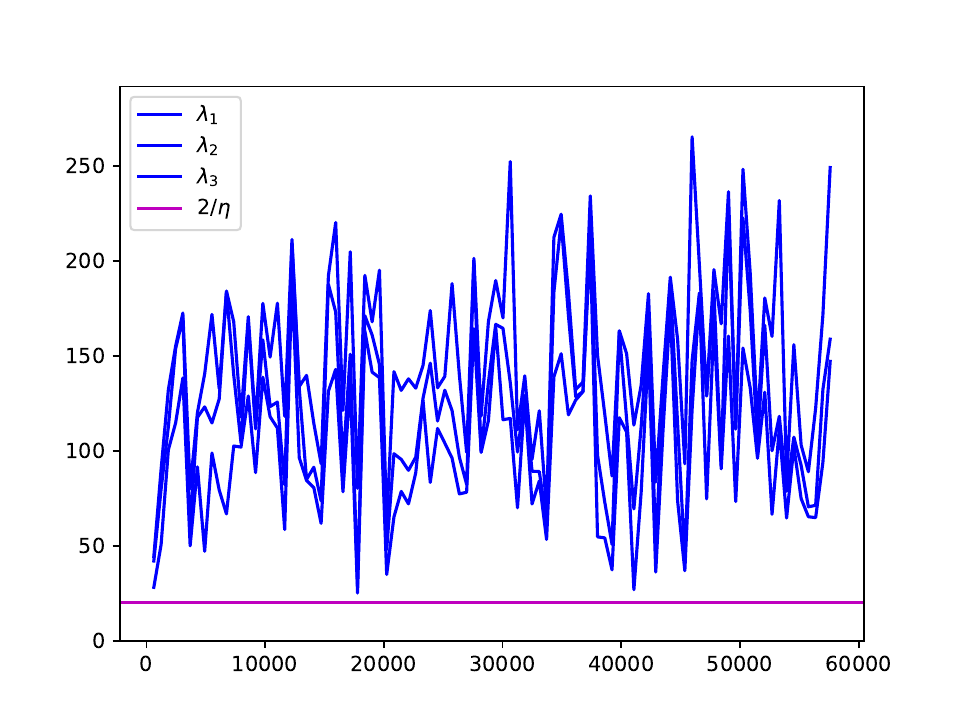}
        \caption{$\eta=0.1$}
    \end{subfigure}
    \begin{subfigure}{0.3\linewidth}
        \includegraphics[width=\linewidth]{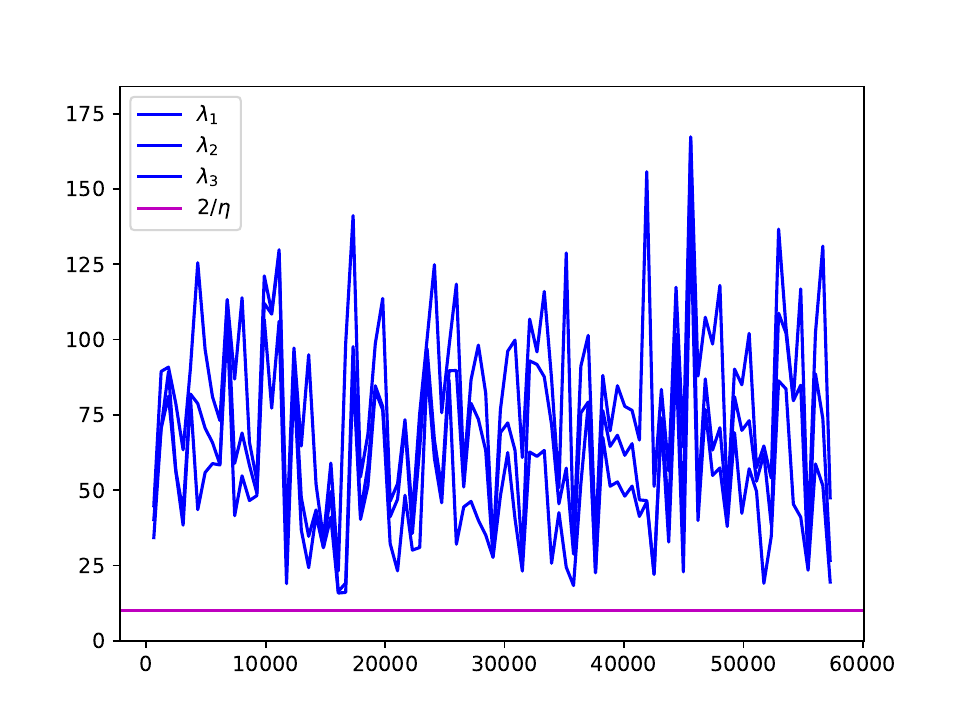}
        \caption{$\eta=0.2$}
    \end{subfigure}
    
    \caption{Magnitudes of the largest eigenvalues of the Hessian when a language model is trained with SGD.}
    \label{f:lm.rho=0.eigs}
\end{figure}

Next, we plot the same quantities when the
network is trained with SAM, with $\rho = 0.3$,
in Figure~\ref{f:lm.rho=0.3.eigs}.
\begin{figure}
    \centering
    \begin{subfigure}{0.3\linewidth}
        \includegraphics[width=\linewidth]{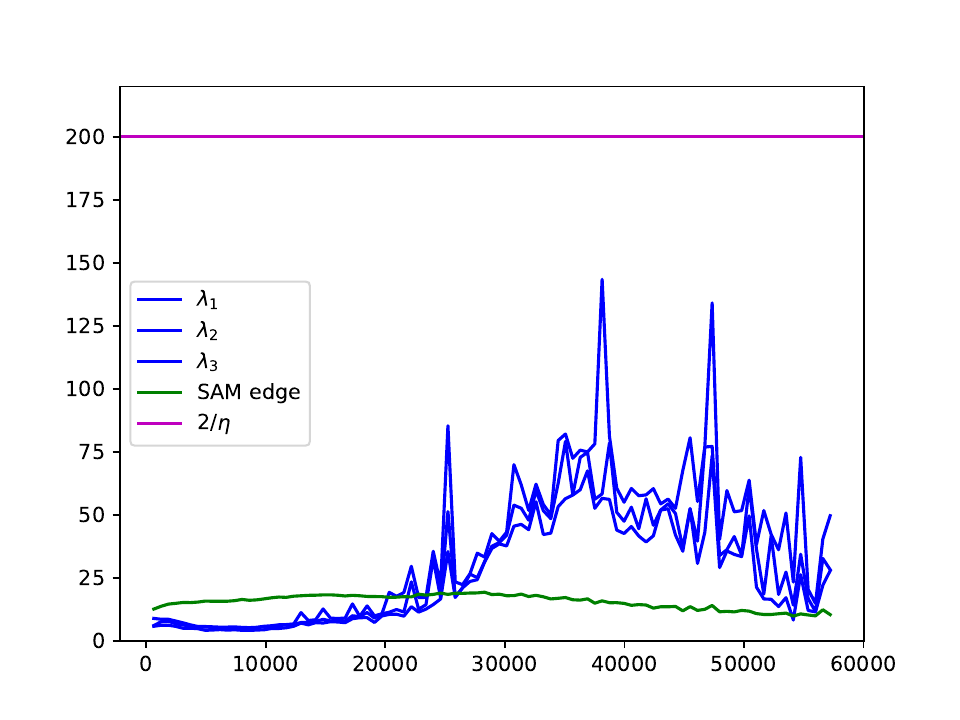}
        \caption{$\eta=0.01$}
    \end{subfigure}
    \begin{subfigure}{0.3\linewidth}
        \includegraphics[width=\linewidth]{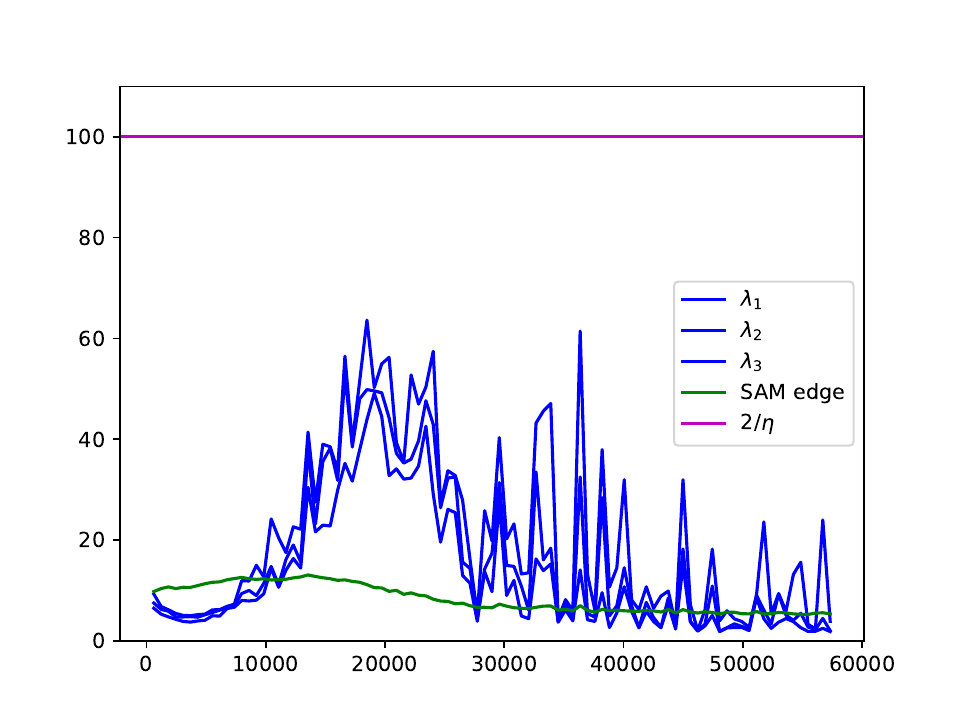}
        \caption{$\eta=0.02$}
    \end{subfigure}
    \begin{subfigure}{0.3\linewidth}
        \includegraphics[width=\linewidth]{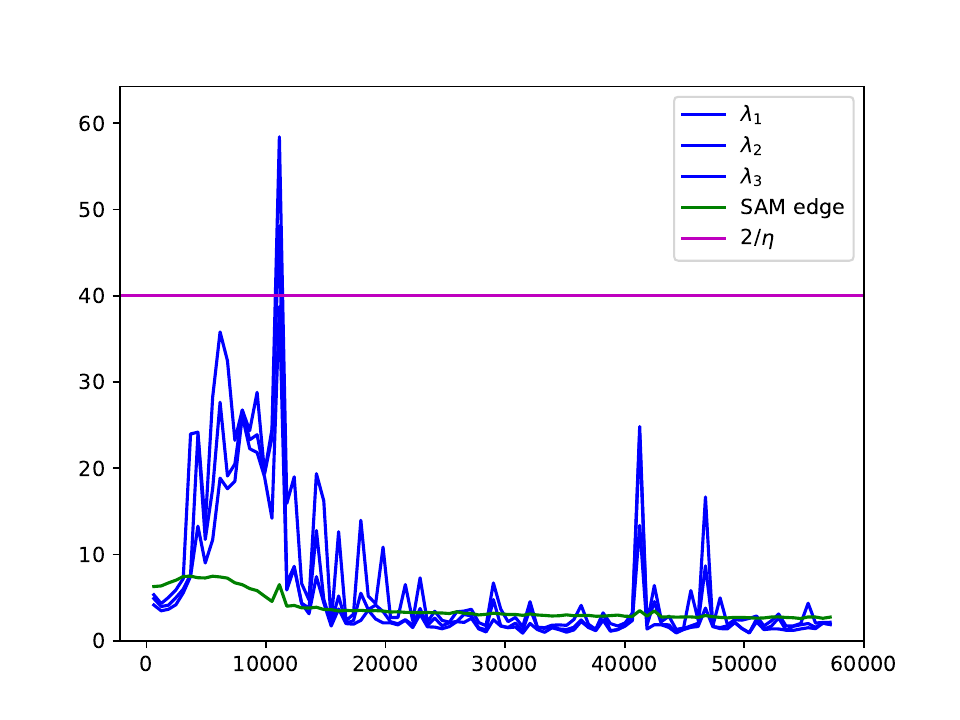}
        \caption{$\eta=0.05$}
    \end{subfigure}

    \begin{subfigure}{0.3\linewidth}
        \includegraphics[width=\linewidth]{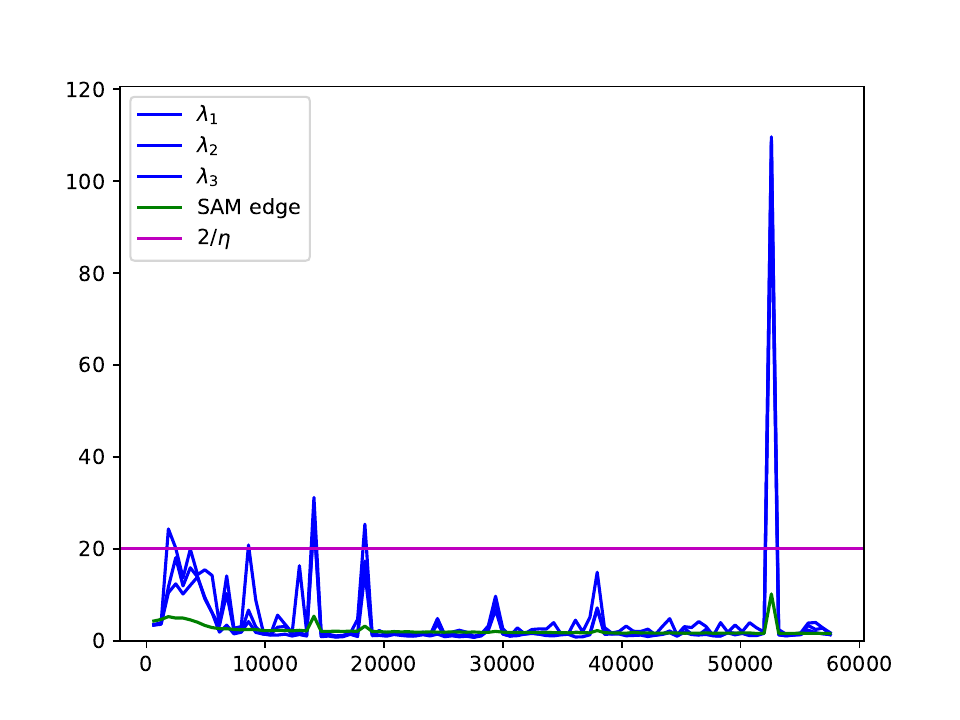}
        \caption{$\eta=0.1$}
    \end{subfigure}
    \begin{subfigure}{0.3\linewidth}
        \includegraphics[width=\linewidth]{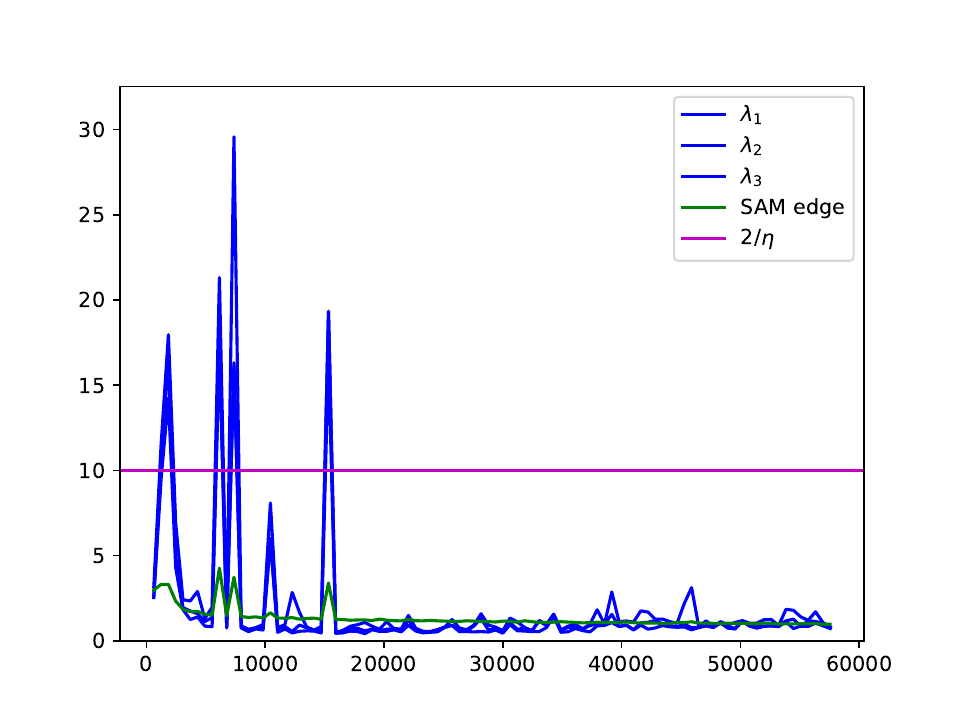}
        \caption{$\eta=0.2$}
    \end{subfigure}
    \begin{subfigure}{0.3\linewidth}
        \includegraphics[width=\linewidth]{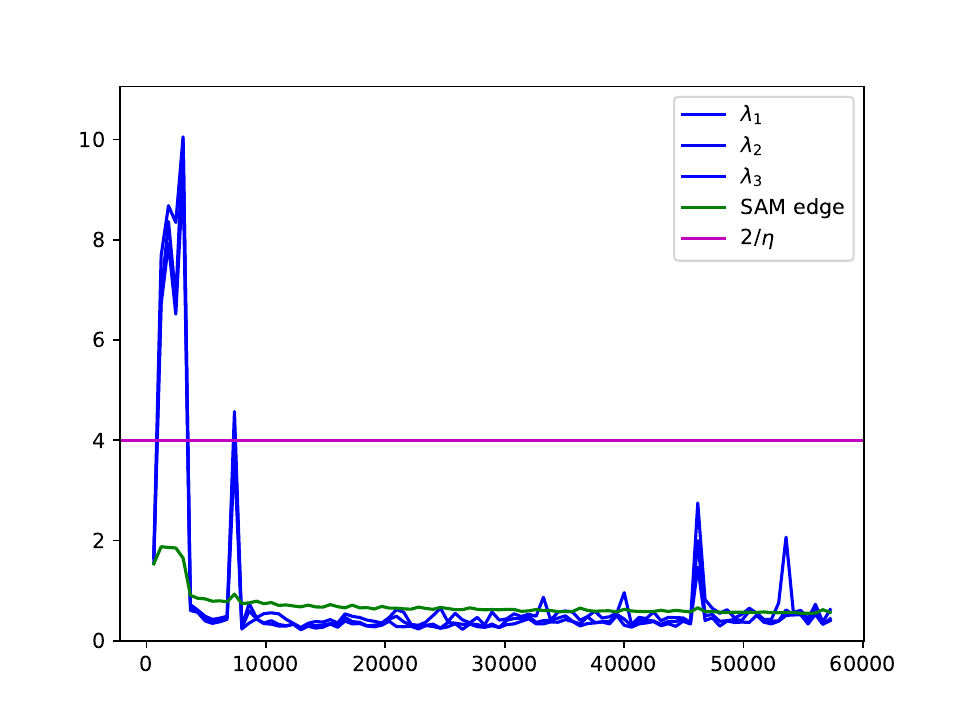}
        \caption{$\eta=0.5$}
    \end{subfigure}
    
    \caption{Magnitudes of the largest eigenvalues of the Hessian when a language model is trained with
    SAM, with $\rho = 0.3$.}
    \label{f:lm.rho=0.3.eigs}
\end{figure}
Here, the operator norm of the Hessian is significantly less
than when SGD is used, and we see evidence that training
in SAM operates at the edge of stability analyzed in Section~\ref{s:derivation}.
In Figure~\ref{f:lm.rho=0.3.eigs.zoom}, we zoom in on the lower part of
the curve, and plot the operator norm of the Hessian, to examine the relationship between this quantity and the SAM edge in more detail.
\begin{figure}
    \centering
    \begin{subfigure}{0.3\linewidth}
        \includegraphics[width=\linewidth]{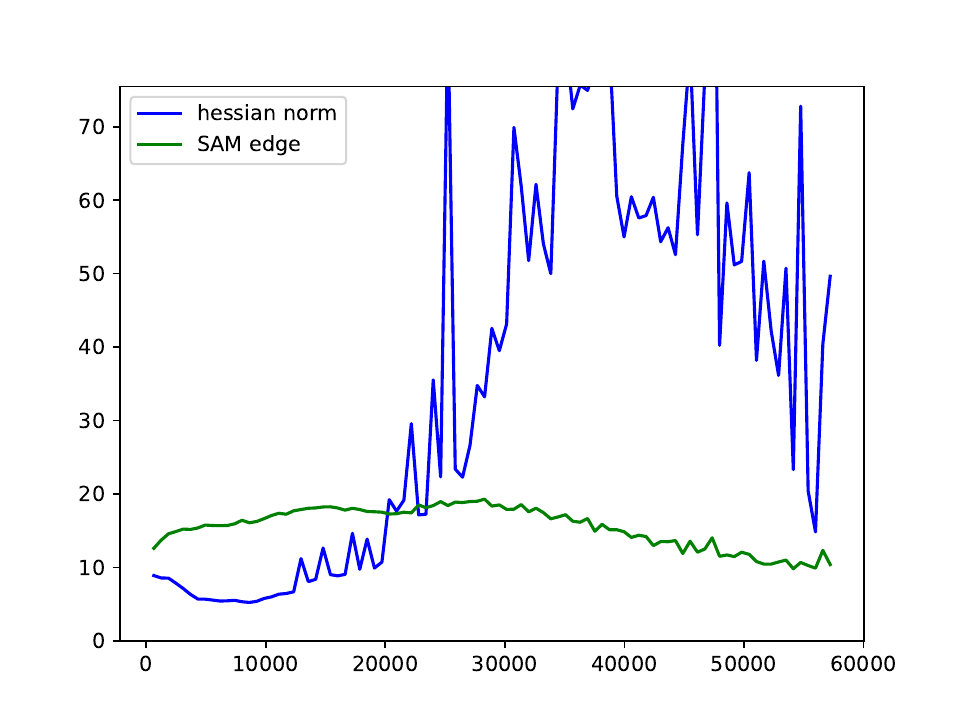}
        \caption{$\eta=0.01$}
    \end{subfigure}
    \begin{subfigure}{0.3\linewidth}
        \includegraphics[width=\linewidth]{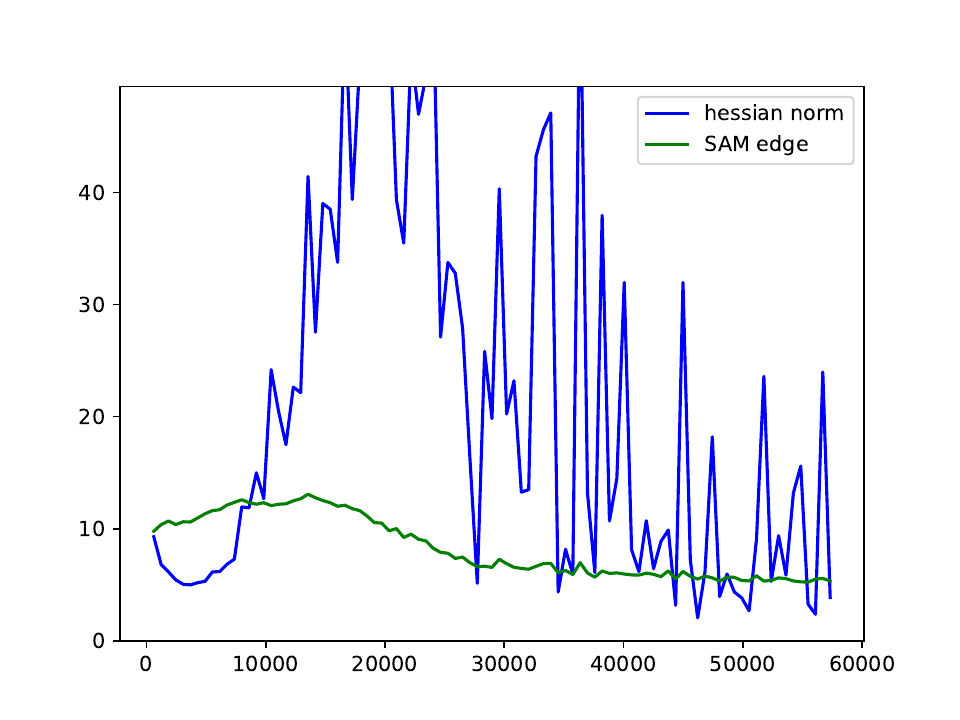}
        \caption{$\eta=0.02$}
    \end{subfigure}
    \begin{subfigure}{0.3\linewidth}
        \includegraphics[width=\linewidth]{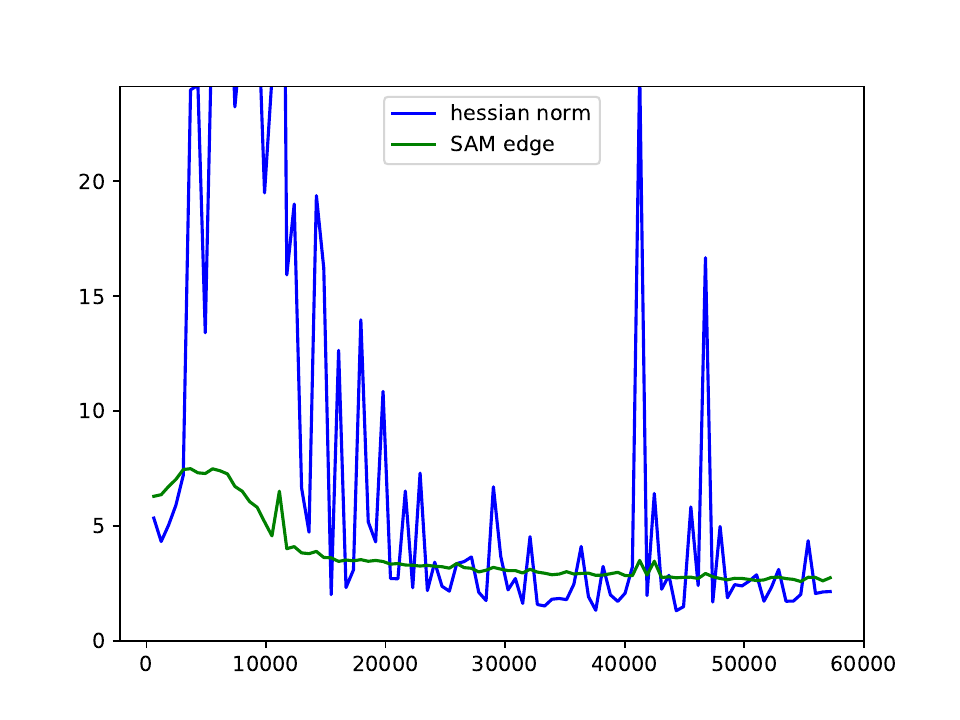}
        \caption{$\eta=0.05$}
    \end{subfigure}

    \begin{subfigure}{0.3\linewidth}
        \includegraphics[width=\linewidth]{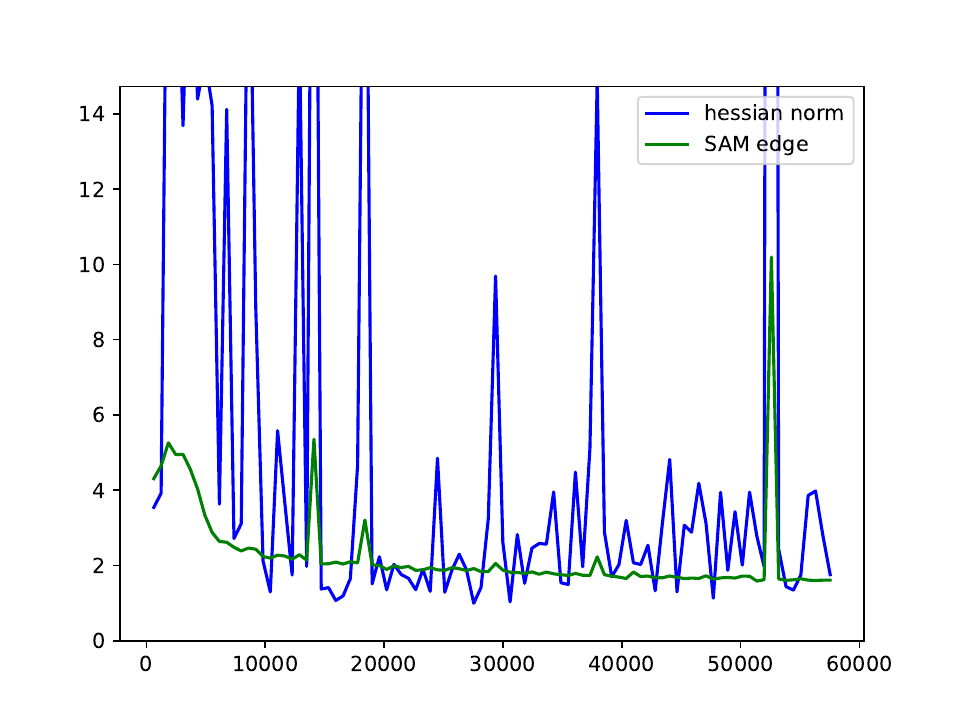}
        \caption{$\eta=0.1$}
    \end{subfigure}
    \begin{subfigure}{0.3\linewidth}
        \includegraphics[width=\linewidth]{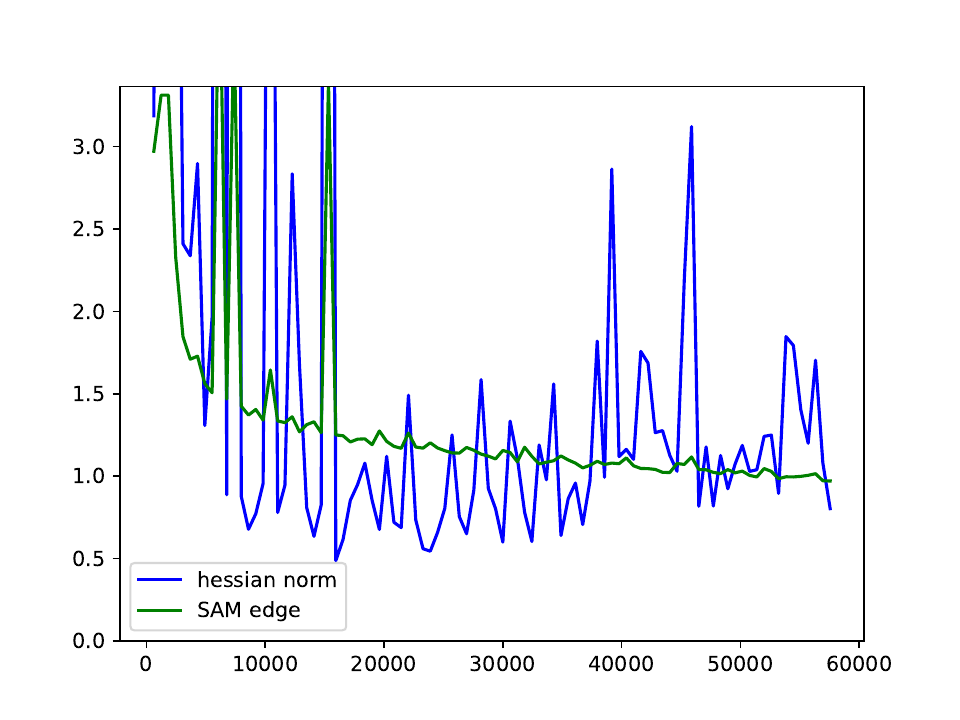}
        \caption{$\eta=0.2$}
    \end{subfigure}
    \begin{subfigure}{0.3\linewidth}
        \includegraphics[width=\linewidth]{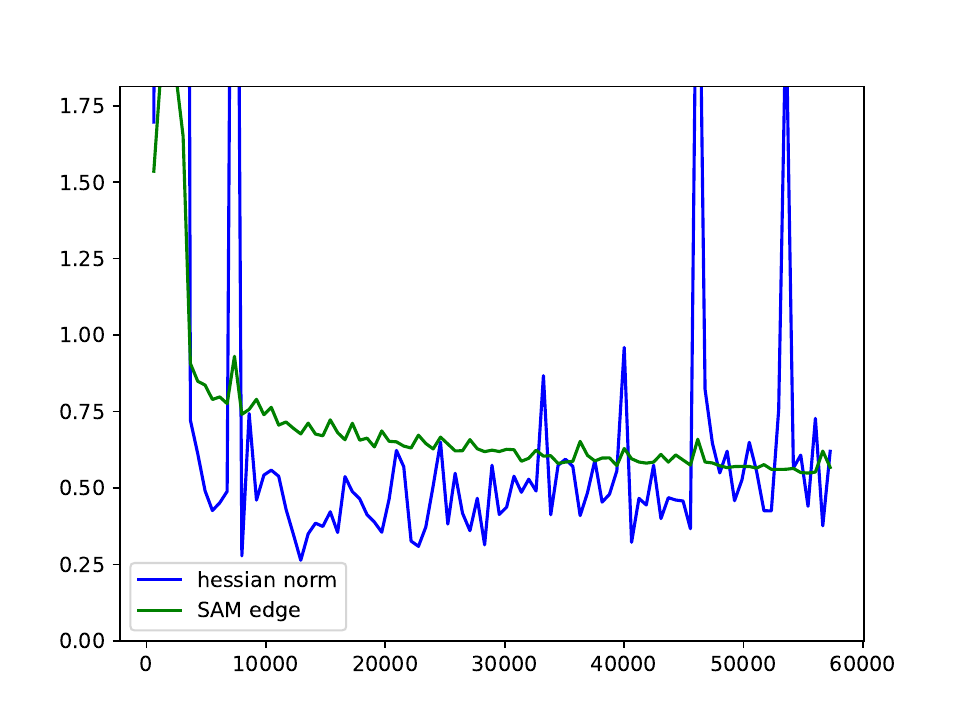}
        \caption{$\eta=0.5$}
    \end{subfigure}
    
    \caption{Magnitudes of the largest eigenvalues of the Hessian when a language model is trained with
    SAM, with $\rho = 0.3$.}
    \label{f:lm.rho=0.3.eigs.zoom}
\end{figure}

Figure~\ref{f:lm.loss} contains plots of
the training loss, once again estimated per-minibatch.
\begin{figure}
    \centering
        \begin{subfigure}{0.3\linewidth}
        \includegraphics[width=\linewidth]{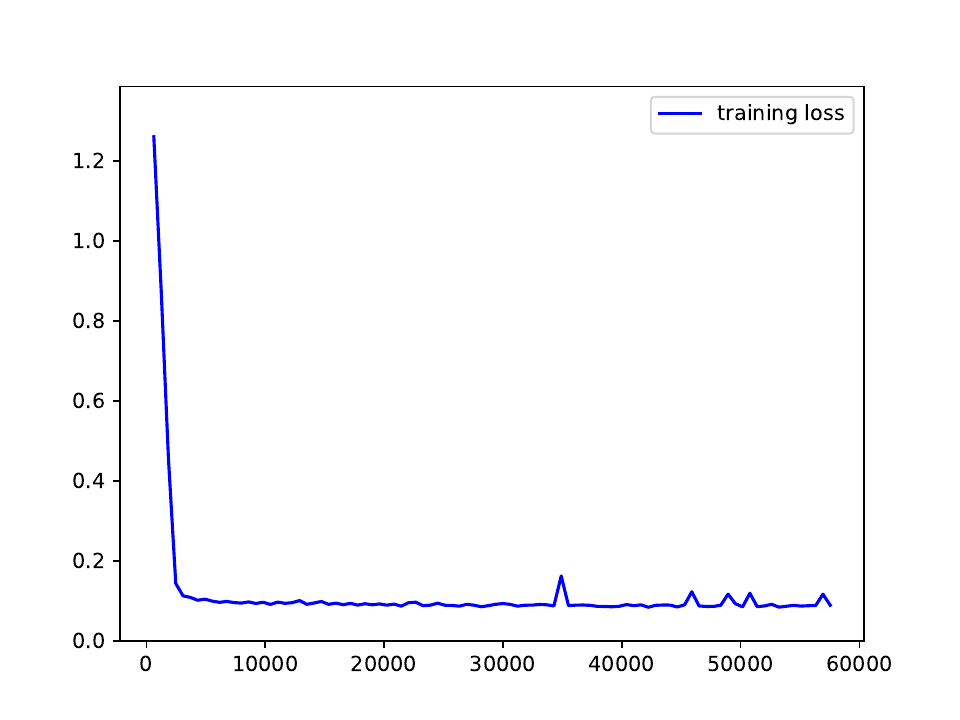}
        \caption{$\rho = 0, \eta=0.01$}
    \end{subfigure}
    \begin{subfigure}{0.3\linewidth}
        \includegraphics[width=\linewidth]{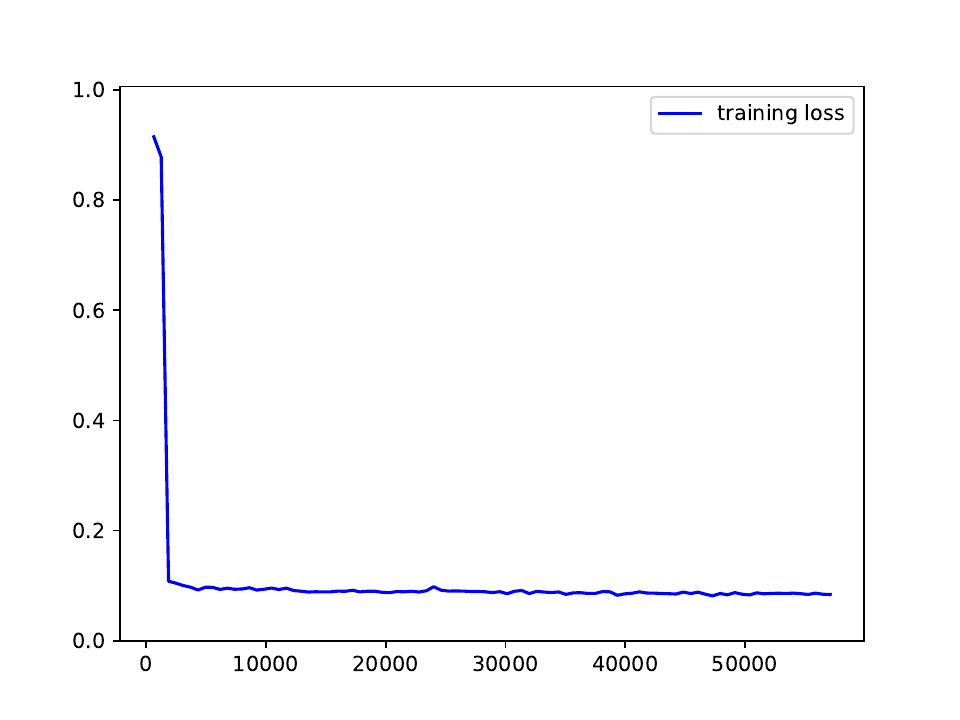}
        \caption{$\rho = 0, \eta=0.02$}
    \end{subfigure}
    \begin{subfigure}{0.3\linewidth}
        \includegraphics[width=\linewidth]{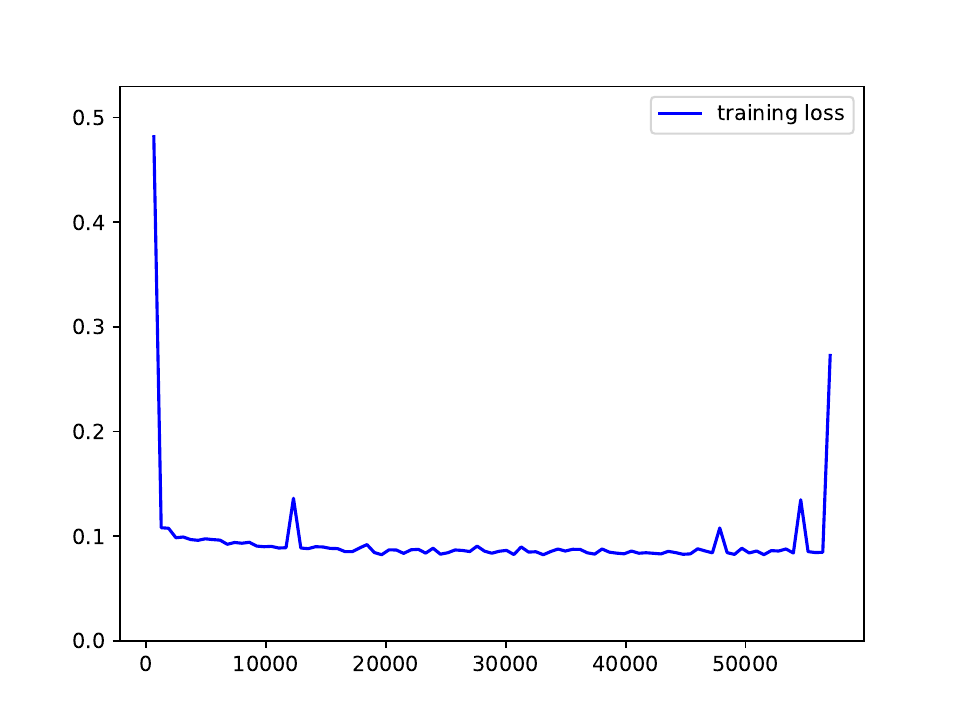}
        \caption{$\rho = 0, \eta=0.05$}
    \end{subfigure}
    \begin{subfigure}{0.3\linewidth}
        \includegraphics[width=\linewidth]{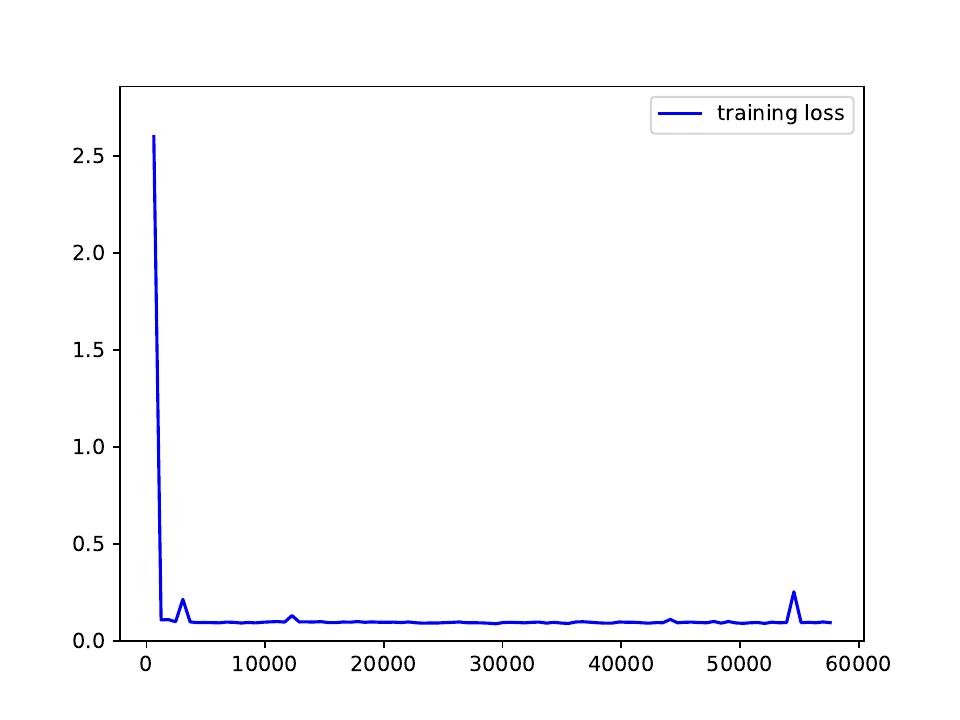}
        \caption{$\rho = 0, \eta=0.1$}
    \end{subfigure}
    \begin{subfigure}{0.3\linewidth}
        \includegraphics[width=\linewidth]{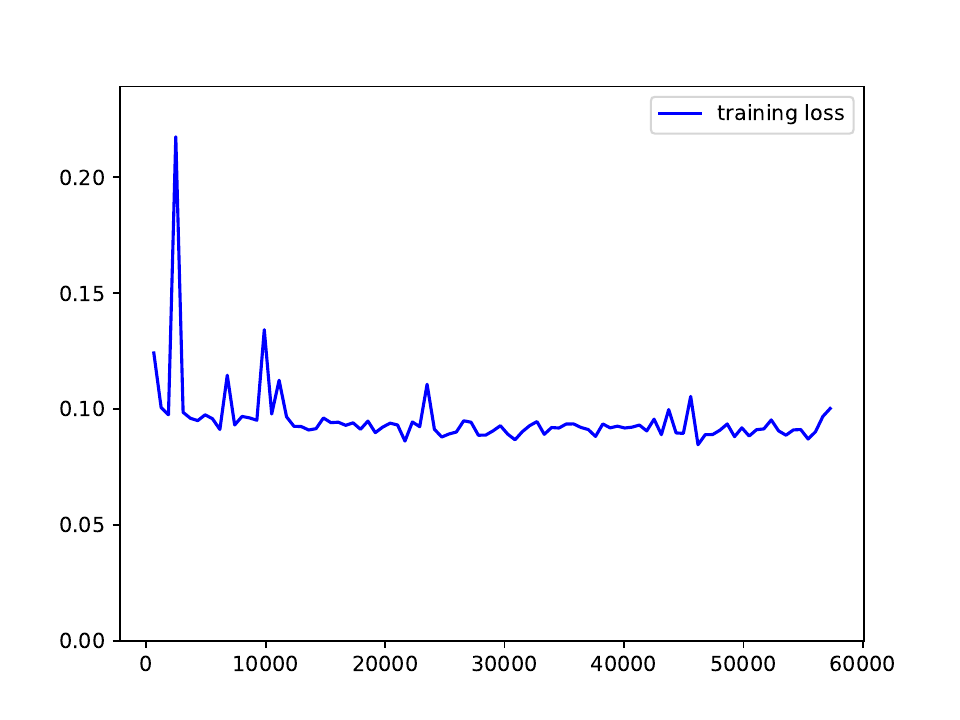}
        \caption{$\rho = 0, \eta=0.2$}
    \end{subfigure}
    \begin{subfigure}{0.3\linewidth}
        \includegraphics[width=\linewidth]{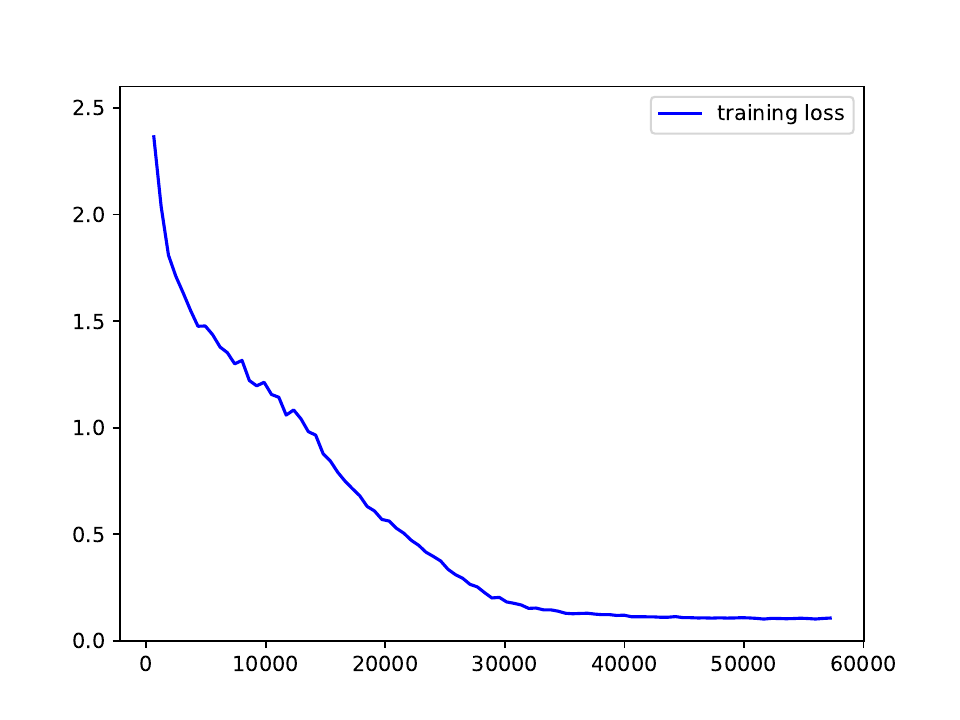}
        \caption{$\rho = 0.3, \eta=0.01$}
    \end{subfigure}
    \begin{subfigure}{0.3\linewidth}
        \includegraphics[width=\linewidth]{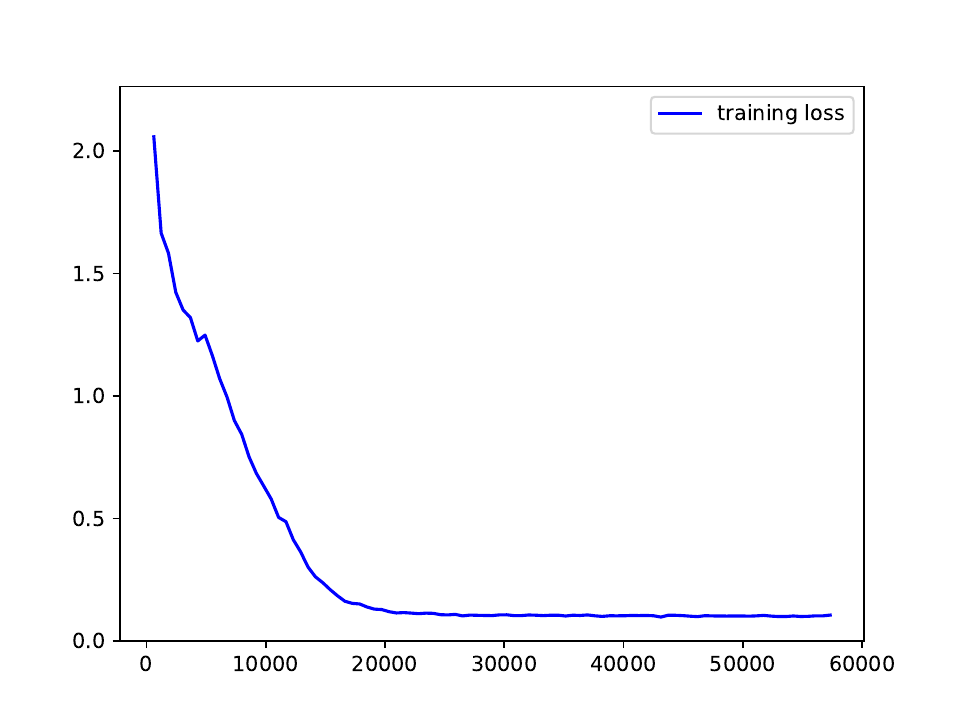}
        \caption{$\rho = 0.3, \eta=0.02$}
    \end{subfigure}
    \begin{subfigure}{0.3\linewidth}
        \includegraphics[width=\linewidth]{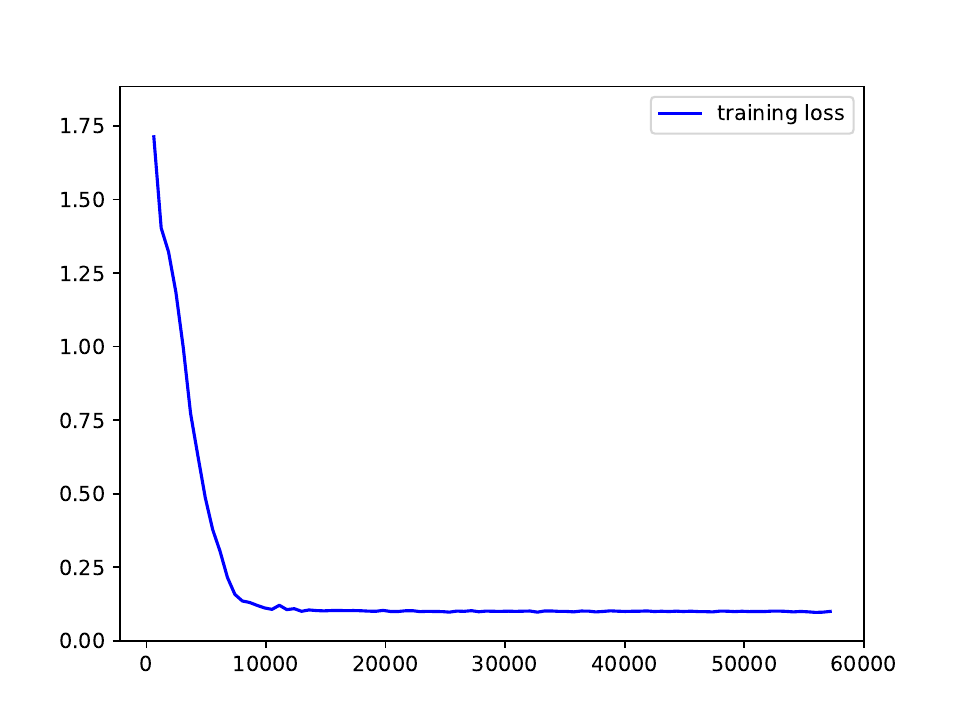}
        \caption{$\rho = 0.3, \eta=0.05$}
    \end{subfigure}
    \begin{subfigure}{0.3\linewidth}
        \includegraphics[width=\linewidth]{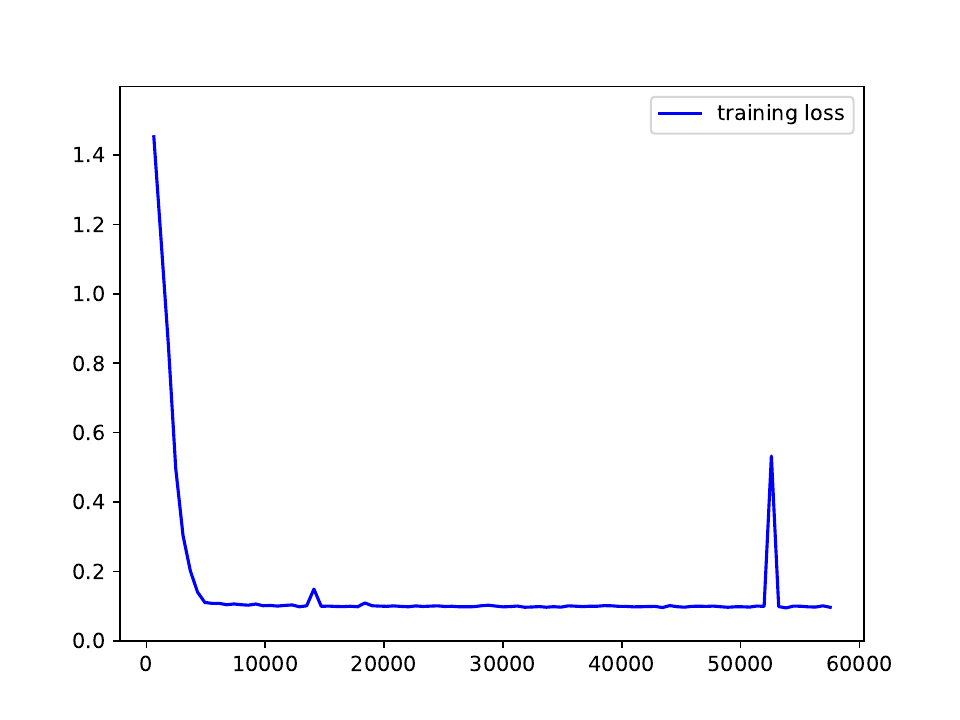}
        \caption{$\rho = 0.3, \eta=0.1$}
    \end{subfigure}
    \begin{subfigure}{0.3\linewidth}
        \includegraphics[width=\linewidth]{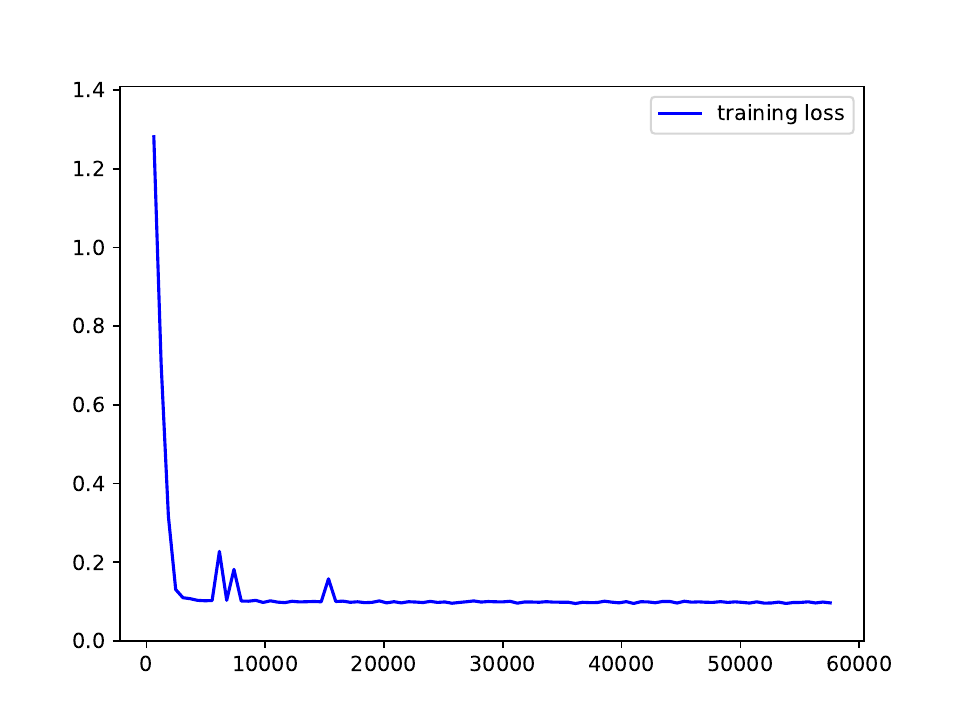}
        \caption{$\rho = 0.3, \eta=0.2$}
    \end{subfigure}
    \begin{subfigure}{0.3\linewidth}
        \includegraphics[width=\linewidth]{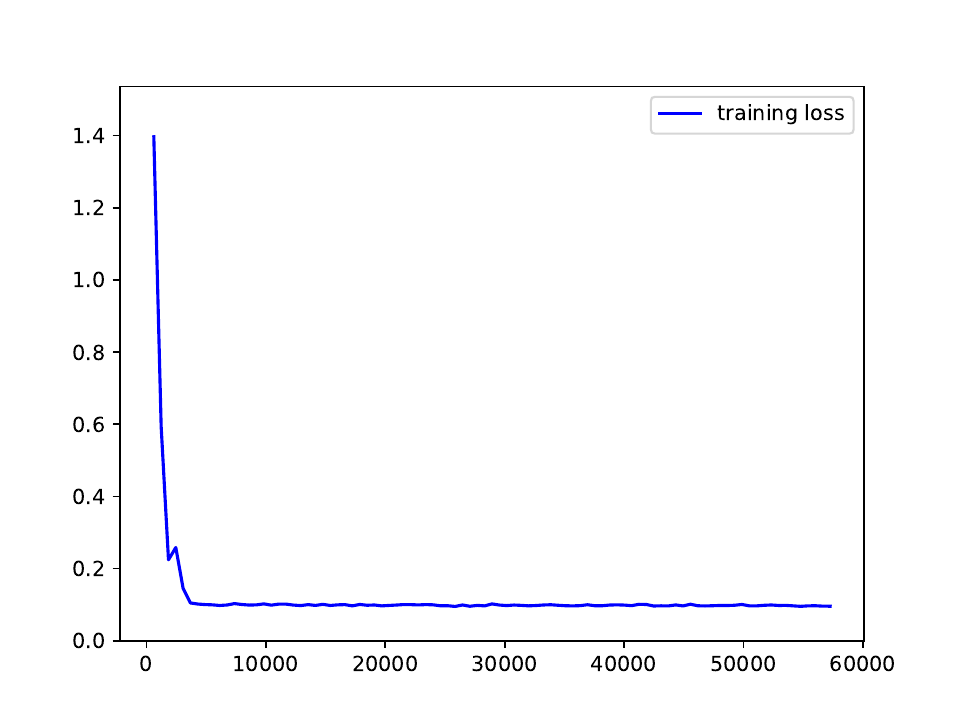}
        \caption{$\rho = 0.3, \eta=0.5$}
    \end{subfigure}
    \caption{Training loss in the language modeling experiments.}
    \label{f:lm.loss}
\end{figure}
We included these mainly to motivate the combinations
of hyperparameters where we examined other aspects of
the dynamics of SAM.  As expected,
while SAM does take longer to achieve
a certain loss, it ultimately achieves training
error similar to SGD, but with less sharpness.

Figure~\ref{f:lm.alignment} contains plots of
the alignment, once again estimated per-minibatch.
\begin{figure}
    \centering
    \begin{subfigure}{0.3\linewidth}
        \includegraphics[width=\linewidth]{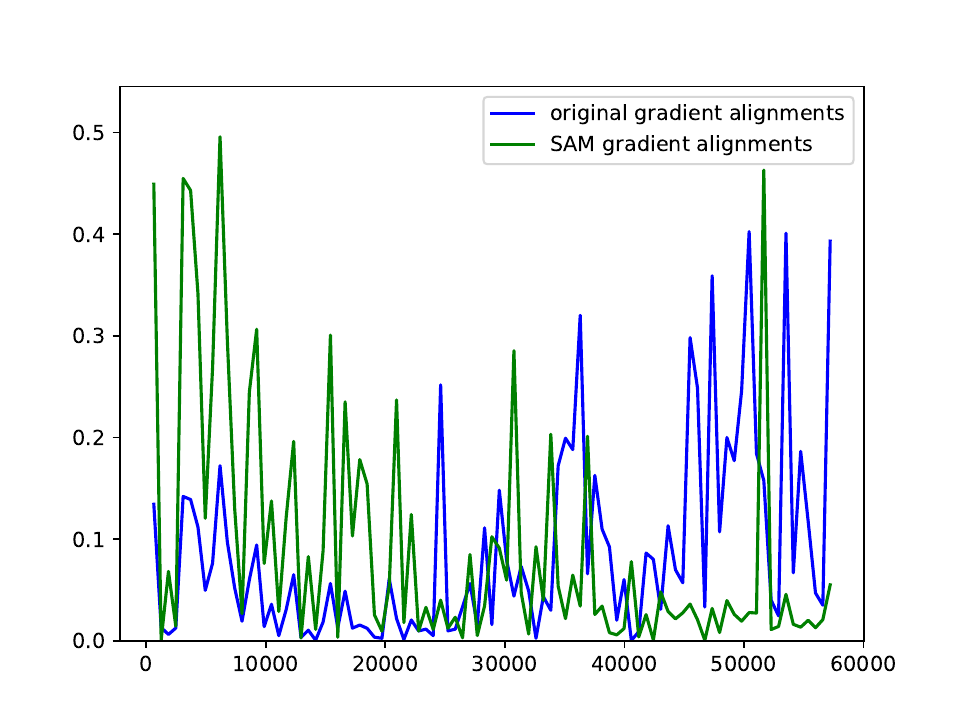}
        \caption{$\rho = 0.3, \eta=0.01$}
    \end{subfigure}
    \begin{subfigure}{0.3\linewidth}
        \includegraphics[width=\linewidth]{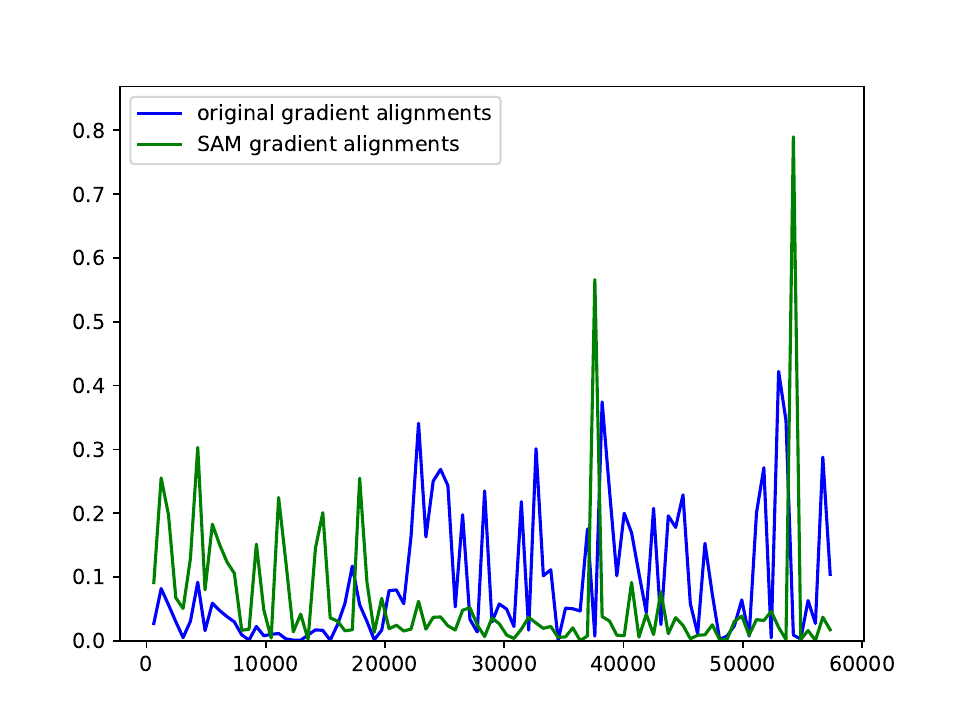}
        \caption{$\rho = 0.3, \eta=0.02$}
    \end{subfigure}
    \begin{subfigure}{0.3\linewidth}
        \includegraphics[width=\linewidth]{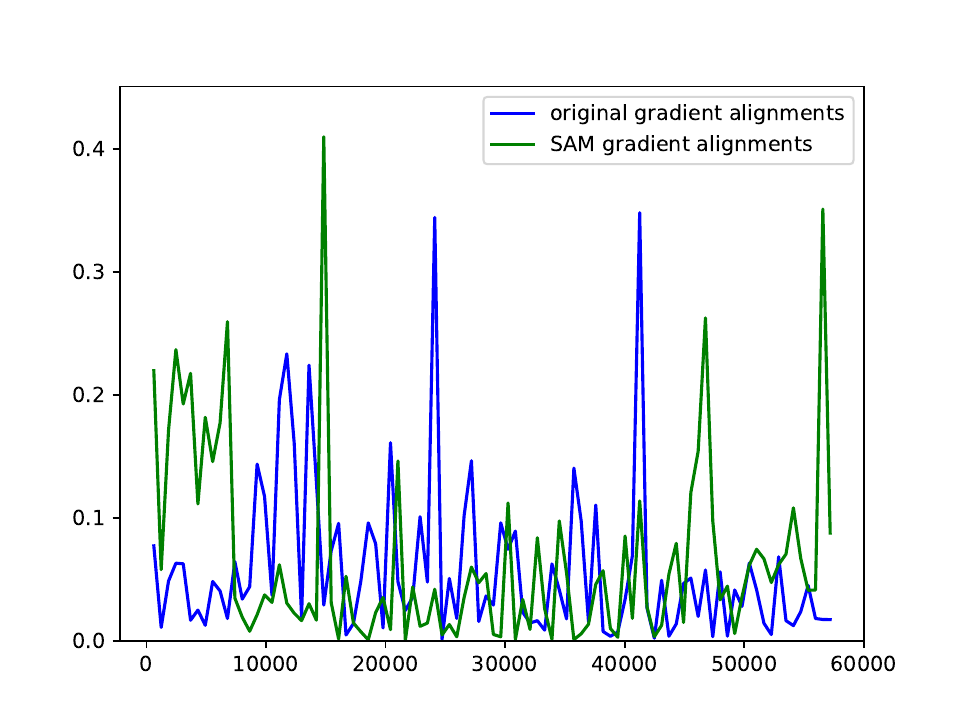}
        \caption{$\rho = 0.3, \eta=0.05$}
    \end{subfigure}
    \begin{subfigure}{0.3\linewidth}
        \includegraphics[width=\linewidth]{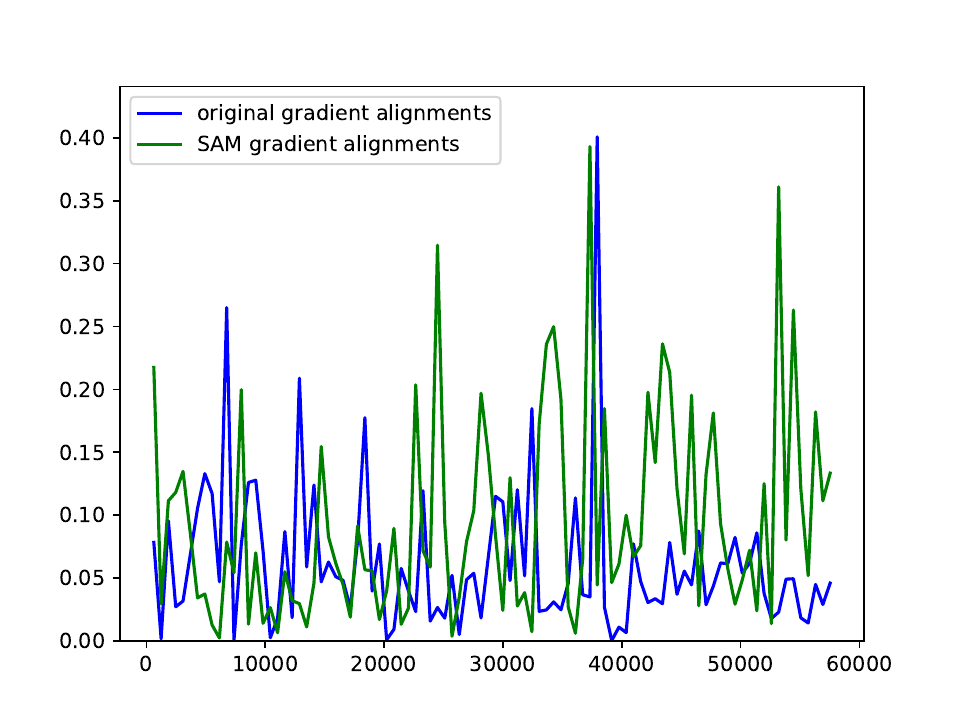}
        \caption{$\rho = 0.3, \eta=0.1$}
    \end{subfigure}
    \begin{subfigure}{0.3\linewidth}
        \includegraphics[width=\linewidth]{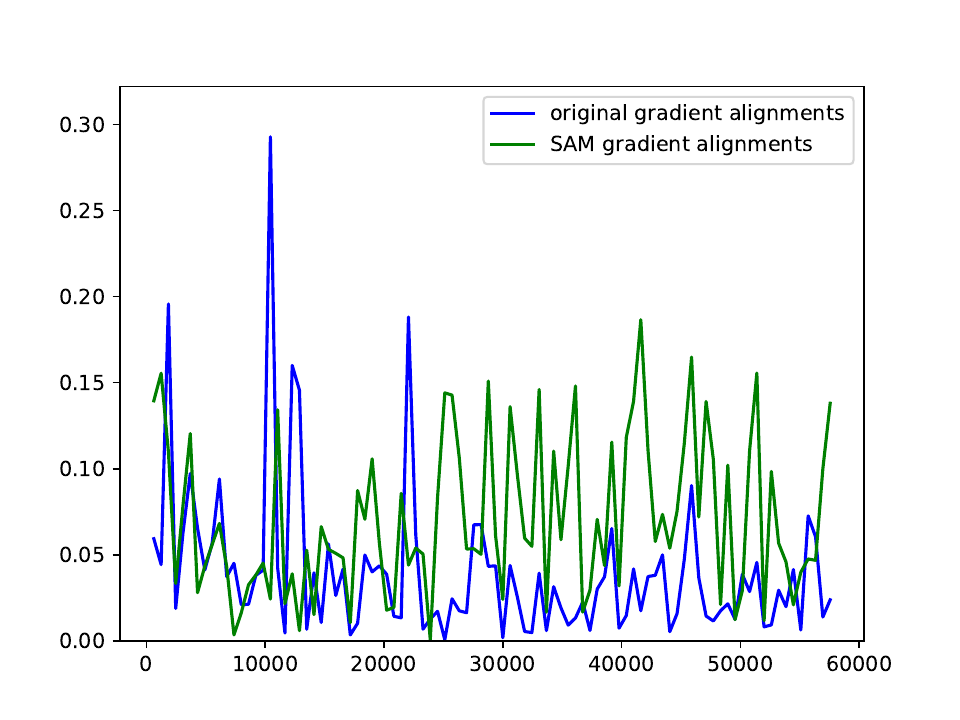}
        \caption{$\rho = 0.3, \eta=0.2$}
    \end{subfigure}
    \begin{subfigure}{0.3\linewidth}
        \includegraphics[width=\linewidth]{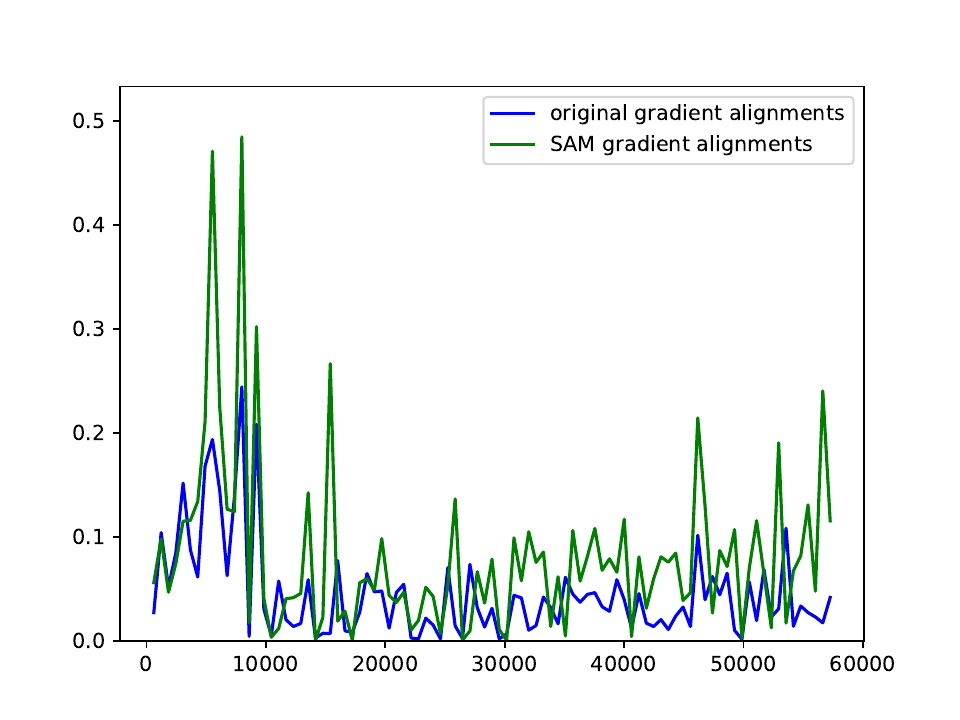}
        \caption{$\rho = 0.3, \eta=0.5$}
    \end{subfigure}
    \caption{Alignments between gradients and the principal direction of the Hessian in the language modeling experiments.}
    \label{f:lm.alignment}
\end{figure}
For the large learning rates, late in training, despite the sampling
noise arising from the use of minibatches, we see a systematic
tendency for the SAM gradients to align more closely with the
principal eigenvector of the Hessian than the gradients at the initial
solution.  However, for the smallest learning rates, the {\em opposite}
holds.  

\section{Related work}
\label{s:related}

In this section, we describe some previously mentioned related work in more detail,
and also go over some additional papers.

\citet{bartlett2022dynamics} analyzed the dynamics of SAM applied to
a convex quadratic objective, and showed that it converges to oscillating
in the direction of the principal eigenvector.  Then they analyzed one
step of SAM in more generality, starting at a solution near a local
minimum, analogous to one of the steady-state solutions in the convex
quadratic case.  They showed that the update from this point can
be decomposed into three terms, a term that corresponds to the update in
the convex quadratic case (which moves to the other solution in the oscillation),
a term in the descent direction of the operator norm of the Hessian, and a
third term, which, for small $\eta$ and $\rho$, is of lower order.  
The edge-of-stability point identified here is not a consequence of that analysis.

Among the varied results of \citet{wen2023how} is a theorem that may be
paraphrased by saying that, for a smooth enough objective functions, 
in an overparameterized regime where there is
a manifold of minimizers, once SAM's iterates are close to this manifold, its updates
track the updates that would be obtained by performing gradient flow to minimize
the operator norm of the Hessian among minimizers of the loss.  
Their main results use the assumptions that
$\eta \log(1/\rho)$ and $\rho/\eta$ are sufficiently small.
As was seen by \citet{cohen2021gradient} and also here, 
the edge-of-stability phenomenon dissipates as $\eta$ gets small.

\citet{andriushchenko2023sharpness} demonstrated empirically that networks
trained by SAM tend to have features with lower rank, and illustrated how this
can arise using a theoretical analysis of a two-layer network.

\citet{cohen2022adaptive} demonstrated that some adaptive gradient methods, such
as Adam, operate at the edge of stability.

A number of authors have provided insight by analyzing the dynamics of gradient descent under clean and simple conditions under which the edge of stability
arises 
\citep[see][]{zhu2022understanding,DBLP:conf/icml/AgarwalaPP23,ahn2022learning,chen2022gradient,even2023s}.
Properties of the loss landscape that are compatible with
edge of stability training have also been described (and evaluated
empirically)   
\citep{ma2022beyond,ahn2022understanding}.
%
\citet{arora2022understanding} established conditions under which an algorithm
like GD, but that normalizes the gradients so that they have
unit length, operates at the edge of stability, and also analyzed
an algorithm that takes gradients with respect to the square root
of the loss.

Some authors have studied an algorithm like SAM, but, instead of updating
using the gradient from the neighbor of the current iterate that is a
constant distance $\rho$ uphill, instead uses a
gradient from
neighbor whose distance from the current iterate scales with
the norm of the gradient at the iterate \citep{andriushchenko2022towards,agarwala2023sam},
what has been called ``unnormalized SAM''. 
\citet{dai2023crucial} made a case that the SAM's normalization is
crucial, motivating research into the original algorithm.

\section{Conclusion}
\label{s:conclusion}

We have computed the critical value of operator norm of the Hessian corresponding to the edge of stability for SAM.   This SAM-edge is a decreasing function
of the norm of the gradient, so it tends to decrease as training 
progresses.
For three deep learning training tasks, we have seen that the
operator norm of the Hessian closely tracks this edge
of stability, despite the noise introduced by estimating using
minibatches in the \verb|tiny_shakespeare| task.

SAM interacts strongly with the edge-of-stability phenomenon to drive down the
operator norm of the Hessian, while also driving down the training error.  
Insight into how and why this happens could be promoted by
identifying conditions under which SAM provably operates at its
edge of stability, analogous to the results obtained for
GD mentioned in Section~\ref{s:related}.
The analyses of
\citet{bartlett2022dynamics} and \citet{wen2023how} both required
$\eta$ and $\rho$ to be small, and analyzed the effect of the dynamics
on the operator norm of the Hessian late in training, whereas we empirically see
a strong effect even early in training.  One especially interesting question is
how the training error is reduced so rapidly despite the overshooting associated with
edge-of-stability training.

The experiments with language models showed that the edge-of-stability phenomenon
can also be seen, to a limited extent, when training with SGD.  A more thorough
understanding of SAM and the edge of stability when training with SGD is another
interesting and important subject for further research.  
(\citet{wen2023how} analyzed a variant of SAM that works using SGD one example
at a time, and pointed out strong qualitative differences between the algorithm
that works with batch gradients and this extreme version of SGD, suggesting that
interesting and rich structure might be found in the behavior of SAM with minibatches
of intermediate size.)

In our experiments, there was a general tendency
for the gradients used by SAM to be
more aligned with the principal direction
of the Hessian than gradients evaluated
at the iterates.  It is not clear why
this is the case, and under what conditions
it happens.  The theoretical analysis by \citet{bartlett2022dynamics}
depended critically on the assumption that the update gradient was aligned
with the principal eigenvector of the Hessian, which raises the possibility that
the fact that the gradients used by SAM are aligned more closely with the
principal direction of the Hessian is key to its success.  However, it is not
clear under what conditions, and why, this improved alignment is seen, and when
it is helpful.
There also was an intriguing exception
when language models were trained with SGD
using small step sizes that it would be interesting
to further explore.



\section*{Acknowledgements}

We thank Naman Agarwal and Hossein Mobahi for valuable conversations, and Naman Agarwal for his comments on an earlier version of this paper.
PB gratefully acknowledges the support of the NSF through grants
DMS-2023505 and DMS-2031883 and of Simons Foundation award \#814639.

\appendix


\vskip 0.2in
\bibliographystyle{plainnat}
\bibliography{bib}

\end{document}